\documentclass[onecolumn,11pt]{article}
\usepackage[top=1in,bottom=1in,left=1in,right=1in]{geometry}
\usepackage{palatino}
\usepackage{microtype}
\usepackage{amsmath,amsfonts,amscd,amssymb}
\usepackage{graphicx}
\usepackage{import}
\newcommand{\smorebasedir}{}
\IfFileExists{arxiv/SMORE_arxiv.tex}{\renewcommand{\smorebasedir}{arxiv/}}{}
\usepackage{xcolor}
\usepackage{url}
\usepackage{setspace}
\usepackage[numbers,sort&compress]{natbib}
\usepackage[bottom,flushmargin,hang,multiple]{footmisc}
\usepackage{placeins}
\usepackage{multirow}
\usepackage{tabularx}
\usepackage{booktabs}
\usepackage{caption}
\usepackage{subcaption}
\usepackage{rotating}
\usepackage{bm}
\usepackage{makecell}
\usepackage{amsthm}
\usepackage{algorithm}
\usepackage{algpseudocodex}
\usepackage{mathtools}
\usepackage{float}
\usepackage{flafter}
\usepackage{enumitem}
\usepackage[hidelinks]{hyperref}
\usepackage{cleveref}

\makeatletter
\newenvironment{breakablealgorithm}
  {%
   \par\noindent
   \refstepcounter{algorithm}%
   \hrule height .8pt depth 0pt
   \kern 2pt
   \renewcommand{\caption}[2][\relax]{%
     {\raggedright\textbf{\fname@algorithm~\thealgorithm.} ##2\par}%
     \ifx\relax##1\relax
       \addcontentsline{loa}{algorithm}{\protect\numberline{\thealgorithm}##2}%
     \else
       \addcontentsline{loa}{algorithm}{\protect\numberline{\thealgorithm}##1}%
     \fi
     \kern 2pt\hrule\kern 2pt
   }%
  }
  {%
   \kern 2pt\hrule
   \par
  }
\makeatother

\newtheorem{theorem}{Theorem}
\newtheorem{lemma}[theorem]{Lemma}
\newtheorem*{remark}{Remark}
\crefname{theorem}{theorem}{theorems}
\Crefname{theorem}{Theorem}{Theorems}
\crefname{lemma}{lemma}{lemmas}
\Crefname{lemma}{Lemma}{Lemmas}

\newcommand*\iftodonotes{\@ifundefined{@todonotes@disabled}{\expandafter\@secondoftwo}{\if@todonotes@disabled\expandafter\@secondoftwo\else\expandafter\@firstoftwo\fi}}

\DeclareMathOperator*{\argmin}{\arg\!\min}

\algnewcommand\Inputs{\item[\textbf{Inputs:}]}
\algnewcommand\Initialize{\item[\textbf{Initialize:}]}

\crefname{figure}{Fig.}{Figs.}
\Crefname{figure}{Fig.}{Figs.}
\crefname{equation}{Eq.}{Eqs.}
\Crefname{equation}{Eq.}{Eqs.}

\title{\vspace{-.55in}{\fontsize{16}{18}\selectfont\textbf{SMORE: Stability-Promoting Mesh-Agnostic Model Reduction for Time-Dependent PDEs}}\vspace{-.15in}}
\author{\normalsize{Yangyuan Li, Weichao Li, Shaowu Pan\thanks{Corresponding author. Email: \href{mailto:pans2@rpi.edu}{pans2@rpi.edu}.}}\\
\footnotesize{Department of Mechanical, Aerospace and Nuclear Engineering,}\\
\footnotesize{Rensselaer Polytechnic Institute}\\
\footnotesize{110 8th Street, Troy, NY 12180, USA\vspace{-.2in}}}
\date{}

\begin{document}
\maketitle
\vspace{-.2in}
\begin{abstract}
High-fidelity simulations of time-dependent partial differential equations (PDE) are computationally expensive, motivating data-driven reduced-order surrogates for many-query tasks such as uncertainty quantification, design optimization, data assimilation, and optimal control. However, existing surrogate models often exhibit poor temporal stability, which can lead to unstable rollouts and exploding gradients during backpropagation, especially in multistep long-horizon forecasting. To address this, we propose SMORE, a mesh-agnostic framework for model order reduction of time-dependent PDEs. Its latent dynamics are trained with Lyapunov-guided stability regularization, which promotes stable long-horizon rollouts. We provide theoretical guarantees under the stated structural assumptions. Beyond forecasting PDE evolution, the learned latent dynamics, which are interpretable and linear or linear-quadratic, could bring benefits for downstream tasks such as data assimilation and optimal control. Moreover, our framework is capable of predicting continuous PDE solution fields from sparse measurements of the initial condition. We evaluate SMORE on a range of problems, including wave propagation, the Navier–Stokes equations, and the shallow water equations. Our results shows it improves long-horizon rollout generalization and empirical robustness, and achieves competitive accuracy at comparable parameter budgets, relative to competitive baselines including DINo, FNO, CNO, and Transolver.
\par\medskip
\noindent\emph{Keywords--} reduced-order modeling, stability regularization, partial differential equation
\end{abstract}


\section{Introduction}
\label{sec:intro}                 
Many practical engineering problems involve high-fidelity numerical simulation of partial differential equations, so-called ``inner loop'' computations. Such simulations typically require memory-intensive spatial discretizations and small time steps to satisfy stability and accuracy requirements. Hence, it is desirable to develop data-driven surrogate models that exploit
existing simulation data to improve the computational efficiency of forward
simulations by orders of magnitude. Such surrogates are particularly valuable
for many-query tasks such as uncertainty quantification
(UQ)~\cite{abdar2021review, beran2017uncertainty}, design
optimization~\cite{du2021rapid}, data assimilation~\cite{zerfas2019continuous,
misaka2020image, casas2020reduced}, and optimal
control~\cite{troltzsch2024optimal}. A typical data-driven surrogate first reduces the dimensionality of the high-fidelity  state~\cite{asadi2020encoder} and then models the evolution of the resulting reduced states~\cite{li2022transformer, berman2024colora, chen2018neural, serrano2024operator, cho2024hypernetwork}. 
The choice of dimensionality reduction therefore determines which spatial discretizations the surrogate can handle. In practice, however, most existing approaches are tied to a specific discretization. Convolutional neural network (CNN)-based methods~\cite{khoo2021solving, qu2022learning}
require uniform Cartesian grids, while proper orthogonal decomposition (POD)~\cite{sidhu2018model}
requires the spatial discretization to remain fixed across snapshots. As a result, such
surrogates cannot generalize to a different spatial discretization of the domain.
To address this limitation, mesh-agnostic methods such as deep operator network (DeepONet) \cite{lu2019deeponet} and its extension~\cite{lu2022comprehensive} have been recently developed. Moreover, neural operators~\cite{li2020fourier, raonic2023convolutional, serrano2024operator} provide an alternative perspective by learning the forcing-to-solution operator via carefully designed neural networks. 
Fourier Neural Operator (FNO) \cite{li2020fourier} utilizes Green's function and convolution theorem to parameterize the integral kernel. For time-dependent PDEs, FNO can learn the mapping between an initial condition and the solution at a fixed later time. The Convolutional Neural Operator (CNO)~\cite{raonic2023convolutional} instead adopts a convolutional architecture that operates in physical space. It is designed to preserve the correspondence between continuous functions and their discrete representations, thereby mitigating aliasing errors. However, CNO and FNO rely on CNNs and/or fast Fourier transforms, which limits their applicability mainly to PDE data on uniformly structured meshes.
In practice, many engineering problems are discretized on non-uniform unstructured meshes. A straightforward way to extend data-driven surrogate models to such meshes is to use graph neural networks. For example, MeshGraphNet~\cite{pfaff2020learning} treats the PDE solution at each cell as node features on a graph and uses message passing within an encoder-decoder framework to learn the evolution of a PDE over time. 
Beyond graph-based frameworks~\cite{belbute2020combining}, several works have extended neural operators to unstructured meshes~\cite{li2023fourier, liu2023nuno}. Li et al.~\cite{li2023fourier} transformed the irregular physical space into uniformly structured computational space in order to perform FFT in FNO. Similarly, Liu et al.~\cite{liu2023nuno} developed a non-uniform neural operator to map nonuniform inputs to a uniform grid. However, these methods could face scalability issues for 3D problems due to the use of convolutional layers and challenges in handling moving meshes~\cite{serrano2024operator}. More recently, attention-based PDE solvers have been developed to overcome the limitations of convolutional or Fourier-based backbones. For
example, Transolver~\cite{wu2024Transolver} introduces a Physics-Attention
mechanism that groups discretized mesh points into learnable physical-state
slices and performs attention over the resulting physics-aware tokens, enabling
efficient modeling on general geometries. Nevertheless, such models still
operate as high-capacity neural PDE solvers in the physical discretization
space, rather than learning a reduced latent dynamical system with explicit stability-oriented structure for
long-horizon prediction.

To address the above challenges, the idea of implicit neural representations~\cite{lu2022comprehensive, serrano2024operator, chen2022crom} can be used to reduce dimensionality~\cite{pan2023neural}. Implicit neural representations were initially developed for image and video processing~\cite{sitzmann2020implicit,fathony2020multiplicative} and geometric shape representation~\cite{chen2019learning,mescheder2019occupancy}, where most of these problems only need spatial representation. 
To adapt them to time-dependent PDEs, one approach is to parameterize the weights of the neural network by another function of time $t$~\cite{pan2023neural} or model the latent state using a nonlinear autonomous system. To our knowledge, the closest related work is DINo, proposed by Yin et
al.~\cite{Yin2022Continuous}. It uses MFN and shift modulation to decode latent
representations at queried spatial coordinates. The latent codes are inferred
through autodecoding and then evolved by a standard NeuralODE~\cite{chen2018neural}. A follow-up work is the coordinate-based model for operator learning (CORAL) \cite{serrano2024operator} using SIREN and shift and scale modulation for autodecoding. Very recently, CoNFiLD \cite{du2024confild} combines SIREN and modulation, but employs the diffusion model \cite{yang2024survey} for the dynamics of latent space.  
Despite these successes, the frameworks above lack explicit stability guarantees for the reduced latent dynamics, which are crucial for long-term autoregressive prediction~\cite{Yin2022Continuous, li2020fourier, serrano2024operator, du2024confild}.  
To address this issue, we propose SMORE, a stability-promoting and mesh-agnostic model reduction framework for time-dependent PDEs. SMORE combines an Implicit neural representation (INR) autodecoder with either a Koopman-inspired linear latent-dynamics model, referred to as \emph{SMORE-Koopman}, or a Low-Rank Linear-Quadratic (LRLQ) latent-dynamics model, referred to as \emph{SMORE-LRLQ}. 
Koopman operator theory~\cite{mezic2005spectral} provides a formalism to search for intrinsic coordinates that can globally linearize the nonlinear dynamics~\cite{brunton2021modern}.  For systems that do not admit a reduced set of linear dynamics, we utilize linear-quadratic dynamics, since such models naturally arise in the reduced-order formulation of discretized fluid mechanical models~\cite{peherstorfer2016data,qian2020lift}, allowing for a more accurate representation. 
In contrast to mesh-agnostic learning frameworks that use
black-box dynamics, such as NeuralODE~\cite{chen2018neural,Yin2022Continuous}
or diffusion models~\cite{du2024confild}, we equip these structured latent dynamics with Lyapunov-guided stability
regularization. The theoretical derivation establishes sufficient
conditions for latent dynamics stability, transfers latent-state bounds
and errors to decoded fields, and characterizes prediction from
sparse initial measurements under additional observability
assumptions. We further introduce pointwise minibatching (PM), which partitions the spatial samples evaluated by the pointwise decoder into smaller chunks during training. Each chunk contributes to the reconstruction loss using a common normalization over the
full batch, and gradients are accumulated across chunks before parameter updates. This reduces peak training memory while preserving the reconstruction loss and its gradients up to floating-point round-off.
We conducted numerical experiments to assess prediction accuracy and training robustness across different dynamics models, mechanism configurations, and initial-observation densities. We further evaluated the proposed pointwise minibatching mechanism by examining the extent to which it increases the maximum feasible spatial resolution under a fixed GPU-memory budget and by quantifying the resulting trade-offs among memory, computation time, and accuracy at a fixed resolution.

The paper is organized as follows. \Cref{sec:problem_description} describes the setup of the problem. \Cref{sec:related_work} first discusses implicit neural representations for mesh-agnostic reduced-order modeling. It then turns to latent dynamics modeling, with particular emphasis on stability-oriented linear and linear-quadratic approaches. \Cref{sec:method} presents the proposed SMORE framework, including the INR-based latent representation and decoder, the SMORE-Koopman and SMORE-LRLQ latent-dynamics models, Lyapunov-guided stability regularization, theoretical boundedness and sparse-inference results, scheduled sampling, a pointwise decoder minibatching mechanism for memory-efficient training, and the overall training and evaluation workflow. \Cref{sec:numerical_experiment_setup} presents the numerical experiment setup, long-horizon rollout comparisons with baseline models, stability ablation study, sparse-observation inference, and spatial-resolution and memory-efficiency studies. Finally, conclusions are given in \Cref{sec:Conclusions}.


\section{Problem Setup}
\label{sec:problem_description}

This section formulates the learning problem for data-driven reduced-order modeling of time-dependent PDEs. We consider a time-dependent partial differential equation (PDE)
with initial and boundary conditions,
\begin{equation}
\label{eq:pde_problem}
\begin{array}{@{}l@{\;}c@{\;}l@{\qquad}l@{}}
    \mathcal{F}(u)(t,\mathbf{x}) & = & 0,
    & (t,\mathbf{x}) \in \mathcal{T}\times\Omega, \\[2pt]
    \mathcal{B}(u)(t,\mathbf{x}) & = & 0,
    & (t,\mathbf{x}) \in \mathcal{T}\times\partial\Omega, \\[2pt]
    u(0,\mathbf{x}) & = & u_0(\mathbf{x}), \mathbf{x} \in \Omega.
    &
\end{array}
\end{equation}
Here, $\Omega\subseteq\mathbb{R}^d$ is the spatial domain,
$\mathcal{T}\coloneqq[0,\infty)$ is the time domain, $u\colon\mathcal{T}\times\Omega\to\mathbb{R}^c$ is the solution field
with $c$ components, and $u_0\in L^2(\Omega;\mathbb{R}^c)$ is the
initial condition. The operators $\mathcal{F}$ and
$\mathcal{B}$ specify the governing PDE and boundary conditions,
respectively, and may be nonlinear.
For the initial conditions and time horizons considered, we assume
that the solution exists, is unique, and can be evaluated at the
sampling locations. We denote the solution map by
\begin{equation}
\label{eq:solution_map}
    S(t)u_0 = u(t,\cdot).
\end{equation}

To formulate the learning problem, let $\mu$ be a Borel probability
measure on $L^2(\Omega;\mathbb{R}^c)$, and draw $M$
initial conditions independently from $\mu$,
\[
    u_{0,i}\overset{\mathrm{i.i.d.}}{\sim}\mu,
    \qquad i=1,\ldots,M.
\]
The prediction benchmarks in this work use trajectories
generated by numerical PDE solvers. All trajectories are sampled on the time grid
\[
    0=t_1<t_2<\cdots<t_{N_{\mathrm{train}}+N_{\mathrm{extra}}},
\]
which we split into the training times and the extrapolation times,
\[
    \mathcal{T}_{\mathrm{train}}
    \coloneqq \{t_j\}_{j=1}^{N_{\mathrm{train}}},
    \qquad
    \mathcal{T}_{\mathrm{extra}}
    \coloneqq \{t_j\}_{j=N_{\mathrm{train}}+1}^{N_{\mathrm{train}}+N_{\mathrm{extra}}},
\]
and on the spatial sample points
\[
    \Omega_{N_x}
    \coloneqq \{\mathbf{x}_k\}_{k=1}^{N_x}
    \subset\Omega.
\]
Here, $N_{\mathrm{train}}$ and $N_{\mathrm{extra}}$ are the numbers of snapshots inside and beyond the training window $[0,t_{N_{\mathrm{train}}}]$, and $N_x$ is the number of spatial
sample points. For each training initial condition $u_{0,i}$, the solution is observed only at the training times, giving the training dataset
\begin{equation}
\label{eq:training_dataset}
    \mathcal{D}_{\mathrm{tr}}
    =
    \left\{
        \left\{
            \left(
                t_j,
                \mathbf{x}_k,
                [S(t_j)u_{0,i}](\mathbf{x}_k)
            \right)
        \right\}_{\begin{subarray}{l}
            1\leq j\leq N_{\mathrm{train}}\\
            1\leq k\leq N_x
        \end{subarray}}
    \right\}_{i=1}^{M}.
\end{equation}
For a fixed initial condition, these spatiotemporal samples
constitute a trajectory~\cite{yu2024learning}.

Our objective is to learn a reduced-order surrogate for the solution map from these trajectories. At test time, the surrogate receives the initial state only on an initial observation set $\Omega^{\mathrm{init}}_{N_x} \subseteq \Omega_{N_x}$ and predicts $u(t,\mathbf{x})$ at queried spatial coordinates over a prescribed time horizon. We refer to the case $\Omega^{\mathrm{init}}_{N_x} = \Omega_{N_x}$ as full initialization and to the case $\Omega^{\mathrm{init}}_{N_x} \subsetneq \Omega_{N_x}$ as sparse initialization. Since the initial latent state is inferred by autodecoding with the trained networks frozen, $\Omega^{\mathrm{init}}_{N_x}$ can be chosen at test time without retraining; in particular, a model trained on the fully observed data in \Cref{eq:training_dataset} can be initialized from sparse measurements. The surrogate is required to generalize to unseen initial conditions drawn from $\mu$, both within the training window (\emph{in-horizon prediction}) and at the extrapolation times $\mathcal{T}_{\mathrm{extra}}$ (\emph{extrapolation}).

\section{Reduced-Order Modeling with Implicit Neural Representations}\label{sec:related_work}

\subsection{Implicit Neural Representation}
INR~\cite{genova2019learning,sitzmann2020implicit,genova2020local} is a class of feedforward neural networks with the spatial coordinate $\mathbf{x} \in \Omega$ as input to describe an arbitrary function $D_\phi(\mathbf{x})$ implicitly via parameter $\phi$. 
Unlike traditional pixelized approaches that describe a function directly as a
vector at fixed sampling points over $\Omega$~\cite{murata2020nonlinear}, INR
allows a mesh-agnostic approximation that can be queried at any point in
$\Omega$. This in turn permits spatial queries to be partitioned into smaller
batches to reduce decoder memory use~\cite{pan2023neural}. This is sometimes described as learning a \emph{continuous} vector field, as opposed to \emph{already-discretized} vector fields~\cite{chen2022crom}, on which traditional methods such as POD are built on, e.g., POD.
Since the solution of a PDE is most often a continuous function in space, INR has been recently introduced to the community of machine learning for PDE~\cite{pan2023neural,duvall2025discretization,chen2023model,chen2023implicit}. 
Pan et al.~\cite{pan2023neural} demonstrated the effectiveness of a hypernetwork~\cite{ha2016hypernetworks} built on top of SIREN~\cite{sitzmann2020implicit} for reduced-order models of complex multi-scale PDE data in a mesh-agnostic fashion. 
Duvall and Duraisamy~\cite{duvall2025discretization} employed coordinate-based networks with hypernetwork modulation to build discretization-independent surrogates of physical fields around variable geometries and non-parametric meshes, albeit for steady-state fields without temporal evolution.
Chen et al.~\cite{chen2023implicit,chen2023model} solved the \emph{known} PDE by
integrating the projected governing equation of the \emph{reduced} latent state
from INR, using either a Lagrangian or an Eulerian method. In contrast, several
other concurrent works~\cite{du2021evolutional,bruna2024neural,anderson2022evolution}
took the \emph{whole} neural network parameters as the latent space. 
Yin et al.~\cite{Yin2022Continuous} utilized an INR autodecoder of multiplicative filter network (MFN) together with NeuralODE~\cite{chen2018neural} to learn the latent representation of the PDE system.  
Berman and Peherstorfer~\cite{berman2024colora} combined LoRA~\cite{hu2021lora} with INR for both intrusive and non-intrusive ROMs.


\subsection{Modeling Temporal Evolution of Latent Variable}

Propagating latent variables forward in time plays an important role in reducing computational and memory requirements as compared to full-order models~\cite{lee2021deep, wiewel2019latent, li2022transformer}. One straightforward approach is to use tools from the time-series and sequence-modeling community, such as ARIMA~\cite{hamilton2020time}, LSTM~\cite{hochreiter1997long,mohan2019compressed}, TCN~\cite{hewage2020temporal,xu2020multi}, echo-state networks~\cite{chattopadhyay2020data}, and Transformers~\cite{vaswani2017attention,geneva2022transformers}. 
%
%
Alternatively, one could also choose a system-theoretic approach via Koopman operator~\cite{otto2019linearly,pan2020physics}, polynomial nonlinear dynamics~\cite{brunton2016discovering,peherstorfer2016data}, neural ODE~\cite{chen2018neural}.  
The key advantage of using system-theoretic approaches is the possibility of ensuring stability and direct integration with well-established techniques from system identification,  control, and state estimation communities. 
To ensure the stability of the learned model, one would typically need to work with certain simplified system dynamics. 
For linear models, Pan et al.~\cite{pan2020physics} built a stable Koopman operator by constructing a special tri-block diagonal matrix. 
For linear-quadratic models, Kaptanoglu et al.~\cite{kaptanoglu2021promoting} proposed a stability criterion that can be utilized to promote the global stability of learned models. Kramer~\cite{kramer2021stability} presented an optimization-based method to estimate the stability domain of quadratic-bilinear ROMs, offering a more scalable and less conservative approach than traditional analytical methods.  Goyal et al.~\cite{goyal2025guaranteed} developed inference formulations designed to construct quadratic models with built-in stability. Ouala et al.~\cite{ouala2023bounded} adopted the linear–quadratic dynamics and imposed a global boundedness constraint, enhancing the generalization of the learned low-dimensional ODE dynamics to arbitrary initial conditions.
Yin et al.~\cite{Yin2022Continuous} proposed DINo where spatial measurements are encoded through an INR autodecoder with Feature-wise Linear Modulation (FiLM) modulation~\cite{perez2018film} and evolved through a learned neural ODE~\cite{chen2018neural}. 
However, temporal stability, which is crucial for long-horizon rollout, is not guaranteed by a generic neural ODE parameterization. In fact, ensuring the temporal stability of data-driven models for dynamical systems is a well-known long-standing issue~\cite{yu2024learning}.
In contrast to existing INR-based frameworks~\cite{Yin2022Continuous,chen2023implicit,chen2023model,berman2024colora}, we combine coordinate-based autodecoding with structured latent dynamics and stability-promoting regularization. The analysis provides explicit latent-space stability certificates under stated spectral and conservation conditions, while retaining the mesh-agnostic and memory-efficient advantages of INR representations~\cite{pan2023neural}. 

\section{Proposed Framework}
\label{sec:method}

\begin{figure}[htbp]
    \centering

    \begin{subfigure}[t]{\textwidth}
        \centering
        \includegraphics[width=14cm]{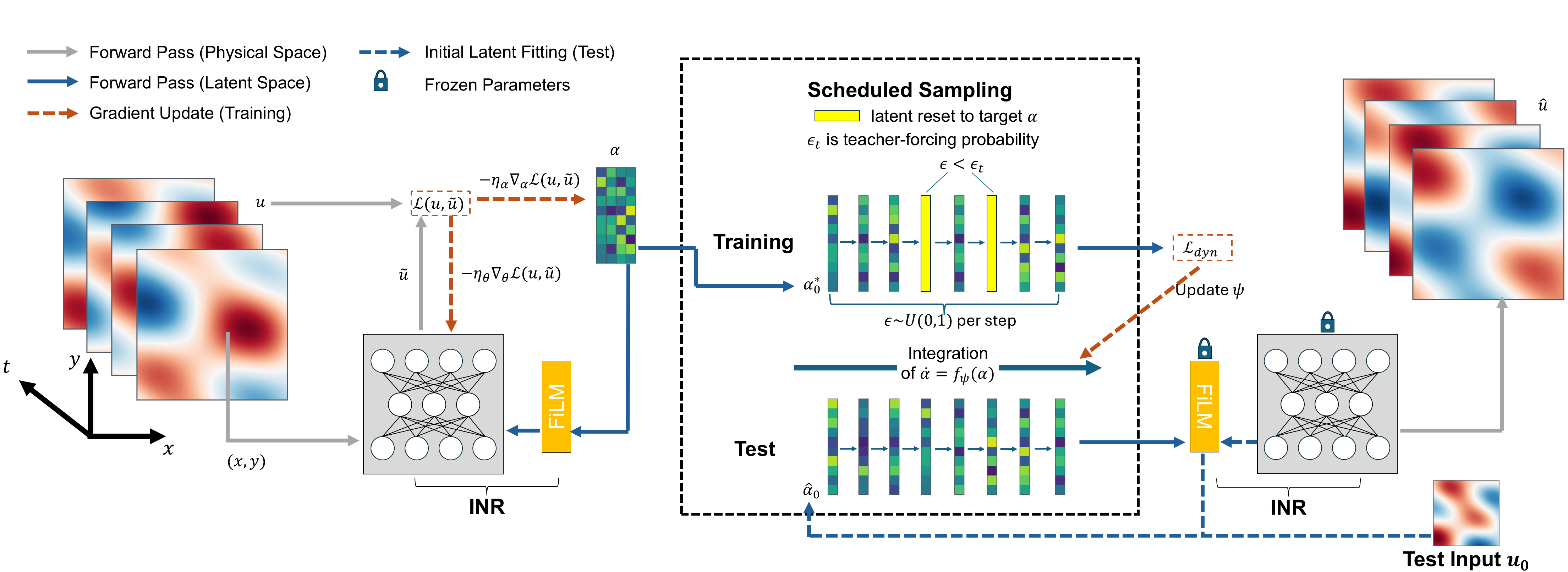}
        \caption{}
        \label{fig:subfig1_1}
    \end{subfigure}

    \begin{subfigure}[t]{\textwidth}
        \centering
        \includegraphics[width=15cm]{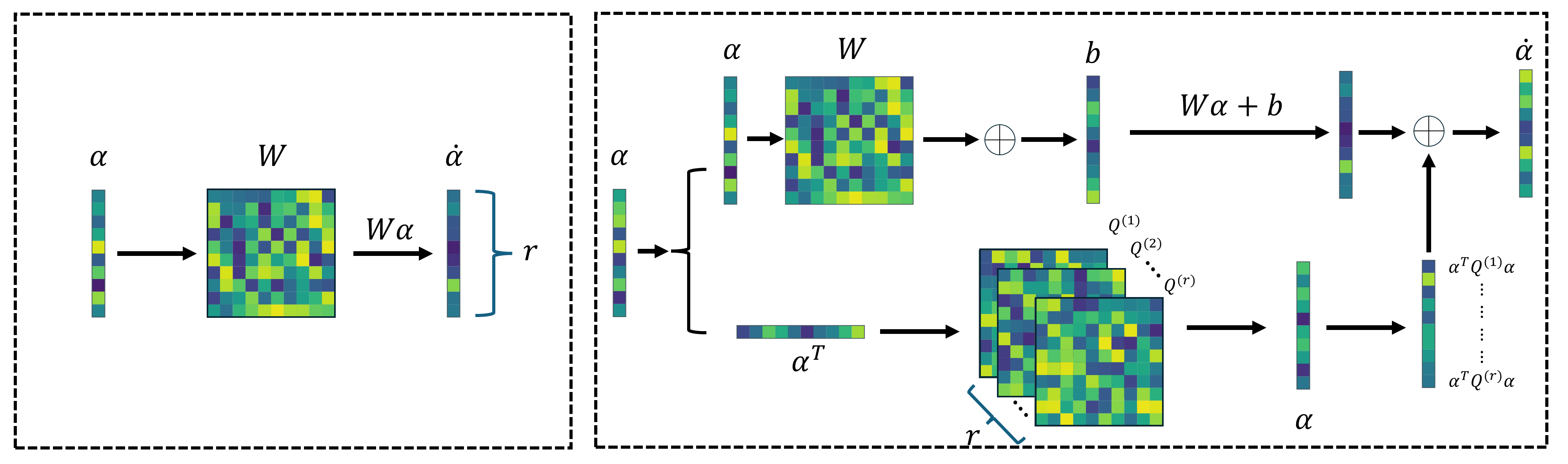}
        \caption{}
        \label{fig:subfig1_2}
    \end{subfigure}

    \begin{subfigure}[t]{\textwidth}
        \centering
        \includegraphics[width=15cm]{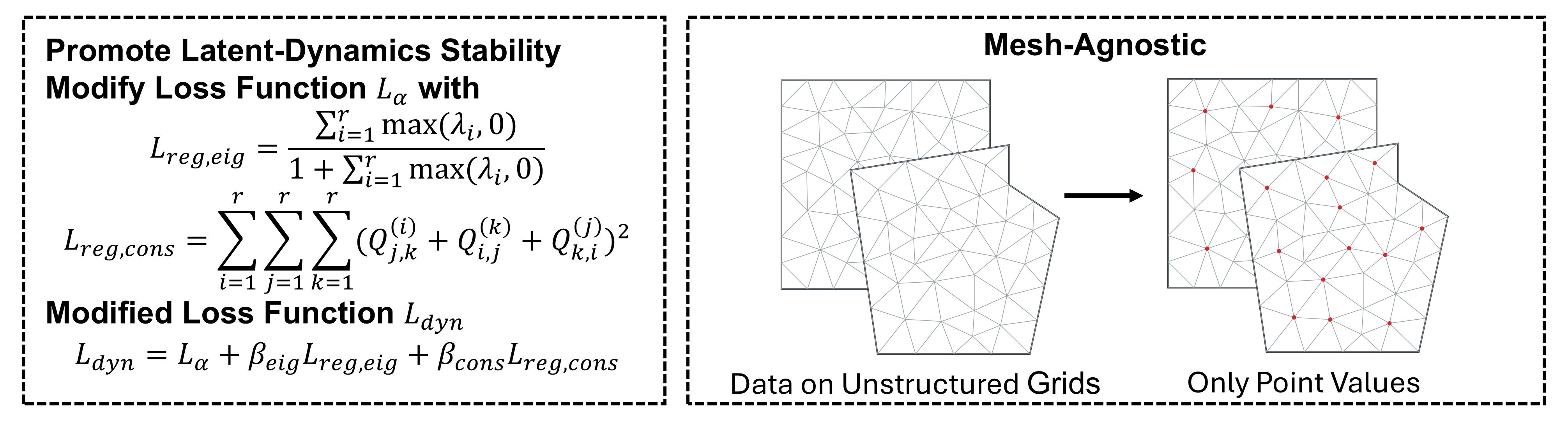}
        \caption{}
        \label{fig:subfig1_3}
    \end{subfigure}

    \caption{Overview of SMORE.
    (a) Overall SMORE workflow. An INR autodecoder fits compact latent states to observed field
    values. The latent dynamics advances these states in time, and the
    coordinate decoder reconstructs the corresponding solution fields.
    Training uses scheduled sampling, whereas testing rolls out from
    the inferred initial state.
    (b) SMORE-Koopman latent dynamics (left) and SMORE-LRLQ latent dynamics (right).
    (c) Lyapunov-guided stability regularization and coordinate decoding.
Stability is promoted by augmenting the dynamics loss $L_\alpha$ with
Lyapunov-guided regularizers (left). The eigenvalue term $L_{\text{reg,eig}}$ penalizes positive eigenvalues of the symmetric part $\mathbf{W}_s$ of the linear operator and is used by both SMORE-Koopman and SMORE-LRLQ, while the conservation term $L_{\text{reg,cons}}$, which encourages the
energy-preserving structure of the quadratic tensor $\mathbf{Q}$, is used only by
SMORE-LRLQ. Their combination yields the modified loss $L_{\text{dyn}}$.
Coordinate-based decoding then evaluates the reconstructed field at
arbitrary spatial query locations (right).}
    \label{fig:overview}
\end{figure}


We now present our framework SMORE in detail. As summarized in \cref{fig:overview}, it consists of an INR-based decoder $D_{\phi}\colon\Omega\to\mathbb{R}$ and a dynamics model for the latent state $\alpha \in \mathbb{R}^r$. The dependence of the decoded field on $\alpha_t$ is realized through FiLM~\cite{perez2018film}, while the temporal evolution of $\alpha_t$ is governed by a structured latent dynamics model with Lyapunov-guided stability regularization. Its two variants differ only in the latent dynamics.
SMORE-Koopman uses linear dynamics (\cref{eq:linear}), whereas SMORE-LRLQ
uses linear-quadratic dynamics with a low-rank factorization of the
quadratic tensor (\cref{eq:linear-quadratic,eq:low_rank}). This section is organized as follows. We first
introduce the INR-based spatial representation and the structured latent
dynamics models with their stability regularization, and show how
bounded latent states yield bounded reconstructed solution fields. We
then describe inference from sparse initial observations and the overall
training and evaluation workflow, including scheduled sampling and
pointwise minibatching.

\subsection{Latent Representation and Modulated Implicit Neural Decoding}
\label{sec:inr-autoencoder}
Instead of learning an explicit encoder, we adopt an autodecoding approach, in which the latent state of each snapshot is obtained by optimization (\cref{eq:inr-loss-1}). Although the components of the PDE state are generally dependent (in fluid systems, for example, different directional velocity components and other physical variables are often coupled through the governing equations), each component is decoded from its own latent code by a shared decoder, and the coupling is captured by the latent dynamics, which evolve the full latent state $\alpha_t$. Let $u^{(q)}(t,\cdot)$ denote the $q$-th component of the solution field, where $q=1,\dots,c$. We associate each component $u^{(q)}(t,\cdot)$ with a latent code $z_t^{(q)} \in \mathbb{R}^p$. Collecting these component-wise latent codes yields the latent representation in \cref{eq:all_latent}, where the total latent dimension satisfies $r = cp$:
\begin{equation}
\label{eq:all_latent}
\alpha_t =
\begin{bmatrix}
z_t^{(1)} \\
\vdots \\
z_t^{(c)}
\end{bmatrix}
\in \mathbb{R}^{r}, \qquad r = cp.
\end{equation}
Each component $u^{(q)}(t,\cdot)$ is approximated by a coordinate-based feedforward neural network $D_\phi\colon\Omega\to\mathbb{R}$ (the INR decoder) shared across components. Here, $\phi$ denotes the parameters of the base INR decoder.
The complete trainable parameter set $\theta$ includes $\phi$
and the FiLM modulation matrices. Given a component latent code $z\in\mathbb{R}^p$,
the map $\Phi_\theta(z)$ produces the effective decoder
parameters by replacing each modulated bias $b^{(l)}$ with
$b^{(l)}+A^{(l)}z$, while leaving the other parameters
unchanged. We denote the resulting decoder by
$D_{\Phi_\theta(z)}$. The decoded field is obtained by applying the shared decoder to each component code,
\begin{equation}
\label{eq:fullfield}
\hat{u}_\theta(\alpha_t)(\mathbf{x})
\coloneqq
\big[D_{\Phi_\theta(z_t^{(1)})}(\mathbf{x}),\dots,D_{\Phi_\theta(z_t^{(c)})}(\mathbf{x})\big]^\top .
\end{equation} 
For brevity, we also write $D_{\Phi_\theta(\alpha)}\coloneqq\hat{u}_\theta(\alpha)$ for the component-wise decoding of the full latent state.
The latent state of each snapshot is obtained by autodecoding, 
\begin{equation}
\label{eq:inr-loss-1}
    \alpha_t \in \argmin_{\alpha\in \mathbb{R}^{r}} L_{\mathrm{rec}}\big(\hat{u}_\theta(\alpha), u(t, \cdot)\big).
\end{equation}
Here, $L_{\mathrm{rec}}(f,g)\coloneqq\sum_{q=1}^{c}\|f^{(q)}-g^{(q)}\|_{L^2(\Omega)}^2$ measures reconstruction
error across field components. In practice, this minimization is solved approximately by gradient-based optimization. The FiLM construction is detailed
in Appendix~\ref{apdx:film}. Although one could choose arbitrary INR frameworks, for simplicity, we consider the multiplicative filter network (MFN)~\cite{fathony2020multiplicative} as $D_\phi$. Details of MFN are presented in Appendix~\ref{apdx:mfn}. In practice, $L_{\mathrm{rec}}$ is evaluated empirically on $\Omega_{N_x}$ during training, and on $\Omega^{\mathrm{init}}_{N_x}$ when the initial latent state is inferred at test time.

\subsection{Stability-Promoting Latent Dynamics}
\label{sec:dynamic-model}

Given the latent states obtained by autodecoding, we model their
temporal evolution through $\dot{\alpha}=f_\psi(\alpha)$.
We consider linear and linear-quadratic parameterizations of
$f_\psi$, both of which are continuously differentiable, hence locally
Lipschitz, so the latent ODE admits a unique local solution for
every initial state.
We approximate the latent trajectories using a fourth-order
Runge--Kutta scheme and train the dynamics parameters using
the continuous adjoint method~\cite{chen2018neural}.
To address rapidly growing rollouts and ill-conditioned gradients,
we incorporate Lyapunov-guided stability regularization.

\subsubsection{Linear Latent Dynamics} 
Koopman theory describes the evolution of observables of a
nonlinear dynamical system through a linear operator on a
generally infinite-dimensional function space~\cite{mezic2023operator}.
When a finite set of observables spans an invariant subspace,
their evolution admits a closed finite-dimensional linear
representation. In continuous time, the corresponding matrix
represents the Koopman generator restricted to that subspace.
Motivated by this construction and prior work on learning linear latent 
dynamics~\cite{pan2020physics,constante2024data}, we jointly learn
latent coordinates and a matrix $\mathbf{W}$ to approximate
their evolution by  
\begin{equation}
    \label{eq:linear}
    \dot{\alpha}_t = \mathbf{W}\alpha_t, 
\end{equation}
where $\mathbf{W}\in\mathbb{R}^{r\times r}$ is trainable.
The learned coordinates provide an approximate linear
representation; exact Koopman invariance is not assumed.

\subsubsection{Linear-Quadratic Latent Dynamics} 
For highly nonlinear PDE systems, the finite-dimensional linear dynamics above may be restrictive. We therefore use a linear-quadratic system~\cite{goyal2025guaranteed, ouala2023bounded}, 
\begin{equation}
    \label{eq:linear-quadratic}
    \dot{\alpha} =  \mathbf{W}\alpha +  \mathbf{b} + \begin{bmatrix} \alpha^\top \mathbf{Q}^{(1)} \alpha & \cdots & \alpha^\top \mathbf{Q}^{(r)} \alpha \end{bmatrix}^\top, 
\end{equation} 
with trainable parameters $\mathbf{b} \in\mathbb{R}^{r}$, $\mathbf{W}\in\mathbb{R}^{r \times r}$, and $\forall i \in \{1,\ldots,r\}, \mathbf{Q}^{(i)} \in \mathbb{R}^{r\times r}$. Here, $\alpha \in \mathbb{R}^r$ denotes the concatenated latent state defined in \cref{eq:all_latent}, where $r=cp$. The linear model \cref{eq:linear} is the special case $\mathbf{b}=\mathbf{0}$, $\mathbf{Q}^{(i)}=\mathbf{0}$.

\subsubsection{Lyapunov-Guided Stability Regularization}
To prevent numerical overflow in both the rollout and back-propagation caused by an \emph{unbounded} forward pass through \cref{eq:linear,eq:linear-quadratic}, we introduce Lyapunov-guided stability regularization for the latent ODE. The design is guided by the following dissipativity criterion for linear-quadratic systems~\cite{schlegel2015long, kaptanoglu2021promoting, ouala2023bounded, goyal2023guaranteed}.  The criterion separates a strictly dissipative case, which yields an explicit absorbing radius, from the marginal case $\gamma=0$ used as the practical training target, which excludes finite-time blow-up.
\begin{theorem}
\label{thm:1}
Consider \cref{eq:linear-quadratic}. Since only the symmetric part of each quadratic slice contributes to $\alpha^\top \mathbf{Q}^{(i)}\alpha$, assume without loss of generality that $\mathbf{Q}^{(i)}=(\mathbf{Q}^{(i)})^\top$. For a shift $\mathbf{m}\in\mathbb{R}^r$, define $\mathbf{y}=\alpha-\mathbf{m}$,
\begin{equation}
\label{eq:shifted_d}
    \mathbf{d}
    =
    \mathbf{b}+\mathbf{W}\mathbf{m}
    +
    \begin{bmatrix}
    \mathbf{m}^\top\mathbf{Q}^{(1)}\mathbf{m} & \cdots &
    \mathbf{m}^\top\mathbf{Q}^{(r)}\mathbf{m}
    \end{bmatrix}^{\top},
\end{equation}
and
\begin{equation}
\label{eq:shifted_A}
    \mathbf{A}
    =
    \mathbf{W} + \begin{bmatrix}
    2\mathbf{m}^\top \mathbf{Q}^{(1)}\\
    \vdots \\
    2\mathbf{m}^\top \mathbf{Q}^{(r)}
    \end{bmatrix}.
\end{equation}
These are the constant and linear coefficients of the shifted system (\cref{eq:app_shifted_lq}). If the following two conditions hold:
\begin{enumerate}
    \item there exists $\gamma\geq0$ such that the symmetric part $\mathbf{A}_s=(\mathbf{A}+\mathbf{A}^\top)/2$ satisfies $\mathbf{A}_s\preceq -\gamma \mathbf{I}$,
    \item $\forall i,j,k \in \{1,\ldots,r\}$, $\mathbf{Q}^{(i)}_{j,k} + \mathbf{Q}^{(k)}_{i,j} + \mathbf{Q}^{(j)}_{k,i} = 0$,  
\end{enumerate}
then the shifted energy satisfies
\begin{equation*}
    \dot V
    \leq
    \Vert \mathbf{d}\Vert\Vert\mathbf{y}\Vert
    -
    \gamma \Vert \mathbf{y}\Vert^2,
    \qquad
    V(\mathbf{y})=\frac{1}{2}\Vert\mathbf{y}\Vert^2.
\end{equation*}
Consequently, solutions exist for all $t\geq0$. 
If $\gamma>0$, the shifted latent state is ultimately bounded and satisfies
\begin{equation}
\label{eq:hard_absorbing_radius}
    \limsup_{t\to\infty}\Vert \alpha_t-\mathbf{m}\Vert
    \leq
    \frac{\Vert \mathbf{d}\Vert}{\gamma}.
\end{equation}
Moreover, if $\mathbf{d}=\mathbf{0}$ and $\gamma\geq0$, then
\begin{equation}
\label{eq:hard_exponential_decay}
    \Vert \alpha_t-\mathbf{m}\Vert
    \leq
    e^{-\gamma t}\Vert \alpha(0)-\mathbf{m}\Vert .
\end{equation}
In particular, the boundary case $\gamma=0$ gives a non-expansive energy estimate $\Vert \alpha_t-\mathbf{m}\Vert\leq\Vert \alpha(0)-\mathbf{m}\Vert$ when $\mathbf{d}=\mathbf{0}$.
If $\gamma=0$ and $\mathbf{d}\neq\mathbf{0}$, the growth is at most linear,
\begin{equation}
\label{eq:hard_linear_growth}
    \Vert \alpha_t-\mathbf{m}\Vert
    \leq
    \Vert \alpha(0)-\mathbf{m}\Vert + \Vert\mathbf{d}\Vert\, t.
\end{equation}
\end{theorem}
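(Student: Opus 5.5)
Change variables to $\mathbf{y}=\alpha-\mathbf{m}$. Since $\mathbf{m}$ is constant, $\dot{\mathbf{y}}=\dot\alpha$. Expand $\alpha^\top\mathbf{Q}^{(i)}\alpha=(\mathbf{y}+\mathbf{m})^\top\mathbf{Q}^{(i)}(\mathbf{y}+\mathbf{m})$ using the symmetry of $\mathbf{Q}^{(i)}$. This gives the shifted system
\begin{equation*}
\dot{\mathbf{y}}=\mathbf{d}+\mathbf{A}\mathbf{y}+\mathbf{q}(\mathbf{y}),\qquad \mathbf{q}(\mathbf{y})_i=\mathbf{y}^\top\mathbf{Q}^{(i)}\mathbf{y},
\end{equation*}
with $\mathbf{d}$ and $\mathbf{A}$ exactly as in \cref{eq:shifted_d,eq:shifted_A}. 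The cross term $2\mathbf{m}^\top\mathbf{Q}^{(i)}\mathbf{y}$ is what produces the rows of $\mathbf{A}$.

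Next I differentiate $V=\tfrac12\|\mathbf{y}\|^2$ along trajectories: $\dot V=\mathbf{y}^\top\mathbf{d}+\mathbf{y}^\top\mathbf{A}_s\mathbf{y}+\mathbf{y}^\top\mathbf{q}(\mathbf{y})$. The key step is to show the cubic term vanishes. Write it as $\sum_{i,j,k}\mathbf{Q}^{(i)}_{j,k}y_iy_jy_k$. This sum is invariant under cyclic relabeling of $(i,j,k)$, so it equals one third of $\sum_{i,j,k}\big(\mathbf{Q}^{(i)}_{j,k}+\mathbf{Q}^{(k)}_{i,j}+\mathbf{Q}^{(j)}_{k,i}\big)y_iy_jy_k$. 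By condition 2 each bracket is zero, so the sum is $0$. Cauchy--Schwarz on the first term and condition 1 on the second then give $\dot V\le\|\mathbf{d}\|\|\mathbf{y}\|-\gamma\|\mathbf{y}\|^2$. I expect this index bookkeeping to be the main (though mild) obstacle: one has to check that the three index patterns in condition 2 are exactly the three cyclic permutations of $(i,j,k)$ in $\mathbf{Q}^{(i)}_{j,k}y_iy_jy_k$.

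For the consequences, set $\rho=\|\mathbf{y}\|$. Wherever $\rho>0$, $\dot\rho=\dot V/\rho\le\|\mathbf{d}\|-\gamma\rho$. To avoid the nondifferentiability at $\rho=0$, I would instead use $\sqrt{2V+\varepsilon}$ and let $\varepsilon\to0$, or argue with upper Dini derivatives.
\begin{itemize}
\item \emph{Global existence.} The vector field is locally Lipschitz, so a unique maximal solution exists. The bound $\dot\rho\le\|\mathbf{d}\|$ gives $\rho(t)\le\rho(0)+\|\mathbf{d}\|t$, which stays finite on bounded intervals. The solution therefore cannot blow up in finite time, and the standard continuation theorem gives existence for all $t\ge0$.
\item \emph{Case $\gamma=0$.} The same bound is exactly \cref{eq:hard_linear_growth}.
\item \emph{Case $\gamma>0$.} Comparison with the linear ODE $\dot r=\|\mathbf{d}\|-\gamma r$ gives $\rho(t)\le e^{-\gamma t}\rho(0)+\frac{\|\mathbf{d}\|}{\gamma}(1-e^{-\gamma t})$. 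Taking $\limsup$ yields \cref{eq:hard_absorbing_radius}.
\item \emph{Case $\mathbf{d}=\mathbf{0}$.} Here $\dot V\le-2\gamma V$, and Grönwall gives $V(t)\le e^{-2\gamma t}V(0)$. Taking square roots gives \cref{eq:hard_exponential_decay}, and the non-expansive statement is the special case $\gamma=0$.
\end{itemize}
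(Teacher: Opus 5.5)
Your proposal is correct and follows essentially the same route as the paper: the shift to $\mathbf{y}$, the cyclic-relabeling argument that kills the cubic term, Cauchy--Schwarz plus condition~1, and a regularized norm $\sqrt{\Vert\mathbf{y}\Vert^2+\epsilon^2}$ combined with a comparison lemma. The only cosmetic difference is in organization. The paper derives one unified bound $\Vert\mathbf{y}(t)\Vert\le e^{-\gamma t}\Vert\mathbf{y}(0)\Vert+\Vert\mathbf{d}\Vert(1-e^{-\gamma t})/\gamma$ and reads off global existence and all three cases from it, whereas you treat the cases separately and handle $\mathbf{d}=\mathbf{0}$ by Gr\"onwall on $V$ directly, which avoids the nondifferentiability issue in that case.
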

\noindent \emph{The proof is provided in \Cref{apdx:boundedness}.}

\begin{remark}
The implementation uses $\mathbf{m}=\mathbf{0}$, so
$\mathbf{A}_s=\mathbf{W}_s=(\mathbf{W}+\mathbf{W}^{\top})/2$ and
$\mathbf{d}=\mathbf{b}$.  This is the most conservative choice. Because $\mathbf{A}$ depends affinely on $\mathbf{m}$, a trained model that violates $\mathbf{W}_s\preceq0$ may still admit some $\mathbf{m}\neq\mathbf{0}$ with $\mathbf{A}_s\preceq-\gamma\mathbf{I}$. 
Define the certificate margin as
$\gamma_{\mathrm{cert}}=-\lambda_{\max}(\mathbf{W}_s)$. For SMORE-Koopman, $\mathbf{b}=\mathbf{0}$ and
$\mathbf{Q}^{(i)}=\mathbf{0}$, so the cyclic condition holds trivially
and $\gamma_{\mathrm{cert}}\geq0$ alone yields the non-expansive
estimate in \cref{eq:hard_exponential_decay}.
For SMORE-LRLQ, $\mathbf{d}=\mathbf{b}$ is generally nonzero.
A positive margin, together with the cyclic cancellation condition in \Cref{thm:1}, yields the absorbing radius $\Vert\mathbf{b}\Vert/\gamma_{\mathrm{cert}}$ 
in \cref{eq:hard_absorbing_radius}, whereas a zero margin yields the
linear-growth bound in \cref{eq:hard_linear_growth}.
Training targets $\gamma_{\mathrm{cert}}\geq0$, i.e., the zero-margin target  $\gamma_{\mathrm{tgt}}=0$, through finite-weight penalties on positive eigenvalues of $\mathbf{W}_s$ and
violations of cyclic energy cancellation.
These penalties encourage the structural conditions but do not
guarantee that $\lambda_{\max}(\mathbf{W}_s)\leq0$ or that the cyclic identity holds exactly. The theoretical guarantees apply to a trained
model only when the stated conditions hold, which can be checked
a posteriori by evaluating $\lambda_{\max}(\mathbf{W}_s)$ and the
cyclic defect in \cref{eq:cyclic_defect_tensor}.
\end{remark}

The preceding Lyapunov analysis motivates two soft penalties. One
penalizes positive eigenvalues of the symmetric linear part, and the
other penalizes violations of cyclic energy cancellation in the
quadratic term.
To promote the marginal spectral condition in \Cref{thm:1}, we define
\begin{equation}
\label{eq:reg_1}
L_{\text{reg,eig}} = \frac{ \sum_{i=1}^{r} \max(\lambda_i, 0)}{ 1+ \sum_{i=1}^{r} \max(\lambda_i, 0)}. 
\end{equation}
Here, $\lambda_1,\ldots,\lambda_r$ are the eigenvalues of
$\mathbf{W}_s$. The loss penalizes their positive part. Because
$\lambda_{\max}(\mathbf{W}_s)$ bounds the instantaneous growth rate of
$V$ due to the linear part, and
$\lambda_{\max}(\mathbf{W}_s)\geq\max_i\operatorname{Re}\lambda_i(\mathbf{W})$,
this penalty also suppresses transient growth that a condition on the
spectrum of $\mathbf{W}$ alone would permit. The normalization keeps
$L_{\text{reg,eig}}\in[0,1)$.

To encourage the cyclic energy cancellation in \Cref{thm:1}, we define
\begin{equation}
\label{eq:reg_2}
    L_{\text{reg,cons}} = \sum_{i=1}^r \sum_{j=1}^r \sum_{k=1}^r 
\left(\mathbf{Q}^{(i)}_{j,k} 
+ \mathbf{Q}^{(k)}_{i,j}
+ \mathbf{Q}^{(j)}_{k,i}
\right)^2.
\end{equation}
The two regularizers are added to the latent-dynamics loss $L_\alpha$ with fixed coefficients 
$\beta_{\text{eig}}$ and $\beta_{\text{cons}}$, yielding
\begin{equation}
    \label{eq:total_loss}
    L_{\text{dyn}}
    = L_{\alpha}
    + \beta_{\text{eig}}\, L_{\text{reg,eig}}
    + \beta_{\text{cons}}\, L_{\text{reg,cons}},
\end{equation}
where $L_{\alpha}$ is the latent-dynamics loss
\begin{equation}
\label{eq:latent_dyn_loss}
    L_{\alpha}
    =
    \frac{1}{|\mathcal{I}|\,N_{\text{train}}\,r}
    \sum_{i\in\mathcal{I}}\sum_{l=1}^{N_{\text{train}}}
    \big\Vert \hat{\alpha}_{i,l}-\alpha^{*}_{i,l}\big\Vert^2 .
\end{equation}
Here, $\mathcal{I}$ is the minibatch of training trajectories,
$\alpha^{*}_{i,l}$ is the latent state of snapshot $l$ of trajectory $i$
obtained by autodecoding (\cref{eq:inr-loss-1}), and
$\hat{\alpha}_{i,l}$ is the prediction obtained by integrating
$f_\psi$ from $\alpha^{*}_{i,1}$ with scheduled sampling
(\Cref{sec:scheduled_sampling}). The targets $\alpha^{*}_{i,l}$ are held
fixed when differentiating $L_{\alpha}$, so this loss updates only the
dynamics parameters $\psi$ and not the decoder or the latent states
(\Cref{sec:overall_workflow}). For SMORE-Koopman, the quadratic term vanishes, so $\beta_{\text{cons}}=0$ in \cref{eq:total_loss}. Because the regularizers depend only on the dynamics parameters, the implementation scales them by $1/N_b$ per minibatch, where $N_b$ is the number of minibatches per epoch, so that they enter once per epoch (\Cref{alg:smore-tr}).

\begin{remark}
The conservation regularizer $L_{\text{reg,cons}}$ has a direct certificate interpretation. Let
\begin{equation}
\label{eq:cyclic_defect_tensor}
    T_{ijk}
    =
    \mathbf{Q}^{(i)}_{j,k}
    + \mathbf{Q}^{(k)}_{i,j}
    + \mathbf{Q}^{(j)}_{k,i},
    \qquad
    \eta_Q=\frac{1}{3}\Vert T\Vert_F
    =
    \frac{1}{3}L_{\text{reg,cons}}^{1/2}.
\end{equation}
Then the quadratic energy defect obeys
\begin{equation}
\label{eq:soft_defect_bound}
    \left|
    \sum_{i=1}^{r} y_i\,\mathbf{y}^{\top}\mathbf{Q}^{(i)}\mathbf{y}
    \right|
    \leq
    \eta_Q \Vert \mathbf{y}\Vert^3.
\end{equation}
With a small defect in place of the exact cyclic identity, the Lyapunov estimate becomes
\begin{equation}
\label{eq:soft_lyapunov}
    \dot V
    \leq
    \Vert \mathbf{d}\Vert\Vert\mathbf{y}\Vert
    -
    \gamma\Vert\mathbf{y}\Vert^2
    +
    \eta_Q\Vert\mathbf{y}\Vert^3.
\end{equation}
When a positive certificate margin $\gamma>0$ is available, $\eta_Q>0$, and $4\Vert\mathbf{d}\Vert\eta_Q<\gamma^2$, the interval
\begin{equation}
\label{eq:soft_radius_interval}
    \left(\frac{2\Vert\mathbf{d}\Vert}{\gamma},
    \frac{\gamma}{2\eta_Q}\right]
\end{equation}
is nonempty, and for every $R_{\mathrm{cert}}$ in it the ball $\{ \|\alpha - \mathbf{m}\| \le R_{\mathrm{cert}} \}$ is forward invariant. Reducing $L_{\text{reg,cons}}$ enlarges the upper endpoint and hence the certified region. This soft certificate requires a strictly positive margin and is therefore not implied by the zero-margin training target alone. If $\eta_Q=0$, the upper endpoint is not needed and the exact cyclic case recovers \Cref{thm:1}.
\end{remark}
 
\subsubsection{Low-Rank Parameterization of the Quadratic Tensor}
For SMORE-LRLQ, storing the full quadratic tensor $\{\mathbf{Q}^{(i)}\}_{i=1}^{r}$ becomes prohibitively expensive as the latent dimension $r$ increases, requiring $O(r^3)$ parameters. To reduce this cost, SMORE-LRLQ adopts a low-rank parameterization of the quadratic term as a standard architectural component.
Specifically, we use a rank-$\kappa$ factorization for each output
index $i\in\{1,\ldots,r\}$:
\begin{equation}
\label{eq:low_rank}
    \mathbf{Q}_{\mathrm{low}}^{(i)} = \mathbf{A}_i \mathbf{B}_i,
    \quad \mathbf{A}_i \in \mathbb{R}^{r \times \kappa},\;
    \mathbf{B}_i \in \mathbb{R}^{\kappa \times r},
\end{equation}
which reduces the quadratic parameter count from $r^3$ to $2r^2\kappa$. Since a quadratic form depends only on the symmetric part of its matrix, the effective slice used in the theory is
\begin{equation}
\label{eq:low_rank_effective}
    \mathbf{Q}_{\mathrm{eff}}^{(i)}
    =
    \frac{\mathbf{A}_i\mathbf{B}_i+\mathbf{B}_i^\top\mathbf{A}_i^\top}{2}.
\end{equation}
The latent dynamics in \cref{eq:linear-quadratic} is unchanged because
$\alpha^\top\mathbf{A}_i\mathbf{B}_i\alpha=\alpha^\top\mathbf{Q}_{\mathrm{eff}}^{(i)}\alpha$.
Because $\mathbf{A}_i\mathbf{B}_i$ is generally non-symmetric, its
symmetrized effective slice has rank at most $2\kappa$ rather than
$\kappa$, so the factorization spans a broader class than symmetric
rank-$\kappa$ slices. It should therefore be viewed as a scalable
low-rank parameterization of the quadratic term, not as a structural
constraint that enforces a symmetric, cyclic form. Accordingly, $L_{\text{reg,cons}}$ in \cref{eq:reg_2} and the defect
$\eta_Q$ in \cref{eq:cyclic_defect_tensor} are evaluated on the
effective symmetric slices $\{\mathbf{Q}_{\mathrm{eff}}^{(i)}\}_{i=1}^r$,
since the low-rank factorization improves scalability but does not by
itself enforce the cyclic identity.

In summary, SMORE-LRLQ combines three mechanisms that can contribute
to robust latent evolution. The Lyapunov-guided regularizers in
\cref{eq:reg_1,eq:reg_2} promote the certificate conditions of
\Cref{thm:1}. The low-rank quadratic parameterization in
\cref{eq:low_rank} carries no certificate by itself, but it restricts
the capacity of the quadratic term and may act as an implicit
regularizer. Gradient clipping is applied to the decoder and dynamics
parameters (\Cref{alg:smore-tr}) to bound individual updates when gradients
become ill conditioned. Only the first mechanism is tied to the theory
above; the contribution of each is assessed empirically in
\Cref{sec:results_stability_robustness}.

\subsection{Boundedness and Error Bounds for Reconstructed Fields}
\label{sec:decoder_transfer}

The stability criterion above concerns the latent dynamics.
To relate this analysis to the predicted solution, we show that
bounded latent states produce bounded reconstructed fields and
that perturbations in the latent state lead to bounded changes
in the decoder output. The result relies on two properties of the SMORE decoder. Its Fourier filters are bounded, and the latent code enters only through the shift modulation in \cref{eq:film}. Since the MFN applies no nonlinearity to its hidden state other than multiplication by the $\mathbf{x}$-dependent filters, the decoder is therefore affine in the latent code.

\begin{theorem}[Decoder growth and Lipschitz transfer]
\label{thm:decoder_transfer}

Assume $\Omega$ is compact. Since the Fourier filters in \cref{eq:fourier-embedding} are bounded by one, the FiLM-modulated MFN decoder has bounded spatial filters. Then there exist constants $C_D,D_D,L_D\ge0$, depending only on the trained decoder weights and on $\Omega$, such that for all latent codes $\alpha,\beta\in\mathbb R^r$,
\begin{equation}
\label{eq:decoder_growth_method}
    \sup_{\mathbf{x}\in\Omega}
    \left\Vert D_{\Phi_\theta(\alpha)}(\mathbf{x})\right\Vert
    \le
    C_D+D_D\Vert\alpha\Vert,
\end{equation}
and
\begin{equation}
\label{eq:decoder_lipschitz_transfer_method}
    \left\Vert
    D_{\Phi_\theta(\alpha)}
    -
    D_{\Phi_\theta(\beta)}
    \right\Vert_{L^2(\Omega)}
    \le
    L_D\Vert\alpha-\beta\Vert.
\end{equation}
If a latent trajectory satisfies $\Vert\alpha_t-\mathbf m\Vert\le R$ for all $t\in[0,T]$, then for all such $t$ the decoded field is bounded by
\begin{equation}
\label{eq:decoded_field_bound_method}
    \left\Vert D_{\Phi_\theta(\alpha_t)}\right\Vert_{L^2(\Omega)}
    \le
    |\Omega|^{1/2}
    \left[
    C_D+D_D(\Vert\mathbf m\Vert+R)
    \right].
\end{equation}
In particular, under the conditions of \Cref{thm:1}, \cref{eq:app_comparison} supplies $R$ over a rollout horizon $[0,T]$:
$R=\Vert\alpha(0)-\mathbf m\Vert$ if $\mathbf d=\mathbf 0$;
$R=\max\{\Vert\alpha(0)-\mathbf m\Vert,\ \Vert\mathbf d\Vert/\gamma\}$ if $\gamma>0$; and
$R=\Vert\alpha(0)-\mathbf m\Vert+\Vert\mathbf d\Vert T$ if $\gamma=0$.
With the implementation choice $\mathbf m=\mathbf 0$, the first case applies to SMORE-Koopman and the last to SMORE-LRLQ at the target $\gamma_{\mathrm{tgt}}=0$, so marginal latent stability still yields a finite bound on the decoded field over any finite rollout horizon.

\end{theorem}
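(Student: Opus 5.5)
The plan is to use the structure of the FiLM-modulated MFN to show that each component output $D_{\Phi_\theta(z)}(\mathbf{x})$ is an affine function of the component code $z$, with $\mathbf{x}$-dependent coefficients that are uniformly bounded on $\Omega$. Recall the MFN recursion (Appendix~\ref{apdx:mfn}): $h^{(1)}(\mathbf{x})=g^{(1)}(\mathbf{x})$ and $h^{(l+1)}(\mathbf{x})=g^{(l+1)}(\mathbf{x})\odot\big(W^{(l)}h^{(l)}(\mathbf{x})+b^{(l)}\big)$, with output $W^{(L)}h^{(L)}+b^{(L)}$. Here the filters $g^{(l)}$ are sinusoids bounded entrywise by one, as stated in \cref{eq:fourier-embedding}. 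Under shift modulation each $b^{(l)}$ becomes $b^{(l)}+A^{(l)}z$, per \cref{eq:film}.

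I will first check which layers are modulated, and whether the latent code enters only once along each multiplicative path. If only a single layer is modulated, or if modulation enters only additively after the last multiplicative filter, then the output is exactly $a(\mathbf{x})+B(\mathbf{x})z$. Otherwise, products of several modulated biases would create polynomial dependence on $z$. The paper asserts affinity, so I will take the FiLM construction to be such that this holds and verify it against Appendix~\ref{apdx:film}. This verification is the main obstacle: the bound \cref{eq:decoder_growth_method} with linear growth, and the global Lipschitz bound \cref{eq:decoder_lipschitz_transfer_method}, both hinge on it. If several layers were modulated, only local Lipschitz bounds on balls would survive, and I would restate the result on the ball of radius $\Vert\mathbf m\Vert+R$.

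Given the affine form, I will prove $\sup_{\mathbf{x}\in\Omega}|a(\mathbf{x})|\le C_a$ and $\sup_{\mathbf{x}\in\Omega}\Vert B(\mathbf{x})\Vert\le C_B$ by induction over layers. Since $|g^{(l)}_j(\mathbf{x})|\le1$, multiplying by a filter does not increase the sup norm, so $\Vert h^{(l+1)}\Vert_\infty\le\Vert W^{(l)}\Vert\,\Vert h^{(l)}\Vert_\infty+\Vert b^{(l)}\Vert$, and similarly for the coefficient multiplying $z$. These recursions give constants depending only on the weights. Compactness of $\Omega$ is needed only for $|\Omega|<\infty$, and for the sup to be taken over a set on which the filters are defined.

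For the full state, I will stack the components of \cref{eq:fullfield}. This gives $\Vert D_{\Phi_\theta(\alpha)}(\mathbf{x})\Vert\le\sqrt{c}\,C_a+C_B\Vert\alpha\Vert$, using $\sum_q\Vert z^{(q)}\Vert^2=\Vert\alpha\Vert^2$, which establishes \cref{eq:decoder_growth_method}. For the Lipschitz bound, the difference of outputs is $B(\mathbf{x})(z^{(q)}-w^{(q)})$ componentwise. Squaring, integrating over $\Omega$, and summing over $q$ gives \cref{eq:decoder_lipschitz_transfer_method} with $L_D=|\Omega|^{1/2}C_B$.

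The decoded-field bound \cref{eq:decoded_field_bound_method} then follows from $\Vert f\Vert_{L^2}\le|\Omega|^{1/2}\sup\Vert f\Vert$ together with $\Vert\alpha_t\Vert\le\Vert\mathbf m\Vert+R$. For the final clause, I will read off $R$ from \Cref{thm:1}. The case $\mathbf d=\mathbf0$ gives $R=\Vert\alpha(0)-\mathbf m\Vert$ via \cref{eq:hard_exponential_decay}, and the case $\gamma=0$ gives $R=\Vert\alpha(0)-\mathbf m\Vert+\Vert\mathbf d\Vert T$ via \cref{eq:hard_linear_growth}. For $\gamma>0$, I will apply a comparison argument to $\rho=\Vert\mathbf y\Vert$: wherever $\rho>0$, the estimate $\dot V\le\Vert\mathbf d\Vert\rho-\gamma\rho^2$ gives $\dot\rho\le\Vert\mathbf d\Vert-\gamma\rho$. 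Hence $\rho$ cannot exceed $\max\{\rho(0),\Vert\mathbf d\Vert/\gamma\}$, which is the supplied $R$. Specializing to $\mathbf m=\mathbf0$ finishes the statement.
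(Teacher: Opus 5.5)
\textbf{Gap: the affinity of the decoder in $z$ is assumed, and your criterion for it is wrong.} Every later step rests on each component output being affine in $z$. You do not prove this, and your test for it is incorrect. You say that if more than one layer is modulated, ``products of several modulated biases would create polynomial dependence on $z$.'' In the MFN, however, the only multiplication is the Hadamard product with $g(\mathbf{x};w^{(l)})$. That filter depends on $\mathbf{x}$ alone, not on $z$ or on the hidden state, so two $z$-dependent quantities are never multiplied together.

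Each layer map $(h,z)\mapsto\big(W^{(l-1)}h+b^{(l-1)}+M^{(l-1)}z\big)\circ g(\mathbf{x};w^{(l)})$ is therefore jointly affine in $(h,z)$. The induction $h^{(l)}=p_l(\mathbf{x})+P_l(\mathbf{x})z$ then closes with $P_0=0$ and $P_l=\operatorname{diag}\big(g(\mathbf{x};w^{(l)})\big)\big(W^{(l-1)}P_{l-1}+M^{(l-1)}\big)$, however many biases are modulated.

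This is not cosmetic. \Cref{apdx:film} modulates every bias $b^{(0)},\ldots,b^{(L-2)}$. Applying your own test, you would conclude that the decoder is polynomial in $z$ and fall back to bounds on the ball of radius $\Vert\mathbf m\Vert+R$. Those bounds do not give \cref{eq:decoder_growth_method,eq:decoder_lipschitz_transfer_method} with constants valid for all $\alpha,\beta\in\mathbb R^r$. Saying you will ``take the FiLM construction to be such that this holds'' is an assumption, not a proof step. The missing idea is exactly the $z$-independence of the filters. Your base case $h^{(1)}=g^{(1)}$ also drops $b^{(0)}+M^{(0)}z$; the first layer is modulated too.

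\textbf{The remaining steps are correct once the induction is supplied.} They follow the paper closely.
\begin{itemize}
\item Your layer recursion with $|g_j|\le1$ bounds $p_L$ and $P_L$ with explicit constants; the paper instead uses continuity on compact $\Omega$.
\item Your $L_D=|\Omega|^{1/2}C_B$ is the paper's upper bound $L_D\le|\Omega|^{1/2}D_D$. It is coarser than the attained constant $\lambda_{\max}(G)^{1/2}$, with $G=\int_\Omega J_0^\top J_0\,d\mathbf{x}$, but it suffices for existence.
\item Your comparison argument for $\gamma>0$ matches the paper's observation that the right-hand side of \cref{eq:app_comparison} is a convex combination of $\Vert\mathbf y(0)\Vert$ and $\Vert\mathbf d\Vert/\gamma$.
\end{itemize}
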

\begin{remark}[Explicit constants]
Since the decoder is affine in the latent code, for each component
$D_{\Phi_\theta(z)}(\mathbf{x})=c_0(\mathbf{x})+J_0(\mathbf{x})z$ with
$J_0(\mathbf{x})\in\mathbb{R}^{1\times p}$ independent of $z$, and hence
$D_{\Phi_\theta(\alpha)}(\mathbf{x})=c_0(\mathbf{x})\mathbf{1}_c+(\mathbf{I}_c\otimes J_0(\mathbf{x}))\alpha$.
The constants in \Cref{thm:decoder_transfer} can therefore be taken as
$C_D=\sqrt{c}\,\sup_{\mathbf{x}\in\Omega}|c_0(\mathbf{x})|$,
$D_D=\sup_{\mathbf{x}\in\Omega}\Vert J_0(\mathbf{x})\Vert$, and
$L_D=\lambda_{\max}(G)^{1/2}$ with
$G=\int_\Omega J_0(\mathbf{x})^\top J_0(\mathbf{x})\,d\mathbf{x}$.
The Lipschitz bound \cref{eq:decoder_lipschitz_transfer_method} is then
attained, and $L_D\le|\Omega|^{1/2}D_D$.
\end{remark}

\noindent \emph{The proof is provided in Appendix~\ref{apdx:decoder_transfer}.}

\subsection{Sparse-to-Rollout Inference from Partial Initial Measurements}
\label{sec:sparse_to_rollout_theory}

Sparse initialization estimates a low-dimensional latent state
rather than an independent value at every grid point.
A small fraction of the initial field can therefore provide
enough information for recovery, provided that the observed
locations distinguish the relevant latent states.
Once the initial state is inferred, the learned dynamics
predicts its evolution and the decoder reconstructs the future
solution fields. We first characterize when the observations
identify the initial latent state, then bound the resulting
prediction error.

Let $\mathcal S=\{\mathbf{x}_1,\ldots,\mathbf{x}_s\}\subset\Omega$ denote the observed spatial locations in the first snapshot, i.e., $\mathcal{S} = \Omega_{N_x}^{\mathrm{init}}$. Define the sparse measurement operator and the latent-to-sensor map as
\begin{equation}
\label{eq:sparse_sensor_map_method}
    P_{\mathcal S}u=
    \begin{bmatrix}
    u(\mathbf{x}_1)\\ \vdots\\ u(\mathbf{x}_s)
    \end{bmatrix},
    \qquad
    G_{\mathcal S}(\alpha)=P_{\mathcal S}D_{\Phi_\theta(\alpha)} .
\end{equation}

At test time, given only $y_{\mathrm{obs}}=P_{\mathcal S}u_0+\xi$, the initial latent state is inferred by fitting the decoder to the sparse measurements,
\begin{equation}
\label{eq:sparse_autodecoding_method}
    \widehat\alpha_0
    \approx
    \argmin_{\alpha\in\mathcal B}
    \left\Vert G_{\mathcal S}(\alpha)-y_{\mathrm{obs}}\right\Vert^2,
\end{equation}
where $\mathcal B\subset\mathbb R^r$ is the latent region explored during inference. The optimization slack $\delta_{\mathrm{opt}}\ge0$ bounds how much the residual norm of the computed fit exceeds the smallest achievable residual norm on $\mathcal B$,
\[
\Vert G_{\mathcal S}(\widehat\alpha_0)-y_{\mathrm{obs}}\Vert
\le
\inf_{\alpha\in\mathcal B}
\Vert G_{\mathcal S}(\alpha)-y_{\mathrm{obs}}\Vert
+\delta_{\mathrm{opt}},
\]
and accounts for incomplete optimization, separately from measurement noise and decoder representation error. 

With the decoder fixed, changing the observation mask changes
the fitting objective but requires no retraining.
The following lemma quantifies how sensitively the observed
values respond to changes in the latent state.

\begin{lemma}[Sensor Jacobian and restricted observability]
\label{lem:sensor_jacobian}

Let $c$ denote the number of field components. Define the sensor Jacobian $J_{\mathcal S}(\alpha)=P_{\mathcal S}\,\partial_\alpha D_{\Phi_\theta(\alpha)}\in\mathbb R^{(s c)\times r}$. Here we use the lower singular value
\[
    \sigma_{\min}(J)
    :=
    \inf_{\Vert v\Vert=1}\Vert Jv\Vert
    =
    \lambda_{\min}(J^\top J)^{1/2}.
\]
Then:
\begin{enumerate}
\item if $s c<r$, then $\sigma_{\min}(J_{\mathcal S}(\alpha))=0$ and the latent code is not identifiable to first order from $\mathcal S$;
\item enlarging a nested observed set $\mathcal S$ appends rows to $J_{\mathcal S}$, so $\sigma_{\min}(J_{\mathcal S}(\alpha))$ is nondecreasing in $\mathcal S$;
\item since the decoder is affine in the latent code (\Cref{sec:decoder_transfer}), $J_{\mathcal S}$ does not depend on $\alpha$ and $G_{\mathcal S}(\alpha)-G_{\mathcal S}(\beta)=J_{\mathcal S}(\alpha-\beta)$. Hence
\begin{equation}
\label{eq:local_observability}
    \left\Vert G_{\mathcal S}(\alpha)-G_{\mathcal S}(\beta)\right\Vert
    \ge
    \sigma_{\min}(J_{\mathcal S})\Vert\alpha-\beta\Vert
\end{equation}
for all $\alpha,\beta\in\mathbb R^r$, so the restricted-observability constant in Appendix~\ref{apdx:sparse_recovery} can be taken as $c_{\mathcal S}=\sigma_{\min}(J_{\mathcal S})$ with $\mathcal B=\mathbb R^r$, and no larger constant is possible.
\end{enumerate}

\end{lemma}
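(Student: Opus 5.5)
The three parts are elementary linear-algebra facts about the matrix $J_{\mathcal S}$, so we treat them in order, using the variational definition $\sigma_{\min}(J)=\inf_{\Vert v\Vert=1}\Vert Jv\Vert$ throughout.

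For part 1, note that $J_{\mathcal S}(\alpha)\in\mathbb R^{(sc)\times r}$ has rank at most $sc$. If $sc<r$, rank--nullity gives a unit vector $v$ with $J_{\mathcal S}(\alpha)v=0$. Hence the infimum is zero, equivalently $J^\top J$ is singular and $\lambda_{\min}(J^\top J)=0$. First-order non-identifiability is then immediate: the linearization $G_{\mathcal S}(\alpha+\epsilon v)=G_{\mathcal S}(\alpha)+\epsilon J_{\mathcal S}(\alpha)v+o(\epsilon)$ is unchanged to first order along $v$.

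For part 2, take $\mathcal S\subseteq\mathcal S'$ and order the sensors so that $P_{\mathcal S'}u$ stacks $P_{\mathcal S}u$ on top of the extra samples. Then
\[
J_{\mathcal S'}=\begin{bmatrix}J_{\mathcal S}\\ J_{\mathcal S'\setminus\mathcal S}\end{bmatrix},
\qquad
\Vert J_{\mathcal S'}v\Vert^2=\Vert J_{\mathcal S}v\Vert^2+\Vert J_{\mathcal S'\setminus\mathcal S}v\Vert^2\ge\Vert J_{\mathcal S}v\Vert^2 .
\]
Taking the infimum over unit $v$ gives monotonicity. Row reordering does not matter, since it is an orthogonal permutation that preserves norms.

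Part 3 carries the substance, and it rests on the affine structure from \Cref{sec:decoder_transfer}. By the Remark after \Cref{thm:decoder_transfer}, $D_{\Phi_\theta(\alpha)}(\mathbf x)=c_0(\mathbf x)\mathbf 1_c+(\mathbf I_c\otimes J_0(\mathbf x))\alpha$. This holds because FiLM shift modulation enters linearly and the MFN hidden recursion only multiplies by $\alpha$-independent filters plus the linear layers, so no $\alpha$-products arise. Differentiating gives $\partial_\alpha D=\mathbf I_c\otimes J_0(\mathbf x)$, which is independent of $\alpha$. Stacking over the sensors gives a constant $J_{\mathcal S}$, and exact linearity yields $G_{\mathcal S}(\alpha)-G_{\mathcal S}(\beta)=J_{\mathcal S}(\alpha-\beta)$ with no remainder.

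The lower bound \cref{eq:local_observability} then follows by writing $\alpha-\beta=\Vert\alpha-\beta\Vert v$ with $\Vert v\Vert=1$ (the case $\alpha=\beta$ is trivial). Since this holds globally, the restricted-observability constant on $\mathcal B=\mathbb R^r$ can be taken as $\sigma_{\min}(J_{\mathcal S})$. For optimality, choose $v$ to be a unit eigenvector of $J^\top J$ for $\lambda_{\min}$ and set $\alpha=\beta+v$; this attains equality, so no larger constant works.

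The only step needing care is justifying the affinity of the MFN in $z$. That reduces to checking the FiLM/MFN recursion in the appendices, and I expect it to be the main point to verify rather than an obstacle.
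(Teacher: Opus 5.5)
Your proof is correct and follows the paper's argument essentially step for step: rank--nullity for part 1, and for part 3 the affine decoder structure plus the variational definition of $\sigma_{\min}$, with equality attained at a bottom right singular vector. The only difference is cosmetic and sits in part 2: you split $\Vert J_{\mathcal S'}v\Vert^2$ into a sum of squared norms, whereas the paper states the same fact as the Loewner update $J_{\mathcal S}^\top J_{\mathcal S}+vv^\top\succeq J_{\mathcal S}^\top J_{\mathcal S}$, and the two formulations are equivalent.
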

\begin{proof}
For (1), $J_{\mathcal S}$ has $s c<r$ rows and hence a nontrivial right null space, along which $G_{\mathcal S}$ is first-order insensitive. For (2), appending a row $v^\top$ gives $J_{\mathcal S}'^\top J_{\mathcal S}'=J_{\mathcal S}^\top J_{\mathcal S}+v v^\top\succeq J_{\mathcal S}^\top J_{\mathcal S}$, so no eigenvalue of $J_{\mathcal S}^\top J_{\mathcal S}$, and in particular the lower singular value, can decrease. For (3), the identity follows from the affine form of the decoder, and the bound follows from the definition of $\sigma_{\min}$, with equality when $\alpha-\beta$ is a corresponding right singular vector.
\end{proof}
\begin{remark}
At $10\%$ visibility in the reported experiments, the number
of observed scalar values $sc$ exceeds the latent dimension
$r$, so the fitting problem is overdetermined by equation
count. Recovery still depends on the information carried by those observations,
since redundant measurements or weak sensitivity to some latent
directions can make the problem ill conditioned. The smallest singular
value of the sensor Jacobian provides a diagnostic of this
sensitivity.
\end{remark}
The following certificate, adapted from the sparse-measurement and rollout estimates in Appendix~\ref{apdx:sparse_recovery}, states conditions under which sparse initialization leads to accurate future full-field prediction.

\begin{theorem}[Sparse-to-rollout full-field certificate]
\label{thm:sparse_to_rollout}
Let $\sigma$ bound the measurement noise, $\|\xi\|\le\sigma$, let $\varepsilon_{\mathrm{sens}}$ bound the decoder representation error of the initial state at the sensors, and let $c_{\mathcal S}=\sigma_{\min}(J_{\mathcal S})$ be the observability constant from \Cref{lem:sensor_jacobian}. Assume the sparse-recovery hypotheses stated in Appendix~\ref{apdx:sparse_recovery} hold on $\mathcal B$, and define
\begin{equation}
\label{eq:e_sparse_method}
    e_{\mathrm{sparse}}
    =
    \frac{2(\varepsilon_{\mathrm{sens}}+\sigma)+\delta_{\mathrm{opt}}}{c_{\mathcal S}}.
\end{equation}
Let $\widehat\alpha_n$ be the latent rollout initialized from $\widehat\alpha_0$, and let $\alpha_n^\star$ be an ideal latent trajectory representing the target field $u_n$, with $\alpha_0^\star=\alpha_0$ from the sparse-recovery statement. Suppose that the latent rollout error $E_n=\Vert\widehat\alpha_n-\alpha_n^\star\Vert$ satisfies, for $n=0,\ldots,N-1$ over a finite horizon under consideration $0\le n\le N$,
\begin{equation}
\label{eq:sparse_rollout_recursion_method}
    E_{n+1}\le aE_n+b_{\mathrm{roll}},
    \qquad a\ge0,\quad b_{\mathrm{roll}}\ge0.
\end{equation}
Let $L_D$ be the Lipschitz constant of the decoder from \Cref{thm:decoder_transfer}, and assume that the target fields have decoder representation error
\begin{equation}
\label{eq:decoder_rep_error_method}
    \left\Vert D_{\Phi_\theta(\alpha_n^\star)}-u_n\right\Vert_{L^2(\Omega)}
    \le
    \varepsilon_{\mathrm{dec}}.
\end{equation}
Then the decoded full-field rollout satisfies
\begin{equation}
\label{eq:sparse_to_rollout_bound_method}
    \left\Vert
    D_{\Phi_\theta(\widehat\alpha_n)}-u_n
    \right\Vert_{L^2(\Omega)}
    \le
    \varepsilon_{\mathrm{dec}}
    +
    L_D
    \left(
    a^n e_{\mathrm{sparse}}
    +
    b_{\mathrm{roll}}\sum_{\ell=0}^{n-1}a^\ell
    \right),
\end{equation}
for every $0\le n\le N$. If the recurrence and decoder bounds hold uniformly for all $n\ge0$ and $0\le a<1$, then
\begin{equation}
\label{eq:sparse_to_rollout_limsup_method}
    \limsup_{n\to\infty}
    \left\Vert
    D_{\Phi_\theta(\widehat\alpha_n)}-u_n
    \right\Vert_{L^2(\Omega)}
    \le
    \varepsilon_{\mathrm{dec}}
    +
    \frac{L_D b_{\mathrm{roll}}}{1-a}.
\end{equation}

\end{theorem}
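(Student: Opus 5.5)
The proof has three steps. First I establish the initial-state recovery bound $E_0=\Vert\widehat\alpha_0-\alpha_0\Vert\le e_{\mathrm{sparse}}$. Next I unroll the recursion \cref{eq:sparse_rollout_recursion_method}. Finally I transfer the latent error to the decoded field with the triangle inequality and the Lipschitz bound of \Cref{thm:decoder_transfer}.

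\emph{Step 1: recovery of the initial latent state.} Take $\alpha_0\in\mathcal B$ to be the latent code from the sparse-recovery hypotheses of Appendix~\ref{apdx:sparse_recovery}. It represents $u_0$ at the sensors with $\Vert G_{\mathcal S}(\alpha_0)-P_{\mathcal S}u_0\Vert\le\varepsilon_{\mathrm{sens}}$. Since $\Vert\xi\Vert\le\sigma$, the triangle inequality gives
\[
\Vert G_{\mathcal S}(\alpha_0)-y_{\mathrm{obs}}\Vert\le\varepsilon_{\mathrm{sens}}+\sigma .
\]
Because $\alpha_0\in\mathcal B$, the infimum of the residual over $\mathcal B$ is at most this quantity. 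The optimization-slack definition then yields
\[
\Vert G_{\mathcal S}(\widehat\alpha_0)-y_{\mathrm{obs}}\Vert\le\varepsilon_{\mathrm{sens}}+\sigma+\delta_{\mathrm{opt}} .
\]
Another triangle inequality through $y_{\mathrm{obs}}$ gives
\[
\Vert G_{\mathcal S}(\widehat\alpha_0)-G_{\mathcal S}(\alpha_0)\Vert\le 2(\varepsilon_{\mathrm{sens}}+\sigma)+\delta_{\mathrm{opt}} .
\]
Applying the observability inequality \cref{eq:local_observability} of \Cref{lem:sensor_jacobian} with $c_{\mathcal S}=\sigma_{\min}(J_{\mathcal S})>0$, and dividing by $c_{\mathcal S}$, gives $E_0\le e_{\mathrm{sparse}}$. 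This requires $c_{\mathcal S}>0$, which I take to be part of the appendix hypotheses.

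\emph{Step 2: unrolling the recursion.} Induction on \cref{eq:sparse_rollout_recursion_method} gives
\[
E_n\le a^nE_0+b_{\mathrm{roll}}\sum_{\ell=0}^{n-1}a^\ell \qquad (0\le n\le N).
\]
The inductive step uses $a\ge0$, so that multiplying the hypothesis by $a$ preserves the inequality. Substituting $E_0\le e_{\mathrm{sparse}}$ and using $a^n\ge0$ gives the latent bound $E_n\le a^n e_{\mathrm{sparse}}+b_{\mathrm{roll}}\sum_{\ell=0}^{n-1}a^\ell$.

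\emph{Step 3: transfer to the decoded field.} Split
\[
D_{\Phi_\theta(\widehat\alpha_n)}-u_n=\bigl(D_{\Phi_\theta(\widehat\alpha_n)}-D_{\Phi_\theta(\alpha_n^\star)}\bigr)+\bigl(D_{\Phi_\theta(\alpha_n^\star)}-u_n\bigr).
\]
The first term is at most $L_DE_n$ in $L^2(\Omega)$ by \cref{eq:decoder_lipschitz_transfer_method}. The second term is at most $\varepsilon_{\mathrm{dec}}$ by \cref{eq:decoder_rep_error_method}. Combining these with the latent bound from Step 2 gives \cref{eq:sparse_to_rollout_bound_method}.

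For the limsup statement, assume $0\le a<1$ and that the bounds hold for all $n$. Then $a^n e_{\mathrm{sparse}}\to0$, and $\sum_{\ell=0}^{n-1}a^\ell\le 1/(1-a)$. Taking $\limsup$ in \cref{eq:sparse_to_rollout_bound_method} gives \cref{eq:sparse_to_rollout_limsup_method}.

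The only delicate step is Step 1. There I must confirm that the appendix hypotheses provide a representing code $\alpha_0$ inside $\mathcal B$ and a strictly positive $c_{\mathcal S}$. I must also confirm that the same $\alpha_0$ serves as $\alpha_0^\star$, so that $E_0$ is measured against the start of the ideal trajectory; the statement asserts this identification explicitly. The remaining steps are routine.
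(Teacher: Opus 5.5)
Your proposal is correct and follows the paper's proof essentially step for step. The paper likewise gets $E_0\le e_{\mathrm{sparse}}$ from the sparse-recovery theorem, whose proof is exactly your Step 1 triangle-inequality argument, then unrolls the recursion by induction using $a\ge0$, and transfers the latent error to the field via the triangle inequality with $L_D$ and $\varepsilon_{\mathrm{dec}}$. The one cosmetic difference is that you invoke the global observability inequality of \Cref{lem:sensor_jacobian} rather than the restricted-observability hypothesis on $\mathcal B$, which is equivalent here because the decoder is affine in the latent code.
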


\noindent The proof is provided in Appendix~\ref{apdx:sparse_to_rollout_proof}.

\begin{remark}

\Cref{thm:sparse_to_rollout} gives the theoretical counterpart of the sparse-mask experiment. The visible fraction of the first snapshot affects the
restricted-observability constant $c_{\mathcal S}$, as fewer sensors can
make the inverse problem less identifiable and increase
$e_{\mathrm{sparse}}$. The latent dynamics then determines how this initial error propagates through $a$ and $b_{\mathrm{roll}}$, while the decoder Lipschitz constant $L_D$ transfers latent error into full-field error. The recurrence in \cref{eq:sparse_rollout_recursion_method} is a local or finite-horizon rollout assumption rather than an automatic consequence of \Cref{thm:1}; it can be justified by a locally contractive learned flow or checked empirically on the trained model. Under these explicit assumptions, a model that sees only a sparse initial snapshot can still reconstruct the full future field accurately.

\end{remark}

\subsection{Scheduled Sampling}
\label{sec:scheduled_sampling}
Following Yin et al.~\cite{Yin2022Continuous}, we adopt \emph{scheduled sampling}
during latent dynamics training to stabilize the early stage of optimization.
Before the dynamics pass, the latent states $\bm{\alpha}_t$ are updated by
minimizing the reconstruction loss through the modulated INR decoder $D_{\Phi_\theta(\alpha)}$. The
post-update latent states $\bm{\alpha}^{*}_{t}$ are then detached and used as
the target latent trajectory for latent-dynamics training.
Starting from the initial target latent state $\bm{\alpha}^{*}_{t_1}$, the
dynamics network rolls out the latent trajectory over time. Each intermediate
time point $t_i$ is independently selected as a \emph{teacher-forcing anchor}
with probability $\varepsilon_t$. Whenever the rollout reaches such an anchor,
the current predicted latent state is reset to the corresponding target latent
state $\bm{\alpha}^{*}_{t_i}$; otherwise, the dynamics network continues
autoregressively from its own previous prediction. After the rollout is
completed, the predicted latent trajectory is compared with the target latent
trajectory for latent-dynamics training.

The teacher-forcing probability $\varepsilon_t$ decays exponentially during
training. Early in training, frequent resets divide the trajectory
into many short autoregressive segments and limit the accumulation of rollout
errors. Later in training, anchors become increasingly sparse, and the dynamics
network is forced to predict longer latent trajectories without intermediate
target-latent resets. In the limit, the dynamics model produces a fully
autoregressive latent rollout initialized only at
$\bm{\alpha}^{*}_{t_1}$. This approaches the test-time latent rollout setting,
where the initial latent state is inferred from the initial snapshot and no
intermediate target latent states are available. This schedule also changes the interaction between the decoder pass and
the dynamics model. In the early phase, the dynamics model frequently
consumes target latent states obtained from the current reconstruction
pass, whereas later training relies increasingly on latent states
generated by the dynamics model itself. Additional implementation
details are provided in Appendix~\ref{apdx:scheduled_sampling}.

\subsection{Pointwise Decoder Minibatching}
\label{sec:pointwise_minibatching}

We introduce a pointwise minibatching mechanism to address potential GPU memory shortages. The coordinate-based structure of the INR decoder allows spatial query
locations to be processed independently for a fixed latent state.
This property can be exploited to reduce the peak memory required by the
decoder and autodecoding updates at high spatial resolutions.
Consider a training minibatch indexed by $\mathcal{I}$, and let
$\mathcal{M}$ denote the observed spatial-sample mask used for autodecoding, i.e.,
$\Omega_{N_x}$ during training and $\Omega_{N_x}^{\mathrm{init}}$ during test-time initial latent fitting. The full decoder reconstruction loss is
\begin{equation}
\label{eq:full_decoder_loss}
L_{\mathrm{dec}}
=
\mathrm{MSE}\!\left(
D_{\Phi_{\theta}(\alpha_{i,l})}
(\mathbf{x}_{i,\mathcal{M}}),
\mathbf{u}_{i,l}[\mathcal{M}]
\right),
\qquad
i\in\mathcal{I},\quad
l=1,\ldots,N_{\mathrm{train}},
\end{equation}
where the MSE is taken over all trajectories, training snapshots,
observed spatial locations, and field components in the minibatch.
Evaluating \cref{eq:full_decoder_loss} over the entire spatial mask at
once requires retaining decoder activations for all queried locations,
which can become the dominant memory cost as the spatial resolution
increases.
To reduce this cost, we partition the observed spatial mask into
disjoint pointwise minibatches,
\begin{equation}
\label{eq:pointwise_partition}
\mathcal{M}
=
\bigcup_{k=1}^{N_c}\mathcal{C}_k,
\qquad
\mathcal{C}_k \cap \mathcal{C}_{k'}=\varnothing
\quad \text{for } k\neq k',
\qquad
|\mathcal{C}_k|\le B_{\mathrm{PM}},
\end{equation}
where $B_{\mathrm{PM}}$ denotes the maximum number of spatial coordinates processed
in one pointwise minibatch.
For each spatial chunk, we define
\begin{equation}
\label{eq:pointwise_decoder_loss}
L_{\mathrm{dec}}^{(k)}
=
\frac{|\mathcal{C}_k|}{|\mathcal{M}|}
\mathrm{MSE}\!\left(
D_{\Phi_{\theta}(\alpha_{i,l})}
(\mathbf{x}_{i,\mathcal{C}_k}),
\mathbf{u}_{i,l}[\mathcal{C}_k]
\right),
\end{equation}
where the MSE is taken over the same trajectory, time, and field-component
indices as in \cref{eq:full_decoder_loss}. Since the spatial chunks form
a disjoint partition of $\mathcal{M}$,
\begin{equation}
\label{eq:pointwise_loss_equivalence}
L_{\mathrm{dec}}
=
\sum_{k=1}^{N_c} L_{\mathrm{dec}}^{(k)}.
\end{equation}
Consequently, the corresponding gradients also satisfy
\begin{equation}
\label{eq:pointwise_gradient_equivalence}
\nabla L_{\mathrm{dec}}
=
\sum_{k=1}^{N_{\mathrm{c}}}
\nabla L_{\mathrm{dec}}^{(k)}.
\end{equation}
Therefore, the spatial chunks can be evaluated and backpropagated
sequentially while accumulating their gradients. The computation graph
associated with each chunk can be released before the next chunk is
processed, so the peak decoder activation memory is governed primarily
by the pointwise minibatch size $B_{\mathrm{PM}}$ rather than by the total number of
observed spatial locations $|\mathcal{M}|$. Pointwise
minibatching does not subsample the observed field or modify the
reconstruction objective; all spatial samples are still used in each
decoder update. The resulting memory reduction comes at the cost of
additional computation, and the result agrees with full-grid evaluation up to floating-point round-off. The same partition applies to test-time initial latent fitting, where only the initial latent state is updated, and to field reconstruction for evaluation, which requires no gradients. Its integration into the complete SMORE training procedure is summarized in
\Cref{alg:smore-pbsw}.

\subsection{Overall Workflow}
\label{sec:overall_workflow}

The SMORE learning procedure consists of two alternating stages, a
training stage using the training set $\mathcal{D}_{tr}$ and an
evaluation stage using the test set $\mathcal{D}_{te}$. The training
stage updates the latent representations, decoder, and latent dynamics,
whereas the evaluation stage keeps the learned networks fixed and is used
to assess the current model and select the checkpoint retained for final
evaluation.
Let $N_{\mathrm{train}}$ denote the number of snapshots from each trajectory
used during the training stage, including the initial snapshot, and let
$N_{\mathrm{roll}}$ denote the total number of snapshots in a complete
trajectory.
Thus, each training trajectory contains $N_{\mathrm{roll}}$ snapshots in
total, but only its first $N_{\mathrm{train}}$ snapshots are used for model
training.
As summarized in \cref{fig:overview}, SMORE consists of an INR-based decoder and a structured latent-dynamics model. \cref{fig:subfig1_1} illustrates the overall training and inference workflow, \cref{fig:subfig1_2} summarizes the two latent-dynamics formulations, and \cref{fig:subfig1_3} highlights the Lyapunov-guided stability regularization and coordinate-based decoding. The overall learning procedure is summarized as follows:

\begin{enumerate}

\item
Before training begins, each trajectory in $\mathcal{D}_{tr}$ is assigned a
sequence of trainable latent states corresponding to the
$N_{\mathrm{train}}$ snapshots used in the training window.
These latent states are stored in the latent-state table
$\mathcal{Z}$.
Snapshots outside this training window are not assigned trainable latent
states and do not participate in the training objective.
Consequently, although each complete trajectory contains
$N_{\mathrm{roll}}$ snapshots, only the first $N_{\mathrm{train}}$
snapshots contribute to the latent-state table and the subsequent parameter
updates.

\item
During training, the latent-state table ($\mathcal{Z}$), the decoder parameters ($\theta$), and the latent-dynamics parameters ($\psi$) are all updated.
For each minibatch of training trajectories, the corresponding latent states
are retrieved from $\mathcal{Z}$.
The latent states and the decoder are jointly optimized through the
reconstruction loss so that
$D_{\Phi_{\theta}(\alpha)}$
matches the observed physical fields at the sampled spatial locations.

After the latent-state update, the resulting latent states are retrieved again
from $\mathcal{Z}$ and detached from the reconstruction computation graph.
They are then treated as the target latent trajectory for learning the
dynamics.
For each training trajectory, the latent dynamics starts from the latent state
associated with the initial snapshot and rolls forward through the remaining
$N_{\mathrm{train}}-1$ training steps.
Scheduled sampling is used during the early stage of training, allowing
selected target latent states to serve as intermediate rollout anchors.
As training progresses, the teacher-forcing probability decreases and the
dynamics increasingly relies on its own predicted latent states.

The predicted latent trajectory is compared with the detached target latent
trajectory through the latent-dynamics loss, which provides the learning
signal for updating $f_{\psi}$.
The dynamics objective is additionally augmented by the stability
regularization terms defined in \cref{eq:reg_1,eq:reg_2}, with the applicable
terms determined by the selected latent-dynamics model.
The decoder and latent-dynamics gradients are accumulated over the training
minibatches and their parameters are updated according to the optimization
schedule summarized in \Cref{alg:smore-tr}.

\item
The evaluation stage does not update the latent-state table, decoder, or
latent-dynamics parameters.
Instead, the current decoder and dynamics networks are fixed and the initial
latent state of each trajectory in $\mathcal{D}_{te}$ is inferred from its
initial snapshot through autodecoding.
Only this initial latent state is optimized during the inference procedure;
the learned network parameters remain unchanged.

Starting from the inferred initial latent state, the latent dynamics is rolled
out for $N_{\mathrm{roll}}-1$ prediction steps.
The decoder then reconstructs the corresponding physical fields over the
complete rollout horizon.
The prediction error is computed over the predicted snapshots, excluding the
initial snapshot.
This evaluation provides the criterion used for checkpoint selection.
Whenever the finite overall test MSE improves over the previously recorded
value, the current decoder and latent-dynamics parameters are stored as the
checkpoint with the lowest test error.

\item
The training and evaluation stages are repeated throughout the prescribed
learning process.
After training is completed, the checkpoint with the lowest test error is loaded and both
networks are frozen for the final evaluation.
For each trajectory in $\mathcal{D}_{te}$, the initial latent state is
re-inferred from the initial-snapshot observations using the prescribed final
fitting budget.
The resulting latent state is then propagated for
$N_{\mathrm{roll}}-1$ prediction steps, and the decoder reconstructs the
corresponding physical fields.
The final reported MSEs are computed over these predicted snapshots, with the
initial snapshot excluded from the prediction metric.

\end{enumerate}

The complete training and evaluation procedure is summarized in
\Cref{alg:smore-tr}. 

\begin{breakablealgorithm}
\caption{SMORE training workflow with final test evaluation}
\label{alg:smore-tr}
\begin{algorithmic}

\Inputs{
- $\mathcal{D}_{tr}$:
training trajectories restricted to their first $N_{\mathrm{train}}$ snapshots
\Statex - $\mathcal{D}_{te}$:
complete test trajectories used for checkpoint selection and final evaluation
\Statex - $N_{\mathrm{train}}$:
number of snapshots from each training trajectory used in the training stage,
including the initial snapshot
\Statex - $N_{\mathrm{roll}}$:
total number of snapshots in each complete trajectory,
including the initial snapshot
\Statex - $\mathcal{T}_{N_{\mathrm{train}}}
=\{t_1,\ldots,t_{N_{\mathrm{train}}}\}$:
time stamps corresponding to the training window
\Statex - $\mathcal{M}$:
observed spatial-sample mask used for autodecoding
\Statex - $D_{\Phi_{\theta}}$:
INR decoder with FiLM modulation
\Statex - $f_{\psi}$:
latent dynamics model

\Statex - $N_e$:
number of training epochs
\Statex - $N_b$:
number of minibatches per epoch
\Statex - $N_{\mathrm{traj}}$: number of training trajectories
\Statex - $\mathcal{Z}\in
\mathbb{R}^{N_{\mathrm{traj}}\times N_{\mathrm{train}}\times r}$:
trainable latent-state table indexed by training trajectory and time
\Statex - $\beta_{\mathrm{eig}},\beta_{\mathrm{cons}}$:
stability-regularization weights
\Statex - $\varepsilon_0,\rho_{\mathrm{TF}}$:
initial teacher-forcing probability and decay factor
\Statex - $K_{\mathrm{eval}}$:
initial latent fitting budget for training-time evaluation,
specified in \Cref{sec:initial_latent_evaluation_protocol}
\Statex - $K_{\mathrm{final}}$:
initial latent fitting budget for final test evaluation
}

\Initialize{
- Initialize decoder parameters $\theta$, dynamics parameters $\psi$,
and latent-state table $\mathcal{Z}$
\Statex - Initialize three optimizers
$\mathcal{O}_{\mathcal{Z}}$, $\mathcal{O}_{\theta}$, and
$\mathcal{O}_{\psi}$ for $\mathcal{Z}$, $\theta$, and $\psi$, respectively
\Statex - $\varepsilon_t \leftarrow \varepsilon_0$;
best test MSE $\leftarrow +\infty$
\Statex - $\mathcal{O}_{\mathcal{Z}}$ steps after each minibatch;
$\mathcal{O}_{\theta}$ and $\mathcal{O}_{\psi}$ step once per epoch
after gradient accumulation
}

\For{$\text{epoch} \leftarrow 1$ to $N_e$}

\State
$\mathcal{O}_{\theta}.\texttt{zero\_grad}()$;
$\mathcal{O}_{\psi}.\texttt{zero\_grad}()$

\For{batch $\mathcal{D}_{tr,\mathcal{I}}$ in $\mathcal{D}_{tr}$}

\State
$\{\mathbf{u}_{i,l},\mathbf{x}_i,
\mathcal{T}_{N_{\mathrm{train}}}\}_{i\in\mathcal{I},\,l=1}^{N_{\mathrm{train}}}
\leftarrow \mathcal{D}_{tr,\mathcal{I}}$

\State
$\{\alpha_{i,l}\}_{i\in\mathcal{I},\,l=1}^{N_{\mathrm{train}}}
\leftarrow \texttt{gather}(\mathcal{Z},\mathcal{I})$
\Comment{retrieve latent states for this batch}

\State
$L_{\mathrm{dec}} \leftarrow
\mathrm{MSE}\!\left(
D_{\Phi_{\theta}(\alpha_{i,l})}(\mathbf{x}_{i,\mathcal{M}}),
\mathbf{u}_{i,l}[\mathcal{M}]
\right)$
over $i\in\mathcal{I}$ and $l=1,\ldots,N_{\mathrm{train}}$

\State
$\mathcal{O}_{\mathcal{Z}}.\texttt{zero\_grad}()$;
backpropagate $L_{\mathrm{dec}}$

\State
$\mathcal{O}_{\mathcal{Z}}.\texttt{step}()$
\Comment{update selected latent states; retain accumulated decoder gradients}

\State
$\{\alpha_{i,l}\}_{i\in\mathcal{I},\,l=1}^{N_{\mathrm{train}}}
\leftarrow \texttt{gather}(\mathcal{Z},\mathcal{I})$
\Comment{re-read post-update latent states}

\State
$\alpha^{*}_{i,l}\leftarrow\texttt{detach}(\alpha_{i,l})$
for $i\in\mathcal{I}$ and $l=1,\ldots,N_{\mathrm{train}}$
\Comment{construct target latent trajectories}

\State
$\{\hat{\alpha}_{i,l}\}_{i\in\mathcal{I},\,l=1}^{N_{\mathrm{train}}}
\leftarrow
\texttt{scheduled\_sampling\_ode\_solve}
\left(
f_\psi,
\{\alpha^{*}_{i,l}\}_{l=1}^{N_{\mathrm{train}}},
\mathcal{T}_{N_{\mathrm{train}}},
\varepsilon_t
\right)$
\Comment{roll out latent dynamics with scheduled sampling}

\State
$L_{\alpha} \leftarrow
\mathrm{MSE}\!\left(
\{\hat{\alpha}_{i,l}\}_{i\in\mathcal{I},\,l=1}^{N_{\mathrm{train}}},
\{\alpha^*_{i,l}\}_{i\in\mathcal{I},\,l=1}^{N_{\mathrm{train}}}
\right)$

\If{the latent dynamics is linear-quadratic}

\State
$L_{\mathrm{dyn}} \leftarrow
L_{\alpha}+\big(\beta_{\mathrm{eig}} L_{\mathrm{reg,eig}}
+ \beta_{\mathrm{cons}} L_{\mathrm{reg,cons}}\big)/N_b$
\Comment{
$L_{\mathrm{reg,eig}}$ and $L_{\mathrm{reg,cons}}$
are defined in \cref{eq:reg_1,eq:reg_2}
}

\ElsIf{the latent dynamics is linear}

\State
$L_{\mathrm{dyn}} \leftarrow
L_{\alpha}+\beta_{\mathrm{eig}} L_{\mathrm{reg,eig}}/N_b$
\Comment{
$L_{\mathrm{reg,eig}}$ is defined in \cref{eq:reg_1}
}

\Else

\State
$L_{\mathrm{dyn}} \leftarrow L_{\alpha}$

\EndIf

\State
Backpropagate $L_{\mathrm{dyn}}$
\Comment{
accumulate dynamics gradients; regularizers that depend only on the dynamics parameters are divided by $N_b$ so that they enter once per epoch
}

\EndFor

\State
Apply gradient clipping to decoder and dynamics parameters

\State
$\mathcal{O}_{\theta}.\texttt{step}()$;
$\mathcal{O}_{\psi}.\texttt{step}()$

\State
$\varepsilon_t \leftarrow \rho_{\mathrm{TF}}\,\varepsilon_t$

\If{test evaluation is scheduled}

\State
Fit the initial latent state of each test trajectory using
$K_{\mathrm{eval}}$ steps with fixed networks

\State
Roll out for $N_{\mathrm{roll}}-1$ prediction steps and compute the
overall test MSE over the predicted snapshots

\If{the test MSE is finite and lower than the best recorded value}

\State
Save the decoder and dynamics parameters as the checkpoint with the lowest test error;
update the best test MSE

\EndIf
\EndIf

\EndFor

\State
\textbf{Final test stage:}
load the checkpoint with the lowest test error and freeze both networks

\For{each test trajectory in $\mathcal{D}_{te}$}

\State
Initialize its latent state using the training-time evaluation initialization rule

\State
Optimize only the initial latent state for
$K_{\mathrm{final}}$ steps using initial-snapshot observations

\State
Roll out the latent dynamics for
$N_{\mathrm{roll}}-1$ prediction steps and decode the predicted fields

\EndFor

\State
Compute and report final test MSEs over the
$N_{\mathrm{roll}}-1$ predicted snapshots,
excluding the initial snapshot

\end{algorithmic}
\end{breakablealgorithm}

\FloatBarrier

\medskip
\noindent
The pointwise decoder minibatching procedure introduced in
\Cref{sec:pointwise_minibatching} is summarized in
\Cref{alg:smore-pbsw}. 
\par\medskip
\begin{breakablealgorithm}
\caption{Memory-saving pointwise decoder minibatch update}
\label{alg:smore-pbsw}
\begin{algorithmic}

\Inputs{
- Latent states $\{\alpha_{i,l}\}$, targets $\{\mathbf{u}_{i,l}\}$, and observed spatial-sample mask $\mathcal{M}$
\Statex - Optimizer $\mathcal{O}$ for the latent states being fitted
\Statex - $B_{\mathrm{PM}}$: maximum number of spatial samples in each pointwise minibatch
}

\State Partition $\mathcal{M}$ into disjoint spatial chunks
$\{\mathcal{C}_k\}_{k=1}^{N_{\mathrm{c}}}$ such that
$\bigcup_{k=1}^{N_{\mathrm{c}}}\mathcal{C}_k=\mathcal{M}$ and
$|\mathcal{C}_k|\le B_{\mathrm{PM}}$

\State $\mathcal{O}.\texttt{zero\_grad}()$

\For{$k=1$ to $N_{\mathrm{c}}$}

\State
$L_{\mathrm{dec}}^{(k)}
\leftarrow
\frac{|\mathcal{C}_k|}{|\mathcal{M}|}
\mathrm{MSE}\!\left(
D_{\Phi_{\theta}(\alpha_{i,l})}
(\mathbf{x}_{i,\mathcal{C}_k}),
\mathbf{u}_{i,l}[\mathcal{C}_k]
\right)$

\State Backpropagate $L_{\mathrm{dec}}^{(k)}$
\Comment{accumulate gradients; release the graph of chunk $k$}

\EndFor

\State $\mathcal{O}.\texttt{step}()$

\end{algorithmic}
\end{breakablealgorithm}

\medskip
\noindent
\Cref{alg:smore-pbsw} is used in two places. During training, it replaces the
decoder update in \Cref{alg:smore-tr} with $\mathcal{O}=\mathcal{O}_{\mathcal{Z}}$;
the accumulated decoder gradients are retained for the per-epoch update of
$\theta$. During test-time initial latent fitting, it is applied at each of the
$K_{\mathrm{eval}}$ or $K_{\mathrm{final}}$ fitting steps with the networks frozen,
so that only the initial latent state is updated. Field reconstruction for
evaluation uses the same spatial chunks in a forward pass without gradients. The experiments in \Cref{sec:results_long_horizon_extrapolation,sec:results_stability_robustness,sec:results_sparse_mask_ablation}
use full-grid decoder updates, since their spatial resolutions fit within the
available GPU memory and pointwise minibatching would only add computation time.
Pointwise minibatching becomes useful when the spatial resolution is large
enough that full-grid decoder evaluation becomes limited by GPU memory, as examined in
\Cref{sec:results_memory_efficiency}.

\section{Numerical Experiments}
\label{sec:numerical_experiment_setup}
In this section, we conduct several numerical experiments to evaluate the performance of SMORE. We evaluate rollout accuracy, training robustness, and prediction
from sparse initial observations on the Wave, Navier--Stokes,
and Spherical Shallow Water benchmarks. We also compare the
maximum spatial resolutions supported by different model
architectures on a single GPU and assess how
pointwise decoder minibatching affects memory use, training
time, and prediction accuracy at a fixed resolution.

\subsection{Experiment Setup}
\label{sec:experiment_setup}
This section describes the benchmark datasets, compared models, rollout
protocols, training and evaluation windows, training objectives, evaluation
metrics, checkpoint-selection and final-evaluation procedures, and experimental environment.

\subsubsection{Benchmark Datasets}\label{sec:dataset}

Following the setup of \cite{Yin2022Continuous}, we consider three time-dependent PDE benchmarks, including 2D Wave, 2D Navier--Stokes, and 3D Spherical Shallow Water. The data are generated by numerically solving the corresponding governing
equations under the dataset-specific sampling protocols described in
Appendix~\ref{apdx:dataset_details}. Each training or test sequence contains $N_{\mathrm{roll}}=20$
temporal states,
$\{\mathbf{u}_0,\mathbf{u}_1,\ldots,\mathbf{u}_{N_{\mathrm{roll}}-1}\}$.
The spatial resolutions, temporal step sizes, prediction targets, and
training/test splits are summarized in \Cref{tab:dataset_summary}.

\begin{table}[!htbp]
\centering
\caption{Summary of the PDE benchmark datasets. Grid denotes the spatial
sampling resolution, and sequence length denotes the number of snapshots in each
trajectory.}
\label{tab:dataset_summary}
\setlength{\tabcolsep}{4pt}
\resizebox{\textwidth}{!}{%
\begin{tabular}{lcccccc}
\toprule
Dataset & Prediction target & Grid & $\Delta t$ & Seq. length & Train seq. & Test seq. \\
\midrule
2D Wave & two-component wave field & $64^2$ & $0.25$ s & 20 & 512 & 32 \\
2D Navier--Stokes & vorticity & $64^2$ & $1$ s & 20 & 512 & 32 \\
3D Spherical Shallow Water & height, vorticity & $64\times128$ & $1$ h & 20 & 64 & 16 \\
\bottomrule
\end{tabular}%
}
\end{table}

\subsubsection{Compared Models}\label{sec:training_task_models}

We compare the proposed SMORE family with
FNO~\cite{li2020fourier,bonev2023spherical},
CNO~\cite{raonic2023convolutional},
Transolver~\cite{wu2024Transolver}, and
DINo~\cite{Yin2022Continuous}. DINo uses the same modulated-MFN INR decoder
structure as SMORE, but adopts a multilayer perceptron (MLP) as its latent
dynamics model.

\subsubsection{Rollout Protocol}
All model comparisons are formulated as initial-condition rollout prediction
tasks. For FNO, CNO, and Transolver, we use a physical-space autoregressive
rollout. Starting from $\mathbf{u}_0$, the model predicts
$\hat{\mathbf{u}}_1$, which is then fed back as input to predict
$\hat{\mathbf{u}}_2$; the procedure is repeated until the desired rollout
horizon is reached. For DINo and the two SMORE models, the initial physical state
$\mathbf{u}_0$ is first represented by an inferred latent code $\alpha_0$
through INR-based autodecoding.
The latent dynamics model then rolls out
$\hat{\alpha}_1,\hat{\alpha}_2,\ldots$ from $\alpha_0$, and the INR decoder
maps these predicted latent states to the corresponding physical fields
$\hat{\mathbf{u}}_1,\hat{\mathbf{u}}_2,\ldots$.

\subsubsection{Training and Evaluation Protocol}
As described in \Cref{sec:method}, the learning procedure consists of two
stages. The first is model training on trajectories restricted to their first
$N_{\mathrm{train}}$ snapshots, followed by evaluation on complete test
trajectories containing $N_{\mathrm{roll}}$ snapshots.
In our experiments, $N_{\mathrm{train}}=10$ and $N_{\mathrm{roll}}=20$; the
model is supervised over the first $N_{\mathrm{train}}-1=9$ prediction steps
and evaluated over the full $N_{\mathrm{roll}}-1=19$-step rollout.
Throughout, prediction step $n$ predicts $\mathbf{u}_n$, i.e., the
$(n{+}1)$-th snapshot of the trajectory.

Accordingly, we define the in-horizon, extrapolation, and full prediction
windows as
\[
\mathcal{T}_{\mathrm{in}}
=
\{1,\ldots,N_{\mathrm{train}}-1\},
\]
\[
\mathcal{T}_{\mathrm{extra}}
=
\{N_{\mathrm{train}},\ldots,N_{\mathrm{roll}}-1\},
\]
and
\[
\mathcal{T}_{\mathrm{full}}
=
\{1,\ldots,N_{\mathrm{roll}}-1\}.
\]

The in-horizon MSE is computed over snapshots 2--10,
corresponding to $N_{\mathrm{train}}-1=9$ predictions.
The extrapolation MSE is computed over snapshots 11--20,
corresponding to
$N_{\mathrm{roll}}-N_{\mathrm{train}}=10$ predictions.
The overall MSE is computed over all
$N_{\mathrm{roll}}-1=19$ predicted snapshots.
The initial snapshot is excluded from all prediction MSEs.

For a prediction window $\mathcal{T}$ and dataset $\mathcal{D}$, we define
the rollout MSE as
\begin{equation}
    \mathcal{E}_{\mathcal{T}}(\mathcal{D})
    =
    \frac{1}{|\mathcal{T}|\,|\mathcal{D}|}
    \sum_{i=1}^{|\mathcal{D}|}
    \sum_{n\in\mathcal{T}}
    \frac{1}{N_x c}
    \left\|
    \hat{\mathbf{u}}_{i,n}
    -
    \mathbf{u}_{i,n}
    \right\|_2^2 ,
    \label{eq:rollout_mse}
\end{equation}
where $N_x$ is the number of spatial sample points and $c$ is the number of
output channels.
This quantity is the mean squared prediction error per field entry.
We report it over
$\mathcal{T}_{\mathrm{in}}$,
$\mathcal{T}_{\mathrm{extra}}$,
and $\mathcal{T}_{\mathrm{full}}$. Regarding training epochs, unless otherwise specified, the budget for all models is fixed at 5,000 epochs. All models are trained on the fully observed data in~\Cref{eq:training_dataset}.

\subsubsection{Training Objectives}
FNO, CNO, and Transolver follow the physical-space autoregressive rollout
described above.
During training, a prediction loss is computed at each of the
$N_{\mathrm{train}}-1$ rollout steps, and the losses over all prediction
steps are summed and backpropagated to update the model parameters.

DINo and SMORE instead separate representation learning from latent-dynamics
learning.
The latent states and INR decoder are jointly optimized through the
reconstruction loss defined in \cref{eq:inr-loss-1}, which measures the
discrepancy between the decoded fields and the corresponding ground-truth
fields.
After this update, the resulting latent states are detached and used as the
target latent trajectory.
The latent-dynamics model is then trained separately by minimizing the
discrepancy between its rolled-out latent trajectory and these target latent
states.

SMORE further augments the latent-dynamics objective with the
Lyapunov-guided stability regularization introduced in
\cref{eq:reg_1,eq:reg_2}.
For SMORE-LRLQ, both the spectral and conservation regularization terms are applied,
whereas the conservation term is omitted for the SMORE-Koopman. Gradient clipping is applied to the decoder and dynamics parameters before each
parameter update, with the global $\ell_2$ norm of the gradient clipped to $1.0$.

\subsubsection{Checkpoint Selection and Final Evaluation}
\label{sec:initial_latent_evaluation_protocol}

For DINo and both SMORE models, evaluation requires fitting the initial latent
state of each test trajectory with the networks frozen.
During training, we periodically fit this initial latent with a short
budget of $K_{\mathrm{eval}}=300$ on Navier--Stokes and $500$ on Wave and
Spherical Shallow Water, and select the checkpoint with the lowest finite
overall test MSE, following the workflow in \Cref{sec:overall_workflow}.
At the end of training, we load the selected checkpoint, freeze the decoder and dynamics
networks, and refit each test trajectory's initial latent with
$K_{\mathrm{final}}=1000$ steps to produce the final reported errors.
Only the observed initial snapshot enters this fitting objective; the
initialization rules and fitting learning rates are unchanged, and the
checkpoint is not reselected using the $K_{\mathrm{final}}$ results.
For further details of the experimental setup, see Appendix~\ref{app:Hyper_parameters} and \ref{app:Ex_device}.

\subsection{Long-Horizon Rollout Extrapolation}
\label{sec:results_long_horizon_extrapolation}

In this section, we evaluate SMORE's ability to accurately predict the
evolution of physical fields beyond the supervised training horizon.
Data-driven models for time-dependent PDEs are typically trained over a
finite temporal window. A key question is therefore whether the learned
dynamics can continue to predict the evolution of the physical field beyond
this window while maintaining low prediction error.
This question is particularly relevant to SMORE, where the generic black-box
MLP latent dynamics used in DINo are replaced by explicit linear or
linear--quadratic dynamics equipped with Lyapunov-guided stability
regularization.
To assess this capability, we evaluate SMORE over both the supervised
in-horizon window and the subsequent extrapolation window and compare its
performance against several representative baseline models. DINo provides the
closest latent-model baseline to SMORE, since both use the same general
INR-based autodecoding framework, differing only in the latent dynamics, where
DINo uses an MLP while the two SMORE variants use structured linear
(SMORE-Koopman) or linear--quadratic (SMORE-LRLQ) dynamics.

Following the common experimental protocol described in \Cref{sec:experiment_setup}, we evaluate FNO, CNO, Transolver, DINo, SMORE-Koopman, and SMORE-LRLQ on the 2D Wave, 2D Navier--Stokes, and 3D Spherical Shallow Water benchmarks.
All models in this comparison are trained and evaluated using three random seeds, selected once and then held fixed.
To ensure a fair comparison across models, the model sizes are selected
such that the overall parameter budgets are approximately matched across methods
for each dataset; the corresponding hyperparameter configurations and budgets are reported in Appendix \Cref{tab:app_hparam_candidates} and  \Cref{tab:app_rollout_decomposition} respectively.
 We report the rollout MSE separately over the supervised in-horizon window and
the subsequent extrapolation window as shown in \Cref{tab:combined_loss_comparison}.
\begin{table}[htbp]
  \centering
  \caption{In-horizon and extrapolation MSE for long-horizon rollout
prediction under approximately matched parameter budgets.
  Values are averaged over three random seeds, with the corresponding
standard deviations reported in Appendix \Cref{tab:app_rollout_decomposition}.
  For each run, the checkpoint with the lowest finite overall test MSE over the 19 predicted snapshots is selected; the initial snapshot is excluded.
  Best values in each metric column are bolded, and second-best values are underlined.}
  \resizebox{\textwidth}{!}{%
  \begin{tabular}{lcccccc}
    \toprule
    Model
    & \makecell{Wave\\In-horizon}
    & \makecell{Wave\\Extrapolation}
    & \makecell{Navier--Stokes\\In-horizon}
    & \makecell{Navier--Stokes\\Extrapolation}
    & \makecell{Shallow Water\\In-horizon}
    & \makecell{Shallow Water\\Extrapolation} \\
    \midrule
    FNO~\cite{li2020fourier,bonev2023spherical}
      & \underline{2.89e-04} & 7.37e-02 & 1.61e-03 & 2.43e-03 & 1.79e-03 & 5.22e-03 \\
    CNO~\cite{raonic2023convolutional}
      & 1.99e-02 & 1.53e-01 & 1.52e-03 & 3.87e-03 & 8.03e-04 & 2.86e-03 \\
    Transolver~\cite{wu2024Transolver}
      & 7.82e-04 & 1.67e-02 & 4.13e-01 & 6.22e-01 & 1.71e-03 & 6.05e-03 \\
    DINo~\cite{Yin2022Continuous}
      & 1.13e-03 & 1.38e-02 & \underline{1.34e-03} & \underline{2.32e-03} & \underline{4.90e-04} & \underline{6.49e-04} \\
    \textbf{SMORE-Koopman}
      & \textbf{6.74e-05} & \textbf{3.22e-04} & 5.37e-02 & 9.78e-02 & \textbf{2.22e-04} & \textbf{5.34e-04} \\
    \textbf{SMORE-LRLQ}
      & 1.25e-03 & \underline{8.65e-04} & \textbf{1.01e-03} & \textbf{2.00e-03} & 5.86e-04 & 7.67e-04 \\
    \bottomrule
  \end{tabular}%
  }
  \label{tab:combined_loss_comparison}
\end{table}

Across all three benchmarks, one of the two SMORE variants attains the
lowest MSE in every one of the six evaluation columns. SMORE-Koopman is
best on Wave and Spherical Shallow Water in both windows, and SMORE-LRLQ
is best on Navier--Stokes in both windows. The latent-model baseline DINo is at most
second-best. Measured against DINo in the extrapolation window, the best
SMORE variant reduces the error by $97.7\%$ on Wave (from
$1.38\times10^{-2}$ to $3.22\times10^{-4}$), by $13.8\%$ on
Navier--Stokes (from $2.32\times10^{-3}$ to $2.00\times10^{-3}$), and by
$17.7\%$ on Spherical Shallow Water (from $6.49\times10^{-4}$ to
$5.34\times10^{-4}$).

The advantage of SMORE is most pronounced in the extrapolation window,
where error accumulates without supervision. Several baselines that are
competitive in-horizon degrade sharply once they leave the supervised
window. On Wave, FNO attains the second-lowest in-horizon error
($2.89\times10^{-4}$), yet its extrapolation
error rises to $7.37\times10^{-2}$, a degradation of roughly $255\times$;
over the same windows SMORE-Koopman rises only from $6.74\times10^{-5}$
to $3.22\times10^{-4}$, a factor of about $4.8$, and its extrapolation
error is about $229\times$ lower than that of FNO. On Spherical Shallow
Water, CNO degrades from $8.03\times10^{-4}$ in-horizon to
$2.86\times10^{-3}$ in extrapolation (about $3.6\times$), whereas
SMORE-Koopman degrades only from $2.22\times10^{-4}$ to
$5.34\times10^{-4}$ (about $2.4\times$) and remains about $5.4\times$ more
accurate in the extrapolation window. Strong in-horizon accuracy therefore does not necessarily imply reliable extrapolation, whereas the structured, stability-regularized latent dynamics of SMORE show slower error accumulation over the unsupervised horizon than the baseline models.

The two SMORE variants exhibit a clear trade-off between peak performance
and cross-system robustness. SMORE-Koopman has the simplest structure, a
single linear operator, which makes it easy to learn and yields the lowest
errors when the dominant dynamics admit a linear latent evolution. This is the case for Wave, whose governing equation is itself linear. It
also holds for Spherical Shallow Water. Although this system is physically
nonlinear, with both advective and height--velocity coupling terms, it
becomes a more regular regime once the transient spin-up phase is removed
(\Cref{app:3dswdataset}).
The same simplicity, however, limits SMORE-Koopman when the dynamics are
strongly nonlinear. On Navier--Stokes, its errors
($5.37\times10^{-2}$ in-horizon and $9.78\times10^{-2}$ in extrapolation) are
about $50\times$ those of SMORE-LRLQ. SMORE-LRLQ adds quadratic latent terms
that are structurally compatible with the convective nonlinearity of
Navier--Stokes, on which it is the best method in both windows
($1.01\times10^{-3}$ and $2.00\times10^{-3}$). The additional quadratic
coupling makes it harder to learn and yields no benefit on the nearly linear
systems, where it is never the single best method, but it does not exhibit
extreme error degradation in any of the six evaluation settings. The
effective choice of latent dynamics is thus governed by the structure of the
physics rather than by expressive capacity alone. Adding quadratic terms
helps when the dynamics are convective and can hurt when an essentially
linear evolution is already well represented.

One result in \Cref{tab:combined_loss_comparison} deserves some additional explanation. For Wave, the extrapolation error of SMORE-LRLQ ($8.65\times10^{-4}$) is lower than its in-horizon error ($1.25\times10^{-3}$), which may appear inconsistent with the usual tendency of prediction error to increase with horizon. This behavior results from window averaging in an oscillatory system rather than from an actual improvement at longer horizons. As shown by the per-snapshot error profile in \Cref{fig:wave_per_frame}, the Wave error is non-monotonic and varies with the oscillatory dynamics. It reaches a local maximum within the in-horizon window, around snapshots 7--9, and then decreases through much of the extrapolation window. Consequently, averaging over the two windows places the error peak in the in-horizon interval and much of the subsequent decline in the extrapolation interval, causing the extrapolation-window mean to fall below the in-horizon mean. This effect is specific to Wave; for Navier--Stokes and Shallow Water, the extrapolation error remains higher than the in-horizon error for all methods.

\Cref{fig:extrapolation_field_comparison} shows example predicted fields in the
extrapolation window. The examples show that the low numerical errors are accompanied by accurate recovery of the underlying spatial structures. SMORE preserves these structures as they evolve throughout the rollout, suggesting that the model captures the field dynamics rather than producing overly smoothed spatial predictions.

\begin{figure}[!htbp]
  \centering
  \includegraphics[height=0.8\textheight,keepaspectratio]{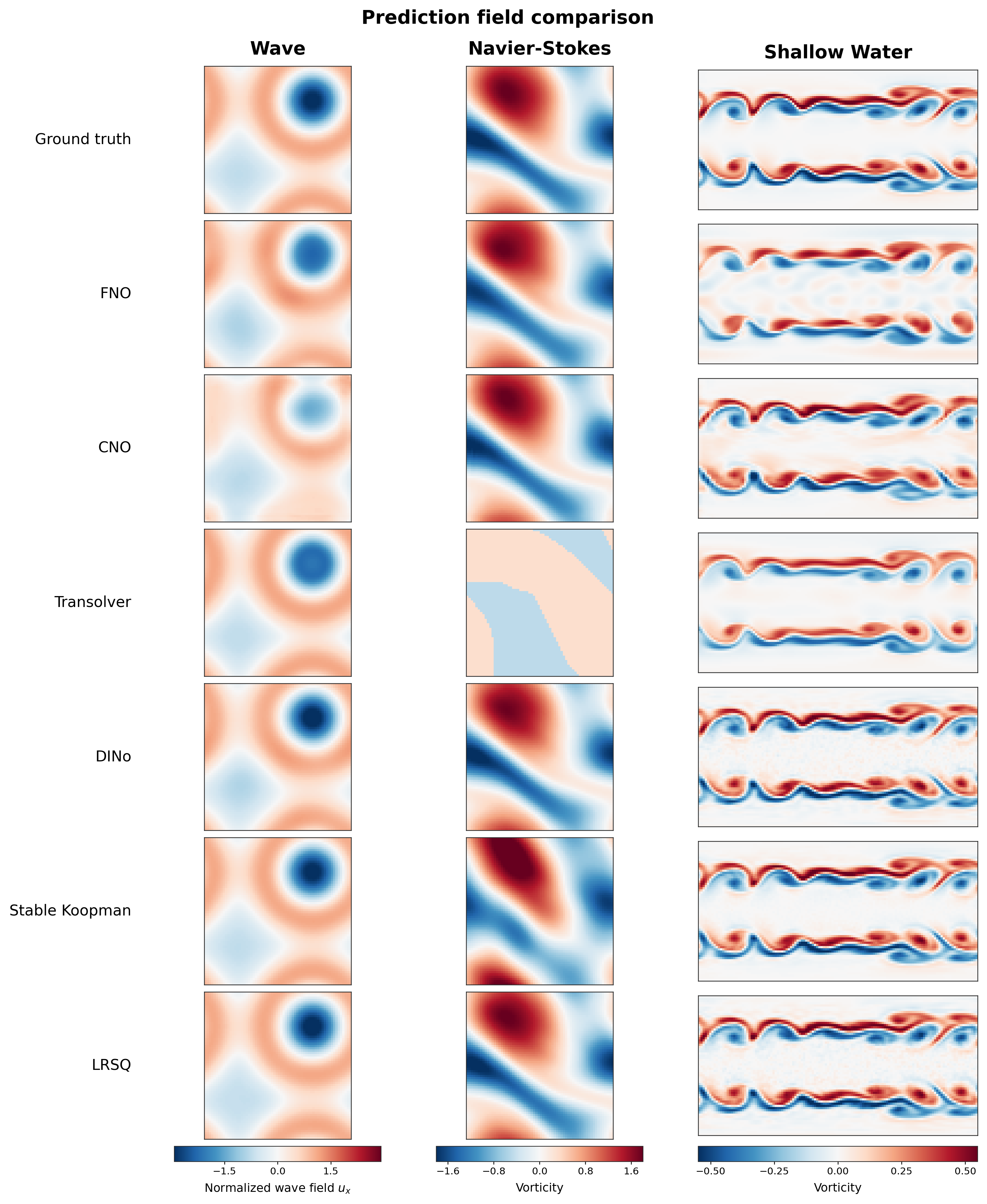}
    \caption{Field-level extrapolation comparison at snapshot 15
(prediction step $n=14$).
Within each benchmark, all models are compared using the same random seed,
the same test trajectory, and the same prediction step $n$.
Rows show the ground truth followed by model predictions; columns show the
normalized Wave field $u_x$, Navier--Stokes vorticity, and Spherical Shallow
Water vorticity.
These snapshots confirm that the low MSE in \Cref{tab:combined_loss_comparison} reflects actual structure recovery. The predicted fields track the real spatial patterns and their evolution rather than only a low average error.}
  \label{fig:extrapolation_field_comparison}
\end{figure}

SMORE achieves the lowest error across all six evaluation columns, with the advantage being most pronounced in the extrapolation window. This suggests that its structured, stability-regularized latent dynamics
accumulate error more slowly over the rollout than the baselines. The two variants, however, show complementary strengths. SMORE-Koopman performs particularly well when a linear latent evolution is sufficient to capture the dominant dynamics, as in Wave and Shallow Water, but degrades substantially on Navier--Stokes. In contrast, SMORE-LRLQ provides more consistent performance across systems and achieves the lowest errors on the convection-dominated Navier--Stokes case. The two variants therefore show complementary strengths across the three systems.

\subsection{Stability and Robustness Across Latent Dimensions}
\label{sec:results_stability_robustness}

In this section, we investigate how the stability-related mechanisms in SMORE
affect the robustness of its predictions.
As discussed at the end of \Cref{sec:dynamic-model}, SMORE-LRLQ incorporates
three mechanisms that can contribute to robust latent evolution, which are 
Lyapunov-guided stability regularization, low-rank quadratic parameterization,
and gradient clipping. In our experiments, we observe that enabling or disabling these mechanisms
greatly affects both the prediction error and the training stability. 
We therefore ask what each of these three mechanisms contributes to the training stability and predictive accuracy of SMORE-LRLQ, how these contributions change with the per-component latent dimension 
$p$, and whether they are consistent across the three PDE systems.
To answer these questions, we evaluate SMORE-LRLQ across different per-component
latent dimensions $p$ while selectively enabling or removing the three
mechanisms.
In the discussion that follows, the FULL configuration includes all three mechanisms.
For compactness in the ablation labels, $L_{\mathrm{reg}}$ denotes the pair of
Lyapunov-guided stability regularizers,
$L_{\mathrm{reg,cons}}$ and $L_{\mathrm{reg,eig}}$.
The reported prediction error follows evaluation protocol
described in \Cref{sec:experiment_setup}, with final test errors computed using $K_{\mathrm{final}}=1000$ initial latent fitting steps.
The number of epochs is set to 10{,}000 to facilitate the observation of
long-term trends. For configurations that develop NaN, we report the
first logged NaN epoch in \cref{fig:stability_robustness}(a). The first NaN epochs for all tested
configurations on Shallow Water are listed in Appendix~\Cref{tab:app_sw_nan_epochs}, and the complete Navier--Stokes
mechanism sweep in Appendix~\Cref{tab:app_ns_mechanism_sweep}.
\begin{figure}[!htbp]
  \centering
  \includegraphics[width=\textwidth]{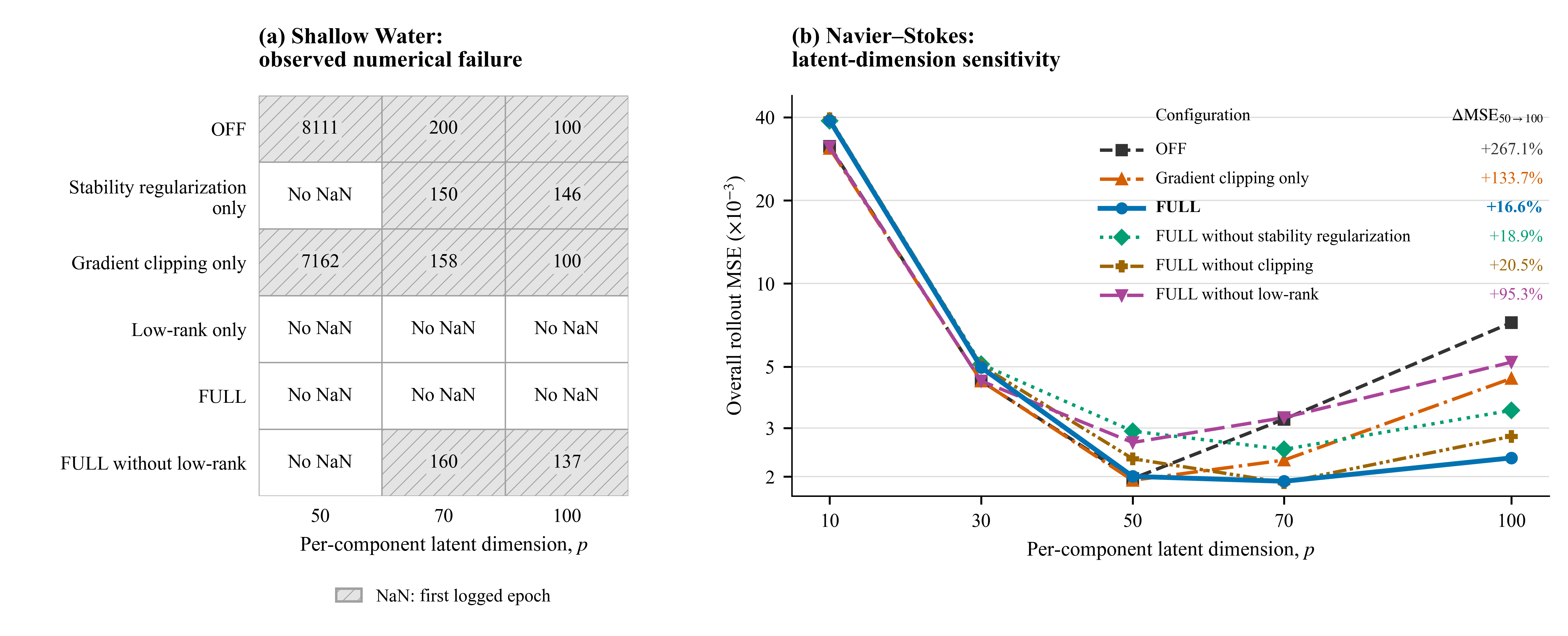}
  \caption{Training robustness of SMORE-LRLQ across per-component latent dimension
  $p$.
  (a) Spherical Shallow Water: rows denote stability configurations and columns denote per-component latent dimensions. Hatched cells report the first logged epoch with NaN; ``Not observed'' indicates that none was observed during the available training window.
  (b) Navier--Stokes: best finite overall test MSE over the 19 predicted
  snapshots for different stability configurations. Legend annotations give the
  relative MSE change from $p=50$ to $p=100$ for each curve.}
  \label{fig:stability_robustness}
\end{figure}
\cref{fig:stability_robustness} reveals two different robustness issues.
According to \cref{fig:stability_robustness}(a), when all three stabilization
mechanisms are disabled (\textsc{OFF}), Spherical Shallow Water develops
non-finite training behavior at larger latent dimensions.
In \cref{fig:stability_robustness}(b), under the same \textsc{OFF}
configuration, Navier--Stokes remains numerically finite, but its prediction
error becomes strongly dependent on the selected latent dimension and tends to
increase rapidly as the latent dimension grows.

On Spherical Shallow Water, SMORE-LRLQ exhibits a severe numerical failure mode. Of the
six configurations shown in \cref{fig:stability_robustness}(a), four
eventually produce NaN at one or more latent dimensions, and these
failures occur earlier as the latent dimension increases. For the
OFF configuration the first observed NaN occurs at epoch 8111 for $p=50$,
but already at epochs 200 and 100 for $p=70$ and $p=100$, respectively.
Neither the stability regularizers alone nor gradient clipping alone
prevents failure at the two larger latent dimensions, and removing only
the low-rank parameterization from FULL still fails at $p=70$ and $p=100$
(epochs 160 and 137). In contrast, the low-rank quadratic parameterization alone is sufficient to
prevent non-finite training across all three tested latent dimensions,
indicating that it is the main mechanism mitigating the increased instability
at larger $p$. At $p=50$, however, the stability regularizers also have a
clear effect on long-term training stability: the OFF configuration develops
NaN at epoch 8111, whereas the configuration with stability regularization
alone remains finite throughout the 10,000-epoch training window. At $p=100$ this benefit
also appears in accuracy, where removing the low-rank parameterization
increases the best finite rollout MSE from $0.6299\times10^{-3}$ to
$5.585\times10^{-3}$ before the run diverges, a $+786.8\%$ degradation
(\Cref{tab:stability_ablation_main}). However, a low error at the best checkpoint does not guarantee that training
remains stable. At $p=50$, the OFF and clip only configurations reach best finite MSEs
comparable to FULL, yet both subsequently become non-finite, as shown in
Appendix Table~\ref{tab:app_sw_mechanism_sweep}.

For Navier--Stokes, every configuration follows a U-shaped trend in $p$,
dropping sharply to a minimum near $p=50$--$70$ and then rising on the
high-capacity side. The configurations differ mainly in how steeply the
error rises on this side, summarized by the relative change
$\Delta\mathrm{MSE}_{50\to100}$ from $p=50$ to $p=100$ reported in the
legend. FULL has the shallowest right arm ($+16.6\%$) and is therefore the least
sensitive of all configurations to the exact choice of $p$ once the latent
space dimension is large. Removing a single mechanism
from FULL increases this sensitivity only slightly for the stability
regularizers ($+18.9\%$) and gradient clipping ($+20.5\%$), but sharply
for the low-rank parameterization ($+95.3\%$), while the two configurations with neither the stability regularizers nor the
low-rank parameterization are the most sensitive of all (gradient clipping only
$+133.7\%$, OFF $+267.1\%$). For example, the clip only configuration is marginally better than FULL at $p=50$
($1.939\times10^{-3}$ versus $2.003\times10^{-3}$) but rises to
$4.532\times10^{-3}$ at $p=100$, whereas FULL changes only from
$2.003\times10^{-3}$ to $2.337\times10^{-3}$ over the same range and in
fact reaches its own minimum of $1.923\times10^{-3}$ at $p=70$. A low error at a single latent dimension therefore does not imply robustness
across dimensions. The ordering of $\Delta\mathrm{MSE}_{50\to100}$ identifies
the low-rank quadratic parameterization as the largest contributor to
robustness at high latent dimensions. The stability regularizers and gradient
clipping provide smaller additional protection. Together, they make FULL far
less dependent on a well-chosen $p$. This matters in practice, since the
intrinsic dimension of a physical system is rarely known in advance and the
latent dimension is typically chosen with a margin. FULL remains accurate even when more resources are provided than necessary, which makes it applicable to a
broader range of systems without careful tuning of $p$.

\Cref{tab:stability_ablation_main} isolates the contribution of each
mechanism at a fixed latent dimension $p=100$ across the datasets,
and shows that these contributions are strongly regime-dependent. On
Navier--Stokes, removing any mechanism degrades accuracy, most strongly
the low-rank parameterization ($+122.2\%$) and the stability regularizers
($+48.7\%$), so all three contribute at this capacity. On Shallow Water
the differences are negligible ($+1.7\%$ and $+0.7\%$ for the
stability regularizers and gradient clipping), except for the low-rank column.
Its removal causes divergence and triggers a non-finite run ($\dagger$) after the
large accuracy loss noted above, so here the low-rank parameterization is
essential for stable training rather than for accuracy. 

\begin{table}[!htbp]
  \centering
  \caption{Leave-one-out ablation of the FULL SMORE-LRLQ configuration (all
mechanisms enabled) at per-component latent dimension $p=100$.
Each entry is the total MSE over the 19 predicted snapshots (snapshots
2--20) under the 1000-step initial latent fitting during evaluation; the value in
parentheses is the relative change from FULL, where positive means
higher error.
The columns remove the
Lyapunov-guided stability regularization, gradient clipping, and the low-rank
quadratic parameterization respectively.
A dagger ($\dagger$) marks a run that diverged to non-finite values during training; its best pre-divergence checkpoint is used for evaluation.}
  \small
  \setlength{\tabcolsep}{6pt}
  \begin{tabular}{lcccc}
    \toprule
    Dataset
    & FULL
    & FULL without $L_{\mathrm{reg}}$
    & FULL without $\mathrm{clip}$
    & FULL without $\mathrm{low\mbox{-}rank}$ \\
    \midrule
    Navier--Stokes
      & 2.337e-3
      & 3.475e-3 \; (+48.7\%)
      & 2.796e-3 \; (+19.7\%)
      & 5.191e-3 \; (+122.2\%) \\
    Shallow Water
      & 6.299e-4
      & 6.403e-4 \; (+1.7\%)
      & 6.342e-4 \; (+0.7\%)
      & 5.585e-3$^\dagger$ \; (+786.8\%) \\
    \bottomrule
  \end{tabular}
  \label{tab:stability_ablation_main}
\end{table}

The mechanism study shows that the three stability-related design
choices play distinct and system-dependent roles.
For Spherical Shallow Water, the low-rank quadratic parameterization is the
primary mechanism for mitigating the deterioration in training stability as
the latent dimension increases, while the Lyapunov-guided stability
regularization prevents the long-term divergence observed at $p=50$ when
the regularization is disabled.
For Navier--Stokes, all three mechanisms help reduce the sensitivity of
prediction accuracy to the per-component latent dimension.
Together, these results show that the stability-related design in SMORE
primarily provides flexibility in choosing the latent dimension and
robustness to numerical training failure, rather than uniformly reducing prediction error in every
setting.

\subsection{Prediction from Sparse Initial Observations}
\label{sec:results_sparse_mask_ablation}
In this section, we investigate whether SMORE can retain full-field predictive
capability when only sparse spatial observations of the initial state are
available.
While the previous experiments assume access to the complete initial field,
many practical settings provide only a limited number of measurements at the
initial time.
We therefore test whether SMORE trained entirely on fully observed fields can
infer a useful initial latent state from sparse test-time measurements and
subsequently recover the complete spatiotemporal trajectory without retraining.
We use the three benchmark datasets introduced in
\Cref{sec:experiment_setup} together with the corresponding SMORE models
trained on fully observed spatial fields in
\Cref{sec:results_long_horizon_extrapolation}.
At test time, only a fraction $v_{\mathrm{obs}}$ of the spatial locations in
the initial snapshot is retained, with the observed locations sampled uniformly
without replacement.
Only these observed values are used to fit the initial latent state through
the autodecoding procedure, after which prediction proceeds with the trained
decoder and latent-dynamics model.
Details of the spatial masks and the corresponding observation counts for each
dataset are provided in Appendix~\ref{app:mask_results_full}.

\begin{table}[htbp]
  \centering
  \caption{
  Sparse-initialization performance under selected initial-snapshot observation
  ratios.
  Each entry reports overall MSE over the 19 predicted snapshots,
averaged across three random seeds.
The fitted initial snapshot is excluded from the metric.
  Only the retained samples of the first snapshot are used to infer the initial
  latent state; the subsequent rollout reconstructs complete spatial fields
  without retraining.
  The complete visibility sweep with standard deviations is reported in
  Appendix \Cref{tab:app_mask_overall_test_full}.
  }
  \small
  \setlength{\tabcolsep}{3.5pt}
  \begin{tabular}{lcccccc}
    \toprule
    Dataset (model)
      & $100\%$
      & $50\%$
      & $10\%$
      & $4\%$
      & $2\%$
      & $1\%$ \\
    \midrule
    \makecell[l]{Wave\\(SMORE-Koopman)}
      & 2.01e-04
      & 2.05e-04
      & 1.98e-04
      & 2.11e-04
      & 6.96e-04
      & 4.06e-02 \\
    \makecell[l]{Navier--Stokes\\(SMORE-LRLQ)}
      & 1.53e-03
      & 1.53e-03
      & 1.60e-03
      & 1.78e-03
      & 3.77e-03
      & 2.48e-02 \\
    \makecell[l]{Shallow Water\\(SMORE-Koopman)}
      & 3.86e-04
      & 3.94e-04
      & 4.55e-04
      & 6.71e-04
      & 1.32e-03
      & 1.56e-03 \\
    \bottomrule
  \end{tabular}
  \label{tab:mask_overall_test}
\end{table}

\Cref{tab:mask_overall_test} reports a representative subset of visibility
levels; the full sweep with across-seed standard deviations is given in
Appendix \Cref{tab:app_mask_overall_test_full}.
Retaining $10\%$ of the initial snapshot ($v_{\mathrm{obs}}=10\%$) changes the
overall MSE by only $-1.5\%$ for Wave, $+4.6\%$ for Navier--Stokes, and
$+17.9\%$ for Shallow Water relative to full observation, a modest change in
all three cases.
More generally, every system tolerates substantial sparsity, with the error
staying within roughly a factor of two of its full-observation value down to
about $4\%$ visibility.
The systems diverge only under extreme sparsity.
For Wave and Navier--Stokes, the error increases moderately at \(2\%\) visibility, reaching about \(3.5\times\) and \(2.5\times\) their respective full-observation baselines, before rising sharply at \(1\%\) to \(4.06\times10^{-2}\) and \(2.48\times10^{-2}\). These values are approximately \(200\times\) and \(16\times\) the corresponding full-observation errors. In contrast, the error for Shallow Water increases much more gradually as visibility is reduced, from \(3.86\) to \(3.94\), \(4.55\), \(6.71\), \(13.2\), and \(15.6\), in units of \(10^{-4}\), and reaches only about \(4\times\) its baseline at \(1\%\).

This difference at extreme sparsity is consistent with the loss of identifiability characterized by \Cref{lem:sensor_jacobian}. Because the decoder is affine in the latent code, the observability constant \(c_{\mathcal S}=\sigma_{\min}(J_{\mathcal S})\) can be computed exactly for each mask (\Cref{tab:app_mask_observability}). It decreases as visibility is reduced and becomes zero once the number of observed locations falls below \(p\), which occurs at \(2\%\) and \(1\%\) for Wave and at \(1\%\) for Navier--Stokes and Shallow Water. The vanishing of $c_{\mathcal{S}}$ marks the loss of observability.
For Wave and Navier--Stokes, the strongest error degradation occurs at
or beyond this point, whereas Shallow Water degrades more gradually.
\Cref{fig:sparse_initialization_examples} shows example predictions at $10\%$ visibility.
From a single sparse initial snapshot, the frozen decoder and latent dynamics
reconstruct the complete field sequence, and the predictions closely track the
ground truth across the rollout.

\begin{figure}[!htbp]
  \centering

  \begin{subfigure}{0.85\textwidth}
    \centering
    \includegraphics[width=\textwidth]{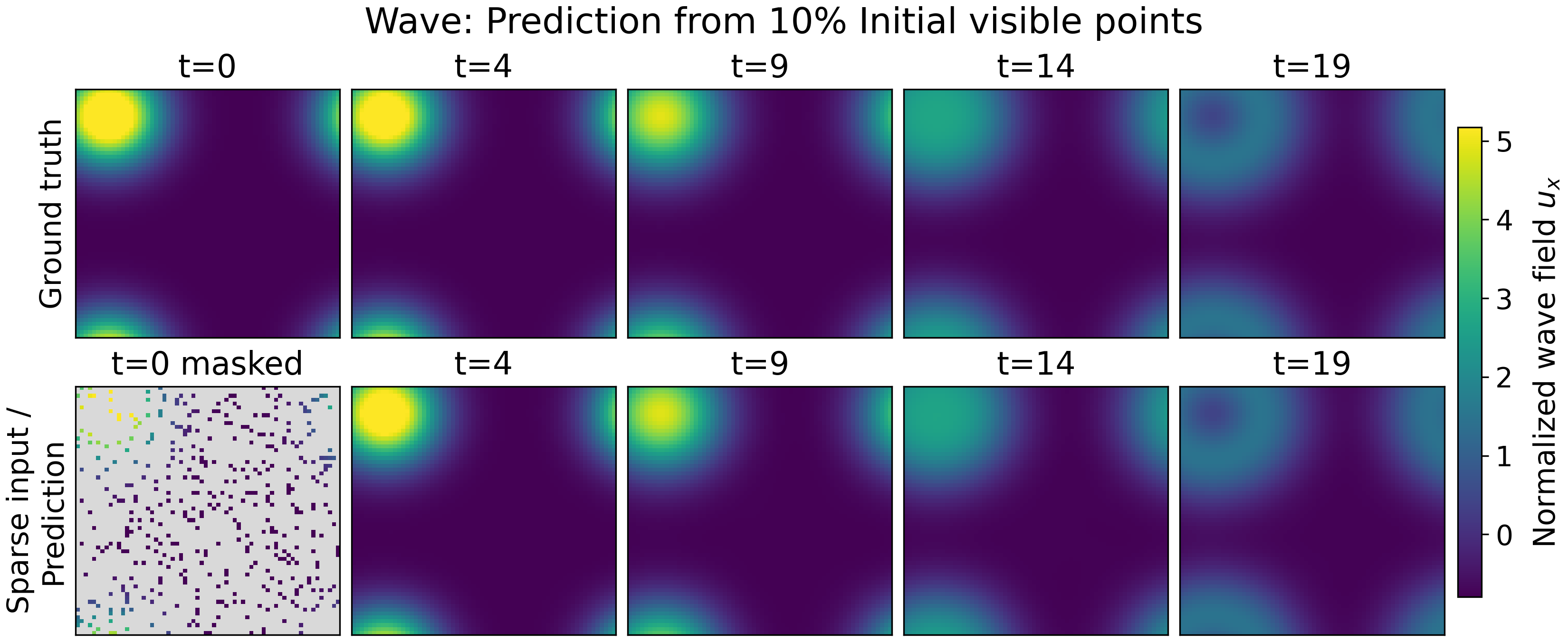}
    \caption{Wave using SMORE-Koopman.}
    \label{fig:sparse_wave}
  \end{subfigure}

  \vspace{0.6em}

  \begin{subfigure}{0.85\textwidth}
    \centering
    \includegraphics[width=\textwidth]{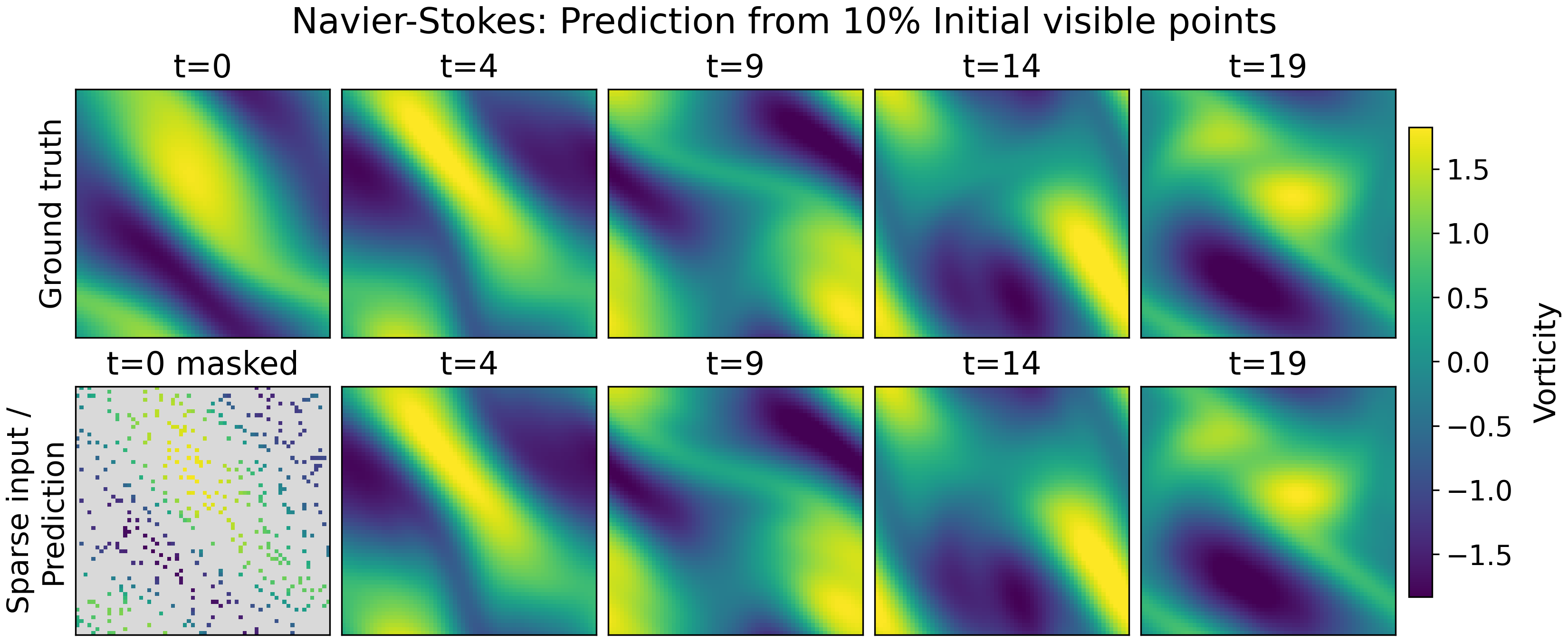}
    \caption{Navier--Stokes using SMORE-LRLQ.}
    \label{fig:sparse_ns}
  \end{subfigure}

  \vspace{0.6em}

  \begin{subfigure}{0.85\textwidth}
    \centering
    \includegraphics[width=\textwidth]{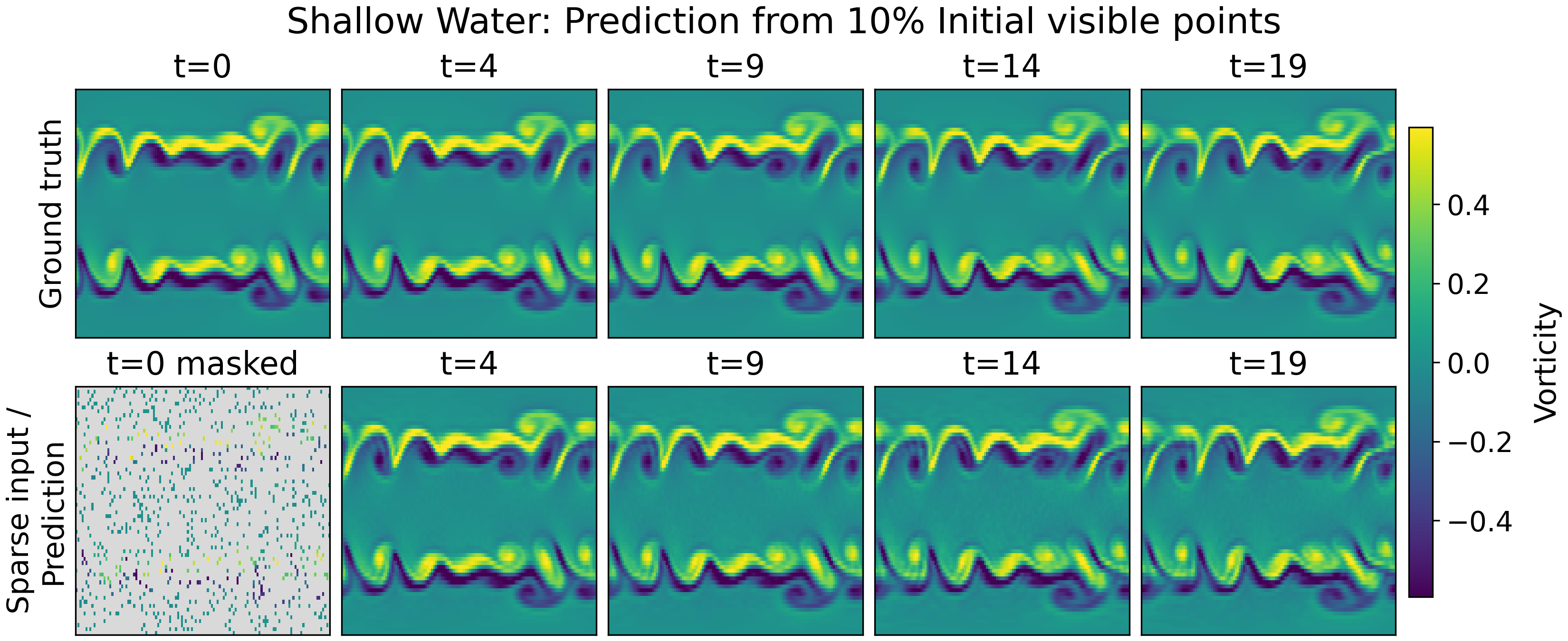}
    \caption{Spherical Shallow Water using SMORE-Koopman.}
    \label{fig:sparse_sw}
  \end{subfigure}

  \caption{
  Sparse initialization at $10\%$ visibility for
  (a) Wave, (b) Navier--Stokes, and (c) Spherical Shallow Water.
  In each panel, the top row shows the ground-truth field at snapshot
  indices $n=0,4,9,14,19$ (labeled $t$ in the panels).
  The bottom-left image shows the sparse initial observation,
  which is the only test-time input, while the remaining bottom images
  show full-field predictions at the corresponding later snapshots.
  Only the observed samples from the initial snapshot are used to infer
  $\widehat{\alpha}_0$; the decoder and latent dynamics remain frozen.
  Wave and Spherical Shallow Water use SMORE-Koopman, whereas
  Navier--Stokes uses SMORE-LRLQ.
  }
  \label{fig:sparse_initialization_examples}
\end{figure}

The sparse-initialization study demonstrates a practical capability of the
coordinate-based latent representation. A model trained only on fully observed
fields can be reused under substantially sparser initial measurements, without
retraining, and still produce complete future fields. This is not spatial interpolation of the observed locations.
The observed values are used to identify an initial state in the learned
latent representation. The latent dynamics then evolves this state in time,
and the frozen decoder maps it to full spatiotemporal fields. The capability
has a clear limit. Under extremely sparse observations, the error increases
sharply, since successful sparse-to-rollout inference requires the observed
values to carry enough information to identify the initial latent state. This
limit is set by the number of observed locations relative to the
per-component latent dimension $p$ rather than by the visible ratio itself,
since the same visible ratio yields different numbers of observed locations on
different grids; at $1\%$, for example, it retains $40$ locations on the
$64\times64$ grids but $81$ on the $64\times128$ grid.

\subsection{Memory Efficiency and Spatial Capacity}
\label{sec:results_memory_efficiency}

In this section, we investigate the practical benefits of the pointwise
minibatching strategy in SMORE. For neural models used to solve
time-dependent PDEs, increasing spatial resolution can make training
increasingly memory intensive. This issue remains relevant even for latent-space models such as SMORE.
Although temporal evolution is performed in a compact latent space, the
decoder must still evaluate and backpropagate through the physical field over
the spatial grid during training. As the number of spatial points increases, this decoder
evaluation can become a major memory bottleneck. The coordinate-based
decoder in SMORE provides a natural way to alleviate it. Pointwise
minibatching divides the spatial coordinates into pointwise minibatches
without changing either the decoder architecture or the latent-dynamics
model, which can greatly reduce GPU-memory usage and, under a fixed
memory budget, make much larger spatial grids computationally feasible.

We examine pointwise minibatching from two complementary perspectives.
First, we test how much it increases the maximum spatial resolution that
can be processed under a fixed GPU-memory budget, and compare this
capacity with models that use their standard spatial execution
strategies. Second, at a fixed spatial resolution, we quantify the
corresponding changes in memory usage, computation time, and prediction
accuracy. Throughout these experiments, we cap the pointwise-minibatch
chunk size at 64 spatial coordinates and refer to this setting as PM64.
All spatial coordinates are still processed, so PM64 changes only the
batching strategy rather than subsampling the observed field. If the
total number of spatial points is not divisible by 64, the final chunk
contains the remaining points; for example, 150 spatial points are
processed as chunks of $64+64+22$. A smaller chunk size could further
reduce the peak memory requirement and potentially extend the feasible
spatial resolution, at the cost of additional computation and execution
overhead, a trade-off we do not explore here.

\subsubsection{Maximum Spatial Resolution on One GPU}

We first examine whether the memory reduction provided by pointwise
minibatching translates into greater spatial capacity. We
evaluate all models introduced in \Cref{sec:experiment_setup}. For each
model, we determine the largest square grid on which it can complete a
fixed training-and-evaluation protocol on a single A100 40\,GB GPU. For
this capacity experiment, all models use approximately matched parameter
counts and the same hyperparameter configurations as in the
Navier--Stokes comparison in
\Cref{sec:results_long_horizon_extrapolation}, with a common batch size
of 32. Instead of using a physical simulation dataset, we generate
square-grid tensors at different spatial resolutions, and train each
model for two epochs at each resolution. The grid side length is then
progressively increased until training produces a CUDA out-of-memory
(OOM) error, thereby determining the maximum spatial resolution supported
by each model under the fixed GPU-memory budget. The two SMORE variants
use PM64, while DINo, FNO, CNO, and Transolver are evaluated with their
standard full-grid or native implementations. The two-epoch protocol is
designed only to identify the memory limit and does not establish
predictive performance or long-training feasibility at the limiting
resolutions.

\begin{table}[!htbp]
  \centering
  \caption{Maximum successful square grids on one A100 40\,GB GPU under
the two-epoch capacity protocol, together with model parameter counts on
Navier--Stokes. Each maximum successful grid is one step below a CUDA OOM
failure (i.e., the next larger square grid runs out of memory). PM64
denotes pointwise minibatches of 64 coordinates in all decoder
evaluations, including training autodecoding, test-time initial latent
fitting, and evaluation field reconstruction. Grid points is the number
of spatial points at the maximum successful grid, and the last column
reports this count relative to Transolver.}
  \label{tab:ns_spatial_capacity}
  \small
  \setlength{\tabcolsep}{2pt}
  \begin{tabular}{llcccc}
    \toprule
    Model & Spatial execution & Parameters & Maximum successful grid
      & Grid points & Relative points \\
    \midrule
    Transolver~\cite{wu2024Transolver}
      & Full grid & 358{,}007 & $64\times64$   & 4{,}096   & $1.00\times$ \\
    DINo~\cite{Yin2022Continuous}
      & Full grid & 356{,}444 & $163\times163$ & 26{,}569  & $6.49\times$ \\
    FNO~\cite{li2020fourier,bonev2023spherical}
      & Full grid & 336{,}653 & $271\times271$ & 73{,}441  & $17.93\times$ \\
    CNO~\cite{raonic2023convolutional}
      & Full grid & 326{,}184 & $307\times307$ & 94{,}249  & $23.01\times$ \\
    SMORE-Koopman
      & PM64      & 355{,}165 & $1714\times1714$ & 2{,}937{,}796 & $717.24\times$ \\
    SMORE-LRLQ
      & PM64      & 353{,}961 & $1714\times1714$ & 2{,}937{,}796 & $717.24\times$ \\
    \bottomrule
  \end{tabular}
\end{table}

The results in \Cref{tab:ns_spatial_capacity} show that pointwise
minibatching increases the spatial capacity of SMORE under the tested memory budget. All models use approximately
matched parameter counts, between $326{,}184$ and $358{,}007$ (within
about $10\%$), so the comparison reflects the spatial execution strategy
rather than model size. Measured by side length, SMORE-LRLQ and
SMORE-Koopman both reach a maximum successful grid of $1714\times1714$,
the largest of all models and well beyond the best physical-space
baseline (CNO at $307\times307$). The gap is larger in terms of the
actual number of spatial points processed, which is the quantity that
drives decoder memory. Both SMORE variants handle $2{,}937{,}796$ points,
corresponding to $717.24\times$ the Transolver baseline and roughly
$31\times$ the points of CNO. Pointwise minibatching therefore enables
SMORE to operate at higher spatial resolutions than any
baseline evaluated here.

The two SMORE variants reach the same maximum grid size because their peak memory usage is dominated by data storage rather than by the model architecture at these resolutions. Pointwise minibatching limits the memory required by decoder activations, since it depends on the chunk size instead of the full grid size. As a result, most of the memory is used to store the input fields, predicted fields, and temporary tensors used to compute the error. These tensors have the same size for both variants because they use the same batch size, grid size, and number of snapshots. For example, at a resolution of \(1715\times1715\), storing 32 trajectories with 20 snapshots of a single-channel float32 field alone requires approximately 7 GiB. When the next pointwise chunk is processed during training, the additional memory required for its activations and temporary tensors exceeds the remaining available GPU memory, resulting in an out-of-memory error.

\subsubsection{Resource and Accuracy Changes at a Fixed Resolution}

Having established the spatial-resolution benefit of PM64, we next evaluate
its effects on GPU memory usage, computation time, and prediction
accuracy, using the SMORE-LRLQ model on the Navier--Stokes dataset at a
fixed spatial resolution of $64\times64$ as a representative case. We use
the same Navier--Stokes SMORE-LRLQ configuration as in
\Cref{sec:results_long_horizon_extrapolation}, except that it
additionally uses pointwise minibatching. These runs use four A100
80\,GB GPUs and are separate from the single-GPU capacity experiment
above; the reported memory is the peak allocated memory logged on rank 0.
\Cref{tab:ns_pointwise_tradeoff} reports training memory, elapsed time,
and prediction error.

\begin{table}[!htbp]
  \centering
  \caption{Observed resource and accuracy changes for $64\times64$
  Navier--Stokes SMORE-LRLQ. Memory is the logged training-epoch peak
  allocated memory on rank 0. Elapsed times are measured over complete
  5,000-epoch runs and refer to one representative run per execution
  setting; these timings exclude the final 1,000-step test stage. MSEs
  are mean $\pm$ sample standard deviation over three random seeds.}
  \label{tab:ns_pointwise_tradeoff}
  \small
  \begin{tabular}{lcc}
    \toprule
    Metric & Full-grid decoding & PM64 \\
    \midrule
    Recorded peak allocated memory (MB) & 18{,}613.9 & 314.6 \\
    Time for 5,000 epochs & 1\,h 8\,min 21\,s & 4\,h 4\,min 32\,s \\
    In-horizon MSE ($\times10^{-3}$) & $1.008\pm0.025$ & $1.048\pm0.014$ \\
    Extrapolation MSE ($\times10^{-3}$) & $2.000\pm0.199$ & $2.111\pm0.262$ \\
    Overall MSE ($\times10^{-3}$) & $1.530\pm0.101$ & $1.607\pm0.132$ \\
    \bottomrule
  \end{tabular}
\end{table}

At the fixed $64\times64$ resolution, PM64 reduces the recorded peak
allocated training memory from $18{,}613.9$ to $314.6$\,MB, a
$59.2\times$ reduction (about $98.31\%$), while keeping the model
architecture, spatial resolution, and per-rank batch size unchanged. The
main effect of PM64 is therefore to trade reduced memory consumption for
increased computation time. The representative full-grid run completes
5,000 epochs in 1\,h 8\,min 21\,s, whereas PM64 requires 4\,h 4\,min
32\,s, an increase of about $3.58\times$. The prediction error changes only slightly. The overall MSE increases by
about $5.0\%$, from $1.530\times10^{-3}$ to $1.607\times10^{-3}$, which is
within the across-seed standard deviation. The increase is similar in the
two windows, $4.0\%$ in-horizon (from $1.008$ to $1.048\times10^{-3}$) and
$5.6\%$ in extrapolation (from $2.000$ to $2.111\times10^{-3}$).

\FloatBarrier

\section{Conclusions}
\label{sec:Conclusions}

We introduced SMORE, a stability-promoting mesh-agnostic
reduced-order modeling framework for time-dependent PDEs.
SMORE separates spatial representation from temporal evolution. An INR autodecoder represents the physical field using a decoder modulated by a compact latent state that can be queried at arbitrary coordinates, while structured latent dynamics evolve the latent state in time.
We considered two variants, SMORE-Koopman and SMORE-LRLQ, corresponding to linear and low-rank linear--quadratic latent
dynamics, respectively.

On the theoretical side, we derived Lyapunov-based sufficient conditions for
the stability of latent dynamics based on spectral
dissipativity and cyclic energy cancellation.
These conditions motivate the stability regularizers used during training.
We further showed that the decoder is affine in the latent code. This gives
explicit constants that transfer latent-state bounds and errors to the
reconstructed fields. We also bounded how errors in sparse initial-state
recovery propagate into future full-field predictions. The recovery error is
governed by the smallest singular value of the sensor Jacobian, and its
propagation is controlled under an explicit rollout assumption.

The numerical experiments examine four complementary aspects of the proposed
framework.
First, across the 2D Wave, 2D Navier--Stokes, and 3D Spherical Shallow Water
benchmarks, SMORE achieves the lowest extrapolation error in each
case at comparable parameter budgets.
SMORE-Koopman performs best on Wave and Spherical Shallow Water, whereas
SMORE-LRLQ provides the strongest performance on the more nonlinear
Navier--Stokes system, demonstrating the complementary expressive capabilities
of the linear and linear--quadratic latent dynamics.

Second, the mechanism ablations show that the stability-related design choices
play system-dependent roles.
For Navier--Stokes, the low-rank quadratic parameterization is the dominant contributor to reducing sensitivity to the selected latent dimension, with the Lyapunov-guided regularization and gradient clipping providing additional robustness.
For Spherical Shallow Water, the low-rank quadratic parameterization is
particularly important for preventing non-finite training behavior.
The Wave results further show that these mechanisms should not be interpreted
as universal improvements in prediction accuracy; their main practical value
is flexibility in choosing the latent dimension and robustness to numerical
training failure.

Third, SMORE retains full-field predictive capability when only sparse
observations of the initial state are available.
Models trained entirely on fully observed fields can infer an initial latent
state from sparse test-time measurements and subsequently predict complete
future fields without retraining.
Prediction remains accurate under substantial spatial sparsity, and its sharp
degradation under extreme sparsity coincides with the loss of identifiability,
$c_{\mathcal S}=0$, predicted by the analysis.

Finally, pointwise minibatching provides a practical trade-off of memory and computation. 
PM increases the maximum spatial resolution on a single GPU
to $1714\times1714$, compared with $307\times307$ for the best baseline CNO.
At a fixed $64\times64$ resolution, we compare SMORE-LRLQ trained with PM64
against the same model trained with full-grid decoding. PM64 reduces the recorded peak allocated training memory by a factor of $59.2$, while increasing training time by approximately
$3.58\times$. The overall MSE changes by about $5.0\%$, within the across-seed standard deviation. 
These gains are obtained without changing either the decoder architecture or
the latent-dynamics formulation.

Overall, the results show that SMORE combines structured latent evolution with
coordinate-based decoding to support long-horizon extrapolation, improved
empirical robustness, sparse-initialization inference, and memory-efficient
spatial execution within a single reduced-order modeling framework.
Future work will extend SMORE to parametric PDEs by conditioning the decoder and latent dynamics on physical parameters, so that a single trained model can represent families of PDE systems.
We will also investigate adaptive weighting of the Lyapunov-guided stability
regularization to better balance prediction accuracy and progress toward the
desired stability conditions during training.
Additional directions include more complex geometries and unstructured meshes.

\section*{Acknowledgement}
This work is supported by the National Science Foundation (NSF) under Award No. 2616260. Shaowu Pan acknowledges support from the Google Research Scholar Program. This research used resources of the National Energy Research Scientific Computing Center (NERSC), a U.S. Department of Energy Office of Science User Facility supported by the Office of Science under Contract No. DE-AC02-05CH11231, through NERSC Award DDR-ERCAP0035696. Computational resources were also provided by the Alpha HPC cluster operated by the Empire AI Consortium, Inc., with support from Empire State Development (State of New York), the Simons Foundation, and the Secunda Family Foundation. This work further used Purdue Anvil, Texas A\&M University FASTER, and Texas A\&M University ACES through allocations PHY240112 and MCH260003 from the Advanced Cyberinfrastructure Coordination Ecosystem: Services \& Support (ACCESS) program, which is supported by U.S. National Science Foundation Grants Nos. 2138259, 2138286, 2138307, 2137603, and 2138296. Additional computing hardware support was provided by the NVIDIA Academic Grant Program. This research used resources of the Oak Ridge Leadership Computing Facility at the Oak Ridge National Laboratory, which is supported by the Office of Science of the U.S. Department of Energy under Contract No. DE-AC05-00OR22725. 
The authors also thank Mr. Jake Herman for his numerical experiments with the Koopman operator.

\bibliographystyle{unsrt}
\bibliography{\smorebasedir paper_cited,\smorebasedir arxiv_missing_refs}

\appendix
\appendix
\newpage

\section{Multiplicative Filter Network}
\label{apdx:mfn}

A MFN replaces the compositional nonlinearities of a standard feedforward network with element-wise multiplication by sinusoidal filters of the input.    An $L$-layer MFN $D_{\phi}\colon\Omega\to\mathbb{R}$ with Fourier filters is defined by:
\begin{align}
    \label{eq:mfn-1}
    h^{(0)}(\mathbf{x}) &= \mathbf{x}, \\
    \label{eq:mfn-2}
    h^{(l)}(\mathbf{x}) &= \big(W^{(l-1)}h^{(l-1)}(\mathbf{x}) + b^{(l-1)}\big) \circ g(\mathbf{x}; w^{(l)}), \quad l=1,\ldots,L-1, \\
    \label{eq:mfn-3}
    D_{\phi}(\mathbf{x}) &= W^{(L-1)} h^{(L-1)}(\mathbf{x}) + b^{(L-1)}.
\end{align}
Here $L$ counts the $L-1$ filtered hidden layers and the output layer; in terms of the number of hidden linear layers $\ell_{\mathrm{dec}}$ reported in Appendix~\Cref{tab:app_hparam_candidates}, $L=\ell_{\mathrm{dec}}+2$. In \cref{eq:mfn-1,eq:mfn-2,eq:mfn-3}, $\mathbf{x}\in \Omega$ and 
\begin{equation}
    \label{eq:fourier-embedding}
    g(\mathbf{x};w^{(l)})=
\begin{bmatrix}
\sin(w^{(l)}\mathbf{x})\\
\cos(w^{(l)}\mathbf{x})
\end{bmatrix}
\end{equation}
is a Fourier embedding with frequency matrix $w^{(l)}=\frac{w_s}{\sqrt{L-1}}\,\tilde{w}^{(l)}$, where $\tilde{w}^{(l)}$ is trainable and the scaling factor $w_s$ is shared by all $L-1$ filters, and $\circ$ is the element-wise multiplication. The dimensions are $h^{(l)}(\mathbf{x})\in\mathbb{R}^{h_{\mathrm{dec}}}$ for $l=1,\ldots,L-1$, $\tilde{w}^{(l)}\in\mathbb{R}^{(h_{\mathrm{dec}}/2)\times d}$ so that $g(\mathbf{x};w^{(l)})\in\mathbb{R}^{h_{\mathrm{dec}}}$, $W^{(l)}\in\mathbb{R}^{h_{\mathrm{dec}}\times h_{\mathrm{dec}}}$ for $l=1,\ldots,L-2$, $b^{(l)}\in\mathbb{R}^{h_{\mathrm{dec}}}$ for $l=0,\ldots,L-2$, and $W^{(L-1)}\in\mathbb{R}^{1\times h_{\mathrm{dec}}}$, $b^{(L-1)}\in\mathbb{R}$ for the scalar output layer. The set of trainable MFN parameters is 
\begin{equation}
    \label{eq:mfn_def_phi}
    \phi \coloneqq \{\{\tilde{w}^{(l)}\}_{l=1}^{L-1}, \{W^{(l)}\}_{l=1}^{L-1}, \{b^{(l)}\}_{l=0}^{L-1} \}.
\end{equation}
For the first layer, we set $W^{(0)}=0$, so that $h^{(1)}(\mathbf{x})=b^{(0)}\circ g(\mathbf{x};w^{(1)})$ with a trainable bias $b^{(0)}$. The scaling factor $w_s$ is a hyperparameter that controls the initial frequency range~\cite{fathony2020multiplicative, sitzmann2020implicit}. In this study, we set $w_s=64$ for all datasets.

\section{Feature-wise Linear Modulation}
\label{apdx:film}

Here we describe how a component latent code $z \in \mathbb{R}^p$ modulates the decoder $D_{\phi}$ by replacing $\phi$ with the modulated parameters $\Phi_\theta(z)$, specifically through a shift modulation based on Feature-wise Linear Modulation (FiLM) \cite{perez2018film}. 
The main idea is to incorporate the effect of $z$ into \cref{eq:mfn-2} via the replacement in \cref{eq:film}.  
\begin{equation}
    \label{eq:film}
    h^{(l)}(\mathbf{x}) = \big(W^{(l-1)}h^{(l-1)}(\mathbf{x}) + b^{(l-1)} + M^{(l-1)}z\big) \circ g(\mathbf{x}; w^{(l)}), \quad l=1,\ldots,L-1,
\end{equation}
where $M^{(l)}\in\mathbb{R}^{h_{\mathrm{dec}}\times p}$, $l=0,\ldots,L-2$, are additional trainable matrices. This can be viewed as a restricted hypernetwork~\cite{pan2023neural} modulation $\Phi_{\theta}(z)$ where only the biases $b^{(0)},\ldots,b^{(L-2)}$ are influenced by $z$. The trainable decoder parameters are defined as $\theta \coloneqq \phi\cup\{M^{(l)}\}_{l=0}^{L-2}$. To summarize, the $z$-modulated MFN parameterized by $\theta$ can be written as $D_{\Phi_\theta(z)}\colon\Omega\to\mathbb{R}$ and the full field is assembled component-wise as in \cref{eq:fullfield}.

\section{Dissipativity and Conservation Defect Certificates for the Linear-Quadratic System}
\label{apdx:boundedness}

We give the proof of \Cref{thm:1} and the soft bound based on the conservation defect estimate used in \cref{eq:soft_defect_bound,eq:soft_lyapunov}.  Consider the linear-quadratic latent system
\begin{equation}
    \label{eq:app_lq_system}
    \dot{\alpha}
    =
    \mathbf{b}+\mathbf{W}\alpha
    +
    \begin{bmatrix}
        \alpha^\top \mathbf{Q}^{(1)} \alpha & \cdots & 
        \alpha^\top \mathbf{Q}^{(r)} \alpha
    \end{bmatrix}^{\top}.
\end{equation}

\subsection{Symmetrization of Quadratic Slices}
For each output component $i$, the scalar quadratic form satisfies
\begin{equation}
    \alpha^\top\mathbf{Q}^{(i)}\alpha
    =
    \alpha^\top
    \left(
    \frac{\mathbf{Q}^{(i)}+(\mathbf{Q}^{(i)})^\top}{2}
    \right)
    \alpha .
\end{equation}
Only the symmetric part of each slice is identifiable from the vector field.  We may replace $\mathbf{Q}^{(i)}$ by its symmetric part without changing the ODE.  Under this convention, the shifted state $\mathbf{y}=\alpha-\mathbf{m}$ satisfies
\begin{equation}
    \label{eq:app_shifted_lq}
    \dot{\mathbf{y}}
    =
    \mathbf{d}
    +
    \mathbf{A}\mathbf{y}
    +
    \begin{bmatrix}
        \mathbf{y}^\top \mathbf{Q}^{(1)}\mathbf{y} & \cdots & 
        \mathbf{y}^\top \mathbf{Q}^{(r)}\mathbf{y}
    \end{bmatrix}^{\top},
\end{equation}
where $\mathbf{d}$ and $\mathbf{A}$ are defined in \cref{eq:shifted_d,eq:shifted_A}.  The formula for $\mathbf{A}$ follows by expanding
\[
    (\mathbf{m}+\mathbf{y})^\top \mathbf{Q}^{(i)}(\mathbf{m}+\mathbf{y})
    =
    \mathbf{m}^\top \mathbf{Q}^{(i)}\mathbf{m}
    +
    2\mathbf{m}^\top \mathbf{Q}^{(i)}\mathbf{y}
    +
    \mathbf{y}^\top \mathbf{Q}^{(i)}\mathbf{y}.
\]
Hence the $i$-th row of $\mathbf{A}$ is the $i$-th row of $\mathbf{W}$ plus $2\mathbf{m}^\top\mathbf{Q}^{(i)}$, as in \cref{eq:shifted_A}.

\subsection{Cyclic Energy Cancellation}
Let $V(\mathbf{y})=\frac{1}{2}\Vert \mathbf{y}\Vert^2$.  The contribution of the quadratic term to $\dot V$ is
\begin{align}
    \sum_{i=1}^{r} y_i\,\mathbf{y}^{\top}\mathbf{Q}^{(i)}\mathbf{y}
    &=
    \sum_{i,j,k=1}^{r}
    \mathbf{Q}^{(i)}_{j,k}y_i y_j y_k  \\
    &=
    \frac{1}{3}\sum_{i,j,k=1}^{r}
    \left(
    \mathbf{Q}^{(i)}_{j,k}
    +\mathbf{Q}^{(k)}_{i,j}
    +\mathbf{Q}^{(j)}_{k,i}
    \right)y_i y_j y_k .
\end{align}
The second equality holds because $y_iy_jy_k$ is invariant under cyclic
permutations of $(i,j,k)$: relabeling the summation indices as
$(i,j,k)\to(k,i,j)$ and $(i,j,k)\to(j,k,i)$ turns the first sum into
$\sum\mathbf{Q}^{(k)}_{i,j}y_iy_jy_k$ and $\sum\mathbf{Q}^{(j)}_{k,i}y_iy_jy_k$,
respectively, and averaging the three equal expressions gives the result.
This step does not require the slices to be symmetric. The cyclic identity in \Cref{thm:1} makes the quadratic term energy-preserving:
\begin{equation}
    \label{eq:app_cyclic_zero}
    \sum_{i=1}^{r} y_i\,\mathbf{y}^{\top}\mathbf{Q}^{(i)}\mathbf{y}=0.
\end{equation}

\subsection{Proof of the Hard Certificate}
Using \cref{eq:app_shifted_lq,eq:app_cyclic_zero},
\begin{align}
    \dot V
    &=
    \mathbf{d}^{\top}\mathbf{y}
    +
    \mathbf{y}^{\top}\mathbf{A}\mathbf{y}  \\
    &=
    \mathbf{d}^{\top}\mathbf{y}
    +
    \mathbf{y}^{\top}\mathbf{A}_s\mathbf{y},
    \qquad
    \mathbf{A}_s=\frac{\mathbf{A}+\mathbf{A}^{\top}}{2}.
\end{align}
Since the cubic term vanishes, the only term that can counteract the
constant forcing $\mathbf{d}^\top\mathbf{y}$ is the linear term
$\mathbf{y}^\top\mathbf{A}_s\mathbf{y}$. Condition~1 of \Cref{thm:1}
requires this term to be dissipative: $\mathbf{A}_s\preceq-\gamma\mathbf{I}$
with $\gamma\geq0$ means
$\mathbf{y}^\top\mathbf{A}_s\mathbf{y}\leq-\gamma\Vert\mathbf{y}\Vert^2$
for all $\mathbf{y}$, or equivalently that every eigenvalue of the
symmetric matrix $\mathbf{A}_s$ is at most $-\gamma$. Bounding the first
term by the Cauchy--Schwarz inequality,
$\mathbf{d}^\top\mathbf{y}\leq\Vert\mathbf{d}\Vert\Vert\mathbf{y}\Vert$,
and the second by condition~1 gives
\begin{equation}
    \label{eq:app_hard_lyap}
    \dot V
    \leq
    \Vert \mathbf{d}\Vert\Vert\mathbf{y}\Vert
    -
    \gamma \Vert \mathbf{y}\Vert^2 .
\end{equation}
This is the basic energy estimate behind \Cref{thm:1}. For $\gamma>0$,
its right-hand side is negative whenever
$\Vert\mathbf{y}\Vert>\Vert\mathbf{d}\Vert/\gamma$, so the energy
decreases outside the ball of radius $\Vert\mathbf{d}\Vert/\gamma$
centered at $\mathbf{m}$; this is the absorbing ball of the system.
To turn \cref{eq:app_hard_lyap} into an explicit bound on
$\Vert\mathbf{y}(t)\Vert$, we avoid the non-differentiability of
$\Vert\mathbf{y}\Vert$ at $\mathbf{y}=\mathbf{0}$ as follows.
For $\epsilon>0$, let $w_\epsilon=\sqrt{2V+\epsilon^2}=\sqrt{\Vert\mathbf{y}\Vert^2+\epsilon^2}$.
Since $\Vert\mathbf{y}\Vert\le w_\epsilon$ and $\Vert\mathbf{y}\Vert^2=w_\epsilon^2-\epsilon^2$,
\cref{eq:app_hard_lyap} gives
\[
    \dot w_\epsilon
    =\frac{\dot V}{w_\epsilon}
    \le \Vert\mathbf{d}\Vert-\gamma w_\epsilon+\frac{\gamma\epsilon^2}{w_\epsilon}
    \le \Vert\mathbf{d}\Vert+\gamma\epsilon-\gamma w_\epsilon ,
\]
where the last step uses $w_\epsilon\geq\epsilon$. This is a linear
differential inequality $\dot w_\epsilon\leq a-\gamma w_\epsilon$ with
$a=\Vert\mathbf{d}\Vert+\gamma\epsilon$. By the comparison lemma,
$w_\epsilon$ does not exceed the solution of $\dot v=a-\gamma v$ with
$v(0)=w_\epsilon(0)$, that is,
\[
    w_\epsilon(t)
    \leq
    e^{-\gamma t}w_\epsilon(0)
    +a\,\frac{1-e^{-\gamma t}}{\gamma},
\]
where the factor $(1-e^{-\gamma t})/\gamma$ is interpreted as its limit
$t$ when $\gamma=0$. Since $\Vert\mathbf{y}(t)\Vert\leq w_\epsilon(t)$
and $w_\epsilon(0)=\sqrt{\Vert\mathbf{y}(0)\Vert^2+\epsilon^2}$, this gives
\[
    \Vert\mathbf{y}(t)\Vert
    \leq
    e^{-\gamma t}\sqrt{\Vert\mathbf{y}(0)\Vert^2+\epsilon^2}
    +(\Vert\mathbf{d}\Vert+\gamma\epsilon)\,\frac{1-e^{-\gamma t}}{\gamma}.
\]
The left-hand side is independent of $\epsilon$, so letting
$\epsilon\to0$ yields, on the interval of existence,
\begin{equation}
\label{eq:app_comparison}
    \Vert\mathbf{y}(t)\Vert
    \le
    e^{-\gamma t}\Vert\mathbf{y}(0)\Vert
    +\Vert\mathbf{d}\Vert\,\frac{1-e^{-\gamma t}}{\gamma}.
\end{equation}
The right-hand side is finite for every $t$, so no finite-time blow-up
occurs and the local solution extends to all $t\ge0$.
The three statements of \Cref{thm:1} follow directly from
\cref{eq:app_comparison}: letting $t\to\infty$ with $\gamma>0$ gives
\cref{eq:hard_absorbing_radius}; setting $\mathbf{d}=\mathbf{0}$ gives
\cref{eq:hard_exponential_decay}, which reduces to the non-expansive
bound when $\gamma=0$; and setting $\gamma=0$ with
$\mathbf{d}\neq\mathbf{0}$ gives \cref{eq:hard_linear_growth}.

\subsection{Soft Conservation Defect Certificate}
Define the cyclic-condition residual tensor
\begin{equation}
    T_{ijk}
    =
    \mathbf{Q}^{(i)}_{j,k}
    +\mathbf{Q}^{(k)}_{i,j}
    +\mathbf{Q}^{(j)}_{k,i}.
\end{equation}
By the identity above and Cauchy--Schwarz,
\begin{align}
    \left|
    \sum_{i=1}^{r} y_i\,\mathbf{y}^{\top}\mathbf{Q}^{(i)}\mathbf{y}
    \right|
    &=
    \left|
    \frac{1}{3}\sum_{i,j,k=1}^{r}T_{ijk}y_i y_j y_k
    \right|  \\
    &\leq
    \frac{1}{3}\Vert T\Vert_F
    \Vert \mathbf{y}\otimes\mathbf{y}\otimes\mathbf{y}\Vert_F \\
    &=
    \eta_Q \Vert \mathbf{y}\Vert^3,
    \qquad
    \eta_Q=\frac{1}{3}\Vert T\Vert_F.
\end{align}
Since $L_{\text{reg,cons}}=\Vert T\Vert_F^2$, we have $\eta_Q=\frac{1}{3}L_{\text{reg,cons}}^{1/2}$.  Combining this bound with $\mathbf{A}_s\preceq-\gamma\mathbf{I}$ yields
\[
    \dot V
    \leq
    \Vert \mathbf{d}\Vert\Vert\mathbf{y}\Vert
    -
    \gamma\Vert\mathbf{y}\Vert^2
    +
    \eta_Q\Vert\mathbf{y}\Vert^3.
\]
The following finite-radius soft certificate requires a strictly positive
margin and a nonzero defect, so assume $\gamma>0$ and $\eta_Q>0$.
Writing $R=\Vert\mathbf{y}\Vert$, the estimate above reads
$\dot V\leq R\,(\Vert\mathbf{d}\Vert-\gamma R+\eta_Q R^2)$, in which, for $\dot V<0$, the
dissipation $-\gamma R$ must overcome both the constant forcing
$\Vert\mathbf{d}\Vert$ and the defect term $\eta_Q R^2$. We allot half of
the dissipation to each. First, the defect term uses at most half of the
dissipation, $\eta_Q R^2\leq\frac{\gamma}{2}R$, provided
\[
    R\leq\frac{\gamma}{2\eta_Q}.
\]
Under this condition,
$\dot V\leq R\,(\Vert\mathbf{d}\Vert-\frac{\gamma}{2}R)$, and the
remaining half of the dissipation overcomes the constant forcing,
giving $\dot V<0$, provided
\[
    R>\frac{2\Vert\mathbf{d}\Vert}{\gamma}.
\]
Both conditions hold for every $R$ in the interval
\[
    \left(
    \frac{2\Vert\mathbf{d}\Vert}{\gamma},
    \frac{\gamma}{2\eta_Q}
    \right],
\]
which is nonempty if and only if $4\Vert\mathbf{d}\Vert\eta_Q<\gamma^2$.
Hence, for any $R_{\mathrm{cert}}$ in this interval, the boundary
$\Vert\mathbf{y}\Vert=R_{\mathrm{cert}}$ satisfies
\[
    \dot V
    \leq
    \Vert\mathbf{d}\Vert R_{\mathrm{cert}}
    -
    \frac{\gamma}{2}R_{\mathrm{cert}}^2
    <0.
\]
Hence the ball $\{\Vert\alpha-\mathbf{m}\Vert\leq R_{\mathrm{cert}}\}$ is forward invariant. Moreover, inside this ball, $R_{\mathrm{cert}}\le\gamma/(2\eta_Q)$ gives
$\eta_Q\Vert\mathbf{y}\Vert^3\le\frac{\gamma}{2}\Vert\mathbf{y}\Vert^2$, so
$\dot V\le\Vert\mathbf{d}\Vert\Vert\mathbf{y}\Vert-\frac{\gamma}{2}\Vert\mathbf{y}\Vert^2$.
This is \cref{eq:app_hard_lyap} with $\gamma$ replaced by $\gamma/2$, and the
argument of the hard case yields the practical estimate
\[
    \limsup_{t\to\infty}\Vert\alpha(t)-\mathbf{m}\Vert
    \leq
    \frac{2\Vert\mathbf{d}\Vert}{\gamma}
\]
for trajectories starting in the certified region.  When $\eta_Q=0$, the upper endpoint involving $\gamma/(2\eta_Q)$ is not used; the conservation defect vanishes and the hard certificate is recovered.

\section{Decoder Boundedness and Lipschitz Transfer}
\label{apdx:decoder_transfer}

We prove \Cref{thm:decoder_transfer} and the constants in the subsequent
remark. By \cref{eq:fullfield}, the decoder is applied separately to
each component code $z\in\mathbb{R}^p$.

\subsection{Affine Structure}
Up to a relabeling of layer indices, the FiLM-modulated MFN of
Appendices~\ref{apdx:mfn} and~\ref{apdx:film} has hidden states
\[
    h^{(0)}(x)=x,
    \qquad
    h^{(\ell+1)}(x,z)
    =
    \big(W_\ell h^{(\ell)}(x,z)+b_\ell+A^{(\ell)}z\big)\circ g_\ell(x),
\]
for $\ell=0,\ldots,L-1$, and output
$D_{\Phi_\theta(z)}(x)=W_Lh^{(L)}(x,z)+b_L$. Each layer applies only
affine operations to $(h^{(\ell)},z)$, followed by multiplication with
$g_\ell(x)$, which does not depend on $z$. Hence, if
$h^{(\ell)}(x,z)=p_\ell(x)+P_\ell(x)z$, then
\[
    h^{(\ell+1)}(x,z)
    =
    \operatorname{diag}(g_\ell(x))\big(W_\ell p_\ell(x)+b_\ell\big)
    +
    \operatorname{diag}(g_\ell(x))\big(W_\ell P_\ell(x)+A^{(\ell)}\big)z,
\]
which has the same form. Starting from $p_0(x)=x$ and $P_0=0$, induction
gives $D_{\Phi_\theta(z)}(x)=c_0(x)+J_0(x)z$ with $c_0=W_Lp_L+b_L$ and
$J_0=W_LP_L\in\mathbb{R}^{1\times p}$. Both are continuous in $x$, since
they are built from $x$ and the sinusoidal filters by sums and products,
and are therefore bounded on the compact set $\Omega$. Assembling the
$c$ components gives
\[
    D_{\Phi_\theta(\alpha)}(x)
    =
    c_0(x)\mathbf{1}_c+(\mathbf{I}_c\otimes J_0(x))\alpha .
\]

\subsection{Growth and Lipschitz Bounds}
By the triangle inequality,
$\Vert D_{\Phi_\theta(\alpha)}(x)\Vert\le\sqrt{c}\,|c_0(x)|+\Vert J_0(x)\Vert\Vert\alpha\Vert$,
which gives \cref{eq:decoder_growth_method} with
$C_D=\sqrt{c}\,\sup_{x\in\Omega}|c_0(x)|$ and
$D_D=\sup_{x\in\Omega}\Vert J_0(x)\Vert$. Write $\alpha-\beta$ as the
stack of $\Delta z^{(1)},\ldots,\Delta z^{(c)}$ and let
$G=\int_\Omega J_0(x)^\top J_0(x)\,dx$. Then
\[
    \Vert D_{\Phi_\theta(\alpha)}-D_{\Phi_\theta(\beta)}\Vert_{L^2(\Omega)}^2
    =
    \sum_{q=1}^c\Delta z^{(q)\top}G\,\Delta z^{(q)}
    \le
    \lambda_{\max}(G)\Vert\alpha-\beta\Vert^2,
\]
with equality when every $\Delta z^{(q)}$ is an eigenvector of $G$ for
$\lambda_{\max}(G)$. This gives \cref{eq:decoder_lipschitz_transfer_method}
with the attained constant $L_D=\lambda_{\max}(G)^{1/2}$. Since
$J_0^\top J_0\preceq\Vert J_0\Vert^2\mathbf{I}$ pointwise,
$G\preceq|\Omega|D_D^2\mathbf{I}$ and $L_D\le|\Omega|^{1/2}D_D$.

\subsection{Bound on the Decoded Field}
If $\Vert\alpha_t-\mathbf m\Vert\le R$ for all $t\in[0,T]$, then
$\Vert\alpha_t\Vert\le\Vert\mathbf m\Vert+R$, and
\cref{eq:decoder_growth_method} together with
$\Vert f\Vert_{L^2(\Omega)}\le|\Omega|^{1/2}\sup_{x\in\Omega}\Vert f(x)\Vert$
gives \cref{eq:decoded_field_bound_method}. The three values of $R$
stated after the theorem follow from \cref{eq:app_comparison}: if
$\mathbf d=\mathbf 0$, its right-hand side is at most
$\Vert\mathbf y(0)\Vert$; if $\gamma>0$, it is a convex combination of
$\Vert\mathbf y(0)\Vert$ and $\Vert\mathbf d\Vert/\gamma$, hence at most
their maximum; and if $\gamma=0$, it is at most
$\Vert\mathbf y(0)\Vert+\Vert\mathbf d\Vert T$.

\section{Sparse-Measurement Latent Recovery}
\label{apdx:sparse_recovery}

This section formalizes the test-time inference step described in \Cref{sec:method}, in which the initial latent state is inferred from a possibly sparse first-snapshot measurement by autodecoding (\cref{eq:inr-loss-1}). We derive a stability estimate for that inverse problem as the theoretical counterpart of the sparse-observation ablation in \Cref{sec:results_sparse_mask_ablation}.

Let $\mathcal{S}=\{x_1,\ldots,x_s\}\subset\Omega$ be the observed sensor locations of the initial snapshot, and define the sparse measurement operator and the latent-to-sensor map
\begin{equation}
    P_{\mathcal S}u=
    \begin{bmatrix}
    u(x_1)\\ \vdots\\ u(x_s)
    \end{bmatrix},
    \qquad
    G_{\mathcal S}(\alpha)=P_{\mathcal S}D_{\Phi_\theta(\alpha)} .
\end{equation}
Restricting the autodecoding objective in \cref{eq:inr-loss-1} to $\mathcal S$ seeks a latent code whose decoded sensor values match the measurement. Although the implementation minimizes the squared residual, the slack variable $\delta_{\mathrm{opt}}$ below is defined directly in terms of the residual norm for this estimate.

\begin{theorem}[Sparse-measurement latent recovery]
\label{thm:sparse_recovery}
Let the initial measurement be $y_{\mathrm{obs}}=P_{\mathcal S}u_0+\xi$ with noise $\Vert\xi\Vert\le\sigma$. Assume:
\begin{enumerate}[label=(\roman*)]
    \item \label{assump:observability}
    (Restricted observability) there exist a latent set $\mathcal B\subset\mathbb{R}^r$ and a constant $c_{\mathcal S}>0$ such that $c_{\mathcal S}\Vert\alpha-\beta\Vert\le\Vert G_{\mathcal S}(\alpha)-G_{\mathcal S}(\beta)\Vert$ for all $\alpha,\beta\in\mathcal B$;

    \item \label{assump:representable}
    (Representable initial state) there exists $\alpha_0\in\mathcal B$ with $\Vert G_{\mathcal S}(\alpha_0)-P_{\mathcal S}u_0\Vert\le\varepsilon_{\mathrm{sens}}$;

    \item \label{assump:nearoptimal}
    (Near-optimal fit) the inferred code $\widehat\alpha_0\in\mathcal B$ satisfies $\Vert G_{\mathcal S}(\widehat\alpha_0)-y_{\mathrm{obs}}\Vert\le\inf_{\alpha\in\mathcal B}\Vert G_{\mathcal S}(\alpha)-y_{\mathrm{obs}}\Vert+\delta_{\mathrm{opt}}$.
\end{enumerate}
Then
\begin{equation}
\label{eq:sparse_recovery_bound}
    \Vert\widehat\alpha_0-\alpha_0\Vert
    \le
    \frac{2(\varepsilon_{\mathrm{sens}}+\sigma)+\delta_{\mathrm{opt}}}{c_{\mathcal S}} .
\end{equation}
\end{theorem}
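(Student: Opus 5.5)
The plan is a triangle-inequality argument in sensor space, closed with the restricted-observability lower bound of assumption (i). Since both $\widehat\alpha_0$ and $\alpha_0$ lie in $\mathcal B$, (i) gives
\[
c_{\mathcal S}\Vert\widehat\alpha_0-\alpha_0\Vert\le\Vert G_{\mathcal S}(\widehat\alpha_0)-G_{\mathcal S}(\alpha_0)\Vert ,
\]
so it suffices to bound the right-hand side by $2(\varepsilon_{\mathrm{sens}}+\sigma)+\delta_{\mathrm{opt}}$. We insert $y_{\mathrm{obs}}$ and split:
\[
\Vert G_{\mathcal S}(\widehat\alpha_0)-G_{\mathcal S}(\alpha_0)\Vert\le\Vert G_{\mathcal S}(\widehat\alpha_0)-y_{\mathrm{obs}}\Vert+\Vert G_{\mathcal S}(\alpha_0)-y_{\mathrm{obs}}\Vert .
\]

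The second term is handled first, because it also controls the first. Using $y_{\mathrm{obs}}=P_{\mathcal S}u_0+\xi$, the triangle inequality, assumption (ii), and the noise bound give
\[
\Vert G_{\mathcal S}(\alpha_0)-y_{\mathrm{obs}}\Vert\le\Vert G_{\mathcal S}(\alpha_0)-P_{\mathcal S}u_0\Vert+\Vert\xi\Vert\le\varepsilon_{\mathrm{sens}}+\sigma .
\]

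For the first term, assumption (iii) bounds it by $\inf_{\alpha\in\mathcal B}\Vert G_{\mathcal S}(\alpha)-y_{\mathrm{obs}}\Vert+\delta_{\mathrm{opt}}$. Because $\alpha_0\in\mathcal B$ is an admissible competitor, the infimum is at most $\Vert G_{\mathcal S}(\alpha_0)-y_{\mathrm{obs}}\Vert\le\varepsilon_{\mathrm{sens}}+\sigma$. The first term is therefore at most $\varepsilon_{\mathrm{sens}}+\sigma+\delta_{\mathrm{opt}}$.

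Adding the two bounds gives $2(\varepsilon_{\mathrm{sens}}+\sigma)+\delta_{\mathrm{opt}}$. Dividing by $c_{\mathcal S}>0$ yields \cref{eq:sparse_recovery_bound}.

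The argument has no real obstacle. The only point needing care is that both codes must lie in $\mathcal B$, so that (i) applies and $\alpha_0$ can be used as a competitor in the infimum. Assumptions (ii) and (iii) place $\alpha_0$ and $\widehat\alpha_0$ in $\mathcal B$ explicitly. The slack $\delta_{\mathrm{opt}}$ is defined in residual norm rather than squared residual, so it enters the bound additively without any square-root manipulation.
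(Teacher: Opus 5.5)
Your proposal is correct and follows essentially the same argument as the paper. The paper also uses $\alpha_0$ as a competitor in the infimum, bounds the residual at $\alpha_0$ by $\varepsilon_{\mathrm{sens}}+\sigma$, applies the triangle inequality through $y_{\mathrm{obs}}$, and closes with restricted observability.
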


\begin{proof}
Since $\alpha_0\in\mathcal B$, the infimum in assumption~\ref{assump:nearoptimal}
is no larger than its value at $\alpha_0$.
By the triangle inequality and assumption~\ref{assump:representable},
\[
    \Vert G_{\mathcal S}(\alpha_0)-y_{\mathrm{obs}}\Vert
    \le \Vert G_{\mathcal S}(\alpha_0)-P_{\mathcal S}u_0\Vert+\Vert\xi\Vert
    \le \varepsilon_{\mathrm{sens}}+\sigma .
\]
Combining with assumption~\ref{assump:nearoptimal} gives $\Vert G_{\mathcal S}(\widehat\alpha_0)-y_{\mathrm{obs}}\Vert\le\varepsilon_{\mathrm{sens}}+\sigma+\delta_{\mathrm{opt}}$. A second triangle inequality yields
\[
    \Vert G_{\mathcal S}(\widehat\alpha_0)-G_{\mathcal S}(\alpha_0)\Vert
    \le \Vert G_{\mathcal S}(\widehat\alpha_0)-y_{\mathrm{obs}}\Vert
    +\Vert y_{\mathrm{obs}}-G_{\mathcal S}(\alpha_0)\Vert
    \le 2(\varepsilon_{\mathrm{sens}}+\sigma)+\delta_{\mathrm{opt}} .
\]
The restricted-observability assumption~\ref{assump:observability} gives $c_{\mathcal S}\Vert\widehat\alpha_0-\alpha_0\Vert\le\Vert G_{\mathcal S}(\widehat\alpha_0)-G_{\mathcal S}(\alpha_0)\Vert$; dividing by $c_{\mathcal S}>0$ proves \cref{eq:sparse_recovery_bound}.
\end{proof}

\begin{remark}
The constant $c_{\mathcal S}$ quantifies how well the sparse sensors identify the latent code; it is an assumption on the decoder--sensor pair rather than a consequence of \Cref{thm:1}. Since $G_{\mathcal S}(\alpha)-G_{\mathcal S}(\beta)=J_{\mathcal S}(\alpha-\beta)$, the map $G_{\mathcal S}$ is bi-Lipschitz with constants $\sigma_{\min}(J_{\mathcal S})$ and $\sigma_{\max}(J_{\mathcal S})$, so the recovered latent,
and hence the reconstructed field, degrades continuously as the noise
$\sigma$, representation error $\varepsilon_{\mathrm{sens}}$, and optimization
slack $\delta_{\mathrm{opt}}$ grow. Reducing the visible fraction of the initial snapshot can shrink $c_{\mathcal S}$, although the exact change depends on the sensor locations, the mask construction, and the trained decoder. The empirical transition observed around $2\%$--$4\%$ visibility in
\Cref{tab:mask_overall_test,tab:app_mask_overall_test_full} is consistent with
this interpretation, since observability remains adequate for moderately
sparse masks but eventually becomes insufficient when too few spatial samples are
available. \Cref{tab:app_mask_observability} confirms that $c_{\mathcal S}$ vanishes exactly at the visibility levels where $s<p$.
\end{remark}

\subsection{Proof of the Sparse-to-Rollout Full-Field Certificate}
\label{apdx:sparse_to_rollout_proof}

We prove \Cref{thm:sparse_to_rollout}. By \Cref{thm:sparse_recovery} and the definition of $e_{\mathrm{sparse}}$ in \cref{eq:e_sparse_method}, the inferred initial latent state satisfies
\begin{equation}
\label{eq:apdx_initial_sparse_error}
    E_0
    =
    \Vert\widehat\alpha_0-\alpha_0^\star\Vert
    \le
    e_{\mathrm{sparse}} .
\end{equation}
The rollout recurrence in \cref{eq:sparse_rollout_recursion_method} then implies, for every $0\le n\le N$ over the finite horizon under consideration,
\begin{equation}
\label{eq:apdx_discrete_rollout_bound}
    E_n
    \le
    a^n e_{\mathrm{sparse}}
    +
    b_{\mathrm{roll}}\sum_{\ell=0}^{n-1}a^\ell .
\end{equation}
Indeed, the claim is immediate for $n=0$. If it holds at $n$, then using $E_{n+1}\le aE_n+b_{\mathrm{roll}}$ and $a\ge0$ gives
\[
    E_{n+1}
    \le
    a^{n+1}e_{\mathrm{sparse}}
    +
    b_{\mathrm{roll}}\sum_{\ell=1}^{n}a^\ell
    +
    b_{\mathrm{roll}}
    =
    a^{n+1}e_{\mathrm{sparse}}
    +
    b_{\mathrm{roll}}\sum_{\ell=0}^{n}a^\ell ,
\]
which proves \cref{eq:apdx_discrete_rollout_bound} by induction.

For the decoded field, the triangle inequality gives
\begin{align}
    \left\Vert D_{\Phi_\theta(\widehat\alpha_n)}-u_n\right\Vert_{L^2(\Omega)}
    &\le
    \left\Vert
    D_{\Phi_\theta(\widehat\alpha_n)}
    -
    D_{\Phi_\theta(\alpha_n^\star)}
    \right\Vert_{L^2(\Omega)}
    +
    \left\Vert D_{\Phi_\theta(\alpha_n^\star)}-u_n\right\Vert_{L^2(\Omega)}  \\
    &\le
    L_D E_n+\varepsilon_{\mathrm{dec}} .
\end{align}
Substituting \cref{eq:apdx_discrete_rollout_bound} yields \cref{eq:sparse_to_rollout_bound_method} on the finite horizon. If the recurrence and decoder bounds hold uniformly for all $n\ge0$ and $0\le a<1$, then $a^n e_{\mathrm{sparse}}\to0$ and $\sum_{\ell=0}^{n-1}a^\ell\to(1-a)^{-1}$, which gives the asymptotic bound \cref{eq:sparse_to_rollout_limsup_method}.

\section{Details of Scheduled Sampling}
\label{apdx:scheduled_sampling}

Scheduled sampling~\cite{bengio2015scheduled} was introduced to mitigate exposure bias in long-sequence modeling, which arises from the accumulation of prediction errors when only the initial condition is given in real-world scenarios (see \cref{fig:apdx_scheduled_sampling}). 
Interestingly, exposure bias closely resembles the accumulated error problem in learning closure models for dynamical systems~\cite{pan2018_ddc}, where models trained using only one-step minimization (i.e., teacher forcing) often fail to generalize effectively.
The core idea of scheduled sampling is to gradually reduce reliance on teacher forcing during training.
In \Cref{alg:scheduled_sample_node}, $N_{\mathrm{train}}$ denotes the length of the
supplied time array. During training, this routine is called with
$N_{\mathrm{train}}=10$, so it returns the
initial state and nine subsequent states. Evaluation uses the initial
state and 19 subsequent prediction times, without teacher-forcing resets.
In our neural-ODE implementation, teacher forcing is applied by randomly selecting intermediate time points as restart anchors. The latent trajectory is split into integration segments, and each selected anchor starts the next segment from the corresponding target latent state rather than from the model prediction. 
We define the probability of selecting a time point as a teacher-forcing anchor as
$\varepsilon_t = \varepsilon_0 \rho_{\mathrm{TF}}^i$,
where $i$ represents the number of training epochs elapsed and $\rho_{\mathrm{TF}}$ is the
per-epoch decay factor. In our experiments, we set
$\varepsilon_0=0.99$ and $\rho_{\mathrm{TF}}=0.99$, following an approach similar to that used in autoregressive models. Finally, the forward propagation of the neural ODE with scheduled sampling is summarized in \Cref{alg:scheduled_sample_node}.

\begin{figure}[ht]
\centering
\includegraphics[width=12cm]{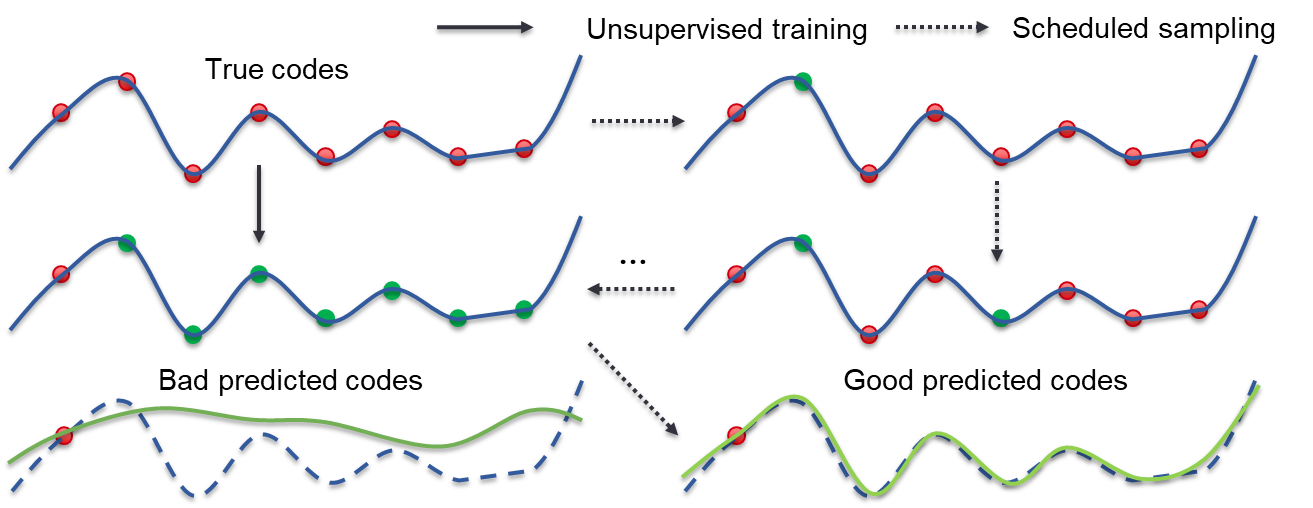}
\caption{Scheduled sampling for latent-dynamics training.
The upper-left panel shows target latent codes at supervised time
stamps. The intermediate panels illustrate the progressive replacement
of target codes by model-generated states during training.
Intermediate target states are used as teacher-forcing anchors with
probability $\varepsilon_t$; as this probability decays, training
increasingly relies on free-running latent rollouts.
The bottom panels schematically contrast a poorly matched rollout
without intermediate anchors (left) with an improved rollout obtained
using scheduled sampling (right).}
\label{fig:apdx_scheduled_sampling}
\end{figure}
\FloatBarrier

\begin{algorithm}[ht]
\caption{Forward propagation of the neural ODE with scheduled sampling}
\label{alg:scheduled_sample_node}
\begin{algorithmic}
\Function {scheduled\_sampling\_ode\_solve}{$f_\psi$, $\{\alpha_{t_l}\}_{l=1}^{N_{\mathrm{train}}}$, $\mathcal{T}_{N_{\mathrm{train}}}$, $\varepsilon_t$}

\State $N_{\mathrm{train}}\leftarrow |\mathcal{T}_{N_{\mathrm{train}}}|$
\If{$\varepsilon_t < 10^{-3}$}
\Comment{Classic full-length rollout is used for forward propagation}
\State \Return $\{\alpha(t_l)\}_{l=1}^{N_{\mathrm{train}}} \leftarrow \texttt{ODEInt}(f_\psi, \alpha_{t_1},t_1,\ldots,t_{N_{\mathrm{train}}})$

\Else
\State Generate Boolean variables $\{m_i\}_{i=2}^{N_{\mathrm{train}}-1}$, where each $m_i$ is True with probability $\varepsilon_t$. \(\varepsilon_t\) of being selected as a teacher-forcing anchor
\State Force the very last time point to \texttt{False} so the model must predict it
\State Discard the first entry of the mask
\State Set the start index $i_s \gets 1$ to mark the start index of each integration trajectory
\State $b \gets [\alpha_{t_1}]$
\Comment{store the initial anchor so the returned sequence has length $N_{\mathrm{train}}$}
\For{\(i=2\) to \(N_{\mathrm{train}}-1\)}
\If{\(m_i \text{ is } \texttt{True}\)}
\Comment{Restart the rollout from a teacher-forcing anchor.}
\State Determine a time sub-interval from \(t_{i_s}\) up to \(t_i\).
\State $\{\alpha(t_l)\}_{l=i_s}^{i} \leftarrow \texttt{ODEInt}(f_\psi, \alpha_{t_{i_s}},t_{i_s},\ldots,t_{i})$
\State Append $b$ with the integrated outputs of $\{\alpha(t_l)\}_{l=i_s+1}^{i}$
\State $i_s \gets i$
\EndIf
\EndFor
\State $\{\alpha(t_l)\}_{l=i_s}^{N_{\mathrm{train}}} \leftarrow \texttt{ODEInt}(f_\psi, \alpha_{t_{i_s}},t_{i_s},\ldots,t_{N_{\mathrm{train}}})$
\Comment{Numerical integration of last interval}
\State Append $b$ with the integrated outputs of $\{\alpha(t_l)\}_{l=i_s+1}^{N_{\mathrm{train}}}$
\State \Return $b$
\Comment{$b=\{\alpha(t_l)\}_{l=1}^{N_{\mathrm{train}}}$ includes the initial state}
\EndIf
\EndFunction
\end{algorithmic}
\end{algorithm}

\FloatBarrier

\newpage
\section{Experimental Details and Results}
\label{apdx:addition}

\subsection{Dataset Details}
\label{apdx:dataset_details}

\subsubsection{2D Wave}
The 2D Wave dataset follows the second-order wave equation used in
\cite{Yin2022Continuous}:
\begin{equation}
    \frac{\partial^2 u}{\partial t^2}
    =
    v^2
    \left(
    \frac{\partial^2 u}{\partial x^2}
    +
    \frac{\partial^2 u}{\partial y^2}
    \right),
\end{equation}
where $v$ is the wave speed. We set $v=2$ and define the spatial domain as
$\Omega=[-1,1]^2$. The initial displacement is sampled as a Gaussian pulse:
\begin{equation}
    u_0(x,y;\,a,\mathbf{b},\sigma)
    =
    a
    \exp
    \left(
    -\frac{(x-b_x)^2+(y-b_y)^2}{2\sigma^2}
    \right),
\end{equation}
where $a\sim U(2,4)$ is the peak displacement amplitude,
$\mathbf{b}=(b_x,b_y)$ is the peak location sampled uniformly from $\Omega$, and
$\sigma\sim U(0.25,0.30)$ is the standard deviation. The initial time derivative is
set to zero:
\begin{equation}
    \frac{\partial u}{\partial t}(0,x,y)=0.
\end{equation}
The benchmark prediction target is a two-channel wave field, denoted by $u_x$ and $u_y$ following the dataset convention. These channels are treated as the two predicted components of the surrogate model; the notation does not introduce additional spatial derivatives in our method.

We generate 256 trajectories for training and 16 trajectories for testing. Each
trajectory is partitioned into two non-overlapping sequences, yielding 512 training
sequences and 32 test sequences in total. Each sequence contains 20 snapshots sampled
every $\Delta t=0.25\,\mathrm{s}$, and each snapshot is represented on a uniform
$64^2$ spatial grid.

\subsubsection{2D Navier--Stokes}
The 2D Navier--Stokes dataset is generated from the pressure-free vorticity
formulation of the forced incompressible Navier--Stokes equations using the
streamfunction-velocity approach~\cite{li2020fourier}. The vorticity field evolves as
\begin{equation}
    \dot{w}(t,x,y)
    +
    v(t,x,y)\cdot\nabla w(t,x,y)
    =
    \nu\nabla^2 w(t,x,y)
    +
    f(x,y),
\end{equation}
where $\nu$ is the viscosity, $v(t,x,y)$ denotes the velocity field, and
$w(t,x,y)=\nabla\times v(t,x,y)$ is the vorticity. The vorticity field is used as the
prediction target. The forcing term is fixed as
\begin{equation}
    f(x,y)
    =
    0.1
    \left[
    \sin(2\pi(x+y))
    +
    \cos(2\pi(x+y))
    \right].
\end{equation}

The spatial domain is $\Omega=[-1,1]^2$. We set $\nu=10^{-3}$, impose periodic
boundary conditions, and use the same initial-condition sampling strategy as
\cite{li2020fourier}. We generate 256 trajectories for training and 16 trajectories
for testing. Each trajectory is partitioned into two non-overlapping sequences,
yielding 512 training sequences and 32 test sequences in total. Each sequence
contains 20 snapshots sampled every $\Delta t=1\,\mathrm{s}$, and each snapshot is
represented on a uniform $64^2$ spatial grid.

\subsubsection{3D Spherical Shallow Water}
\label{app:3dswdataset}
The 3D Spherical Shallow Water dataset follows the setup of
\cite{galewsky2004initial,Yin2022Continuous}. The governing equations are
\begin{equation}
    \begin{split}
        \frac{Du}{Dt}
        &=
        -f\,k\times u
        -
        g\nabla h
        +
        \nu\Delta u,\\
        \frac{Dh}{Dt}
        &=
        -h\nabla\cdot u
        +
        \nu\Delta h,
    \end{split}
\end{equation}
where $u$ is the velocity field tangent to the spherical surface, $h$ is the fluid
layer thickness, $k$ is the unit vector normal to the sphere, and $f$, $g$, and
$\nu$ denote the Coriolis parameter, gravitational acceleration, and viscosity,
respectively.

The initial zonal velocity is prescribed as
\begin{equation}
    u_0(\varphi,\theta)
    =
    \begin{cases}
    \left(
    \dfrac{u_{\max}}{e_n}
    \exp
    \left(
    \dfrac{1}{(\phi-\phi_0)(\phi-\phi_1)}
    \right),
    0
    \right),
    & \phi\in(\phi_0,\phi_1),\\[1ex]
    \left(
    \dfrac{u_{\max}}{e_n}
    \exp
    \left(
    \dfrac{1}{(\phi+\phi_0)(\phi+\phi_1)}
    \right),
    0
    \right),
    & \phi\in(-\phi_1,-\phi_0),\\[1ex]
    (0,0),
    & \text{otherwise},
    \end{cases}
\end{equation}
where
\begin{equation}
    e_n
    =
    \exp
    \left(
    -\frac{4}{(\phi_1-\phi_0)^2}
    \right).
\end{equation}
The initial water-height perturbation is defined as
\begin{equation}
    h_0'(\phi,\theta)
    =
    \hat{h}\cos(\phi)
    \exp
    \left(
    -\left(\frac{\theta}{\alpha_h}\right)^2
    \right)
    \left[
    \exp
    \left(
    -\left(\frac{\phi_2-\phi}{\beta}\right)^2
    \right)
    +
    \exp
    \left(
    -\left(\frac{\phi_2+\phi}{\beta}\right)^2
    \right)
    \right].
\end{equation}

Simulations are performed using Dedalus~\cite{burns2020dedalus} on a
latitude-longitude grid of size $128\times256$. The physical parameters follow
\cite{galewsky2004initial}. The initial-condition parameters are set to
\[
    \phi_0=\frac{\pi}{7},\quad
    \phi_1=\frac{\pi}{2}-\phi_0,\quad
    \phi_2=\frac{\pi}{4},\quad
    \hat{h}=120\,\mathrm{m},\quad \alpha_h=\frac{1}{3},\quad \beta=\frac{1}{15}.
\]
Trajectory-level variation is introduced by sampling
$u_{\max}\sim U(60,80)$. The model predicts two fields, the fluid layer thickness $h$ and the vorticity
field $w=\nabla\times u$.

The simulated $128\times256$ latitude--longitude grid is uniformly
subsampled by a factor of two in both spatial directions, yielding a
$64\times128$ grid. Each simulation records one snapshot per
hour until the final time of 320 hours. The first 160 h correspond to the initial geostrophic adjustment and jet spin-up, after which the flow settles into a quasi-periodic regime. To remove the transient spin-up phase, we
use the last 160 hours of each trajectory and divide them into eight consecutive
non-overlapping sequences of length 20. The height field $h$ is multiplied by
$3\times10^3$, and the vorticity field $w$ is scaled by 2. In total, eight long
trajectories are used for training, yielding 64 training sequences, and two long
trajectories are used for testing, yielding 16 test sequences.

\newpage
\subsection{Hyper-Parameter Selection and Computational Environment}
\label{app:Hyper_parameters}
The nominal training budget for the main model comparison is 5,000 epochs.
The stability study uses a separate 10,000-epoch budget, and
configurations that develop non-finite values are identified in its results.
During training, each run is evaluated every 100 epochs on test set. All experiments use three random seeds, selected once and then held fixed. The batch size is dataset-dependent, with 32 for 2D Wave, 32 for 2D
Navier--Stokes, and 4 for 3D Spherical Shallow Water.
The hyperparameters of baseline neural operators, listed in \Cref{tab:app_hparam_candidates}, are varied to match parameter budgets. The CNO implementation requires
square input grids. For Spherical Shallow Water, we therefore
downsample the $64\times128$ fields to $64\times64$ before
applying CNO. 

\begin{table}[!htbp]
  \centering
  \caption{Architecture hyper-parameters used in the final comparisons.
  The abbreviations \texttt{wa}, \texttt{ns}, and \texttt{sw} denote 2D Wave,
  2D Navier--Stokes, and Spherical Shallow Water, and ``--'' marks a parameter
  that does not apply.
  For the baseline operators, FNO is specified by $(M,W,L)$, where $M$ is the
  number of retained Fourier modes per dimension, $W$ the channel width, and
  $L$ the number of Fourier layers;
  CNO by $(L,R,RN,CM)$, where $L$ is the number of up/down-sampling layers,
  $R$ the number of residual blocks per level, $RN$ the number of residual
  blocks in the bottleneck, and $CM$ the channel multiplier;
  and Transolver by $(L,H,S)$, where $L$ is the number of layers, $H$ the
  hidden dimension, and $S$ the number of physics-attention slices.
  For the latent models, in which MLP denotes the DINo dynamics,
  $p$ is the per-component latent dimension and $r=cp$ the total latent
  dimension, $\mathrm{hdyn}$ is the hidden width of the MLP latent-dynamics
  network, $\mathrm{hdec}$ is the hidden width of the INR (MFN) decoder,
  $\ell_{\mathrm{dec}}$ is the number of hidden linear layers of the decoder, and $\mathrm{qr}$ is the
  quadratic rank $\kappa$ of the low-rank factorization used by SMORE-LRLQ.}
  \small
  \setlength{\tabcolsep}{5pt}
  \begin{tabular}{llrrrrrr}
    \toprule
    \multicolumn{8}{l}{\textit{Baseline models}} \\
    \midrule
    Dataset & Model & \multicolumn{6}{l}{Architecture} \\
    \midrule
    \texttt{wa} & FNO        & \multicolumn{6}{l}{$(M,W,L)=(12,10,4)$} \\
                & CNO        & \multicolumn{6}{l}{$(L,R,RN,CM)=(3,4,6,6)$} \\
                & Transolver & \multicolumn{6}{l}{$(L,H,S)=(5,33,64)$} \\
    \midrule
    \texttt{ns} & FNO        & \multicolumn{6}{l}{$(M,W,L)=(12,17,4)$} \\
                & CNO        & \multicolumn{6}{l}{$(L,R,RN,CM)=(3,1,8,10)$} \\
                & Transolver & \multicolumn{6}{l}{$(L,H,S)=(8,46,32)$} \\
    \midrule
    \texttt{sw} & FNO        & \multicolumn{6}{l}{$(M,W,L)=(12,25,4)$} \\
                & CNO        & \multicolumn{6}{l}{$(L,R,RN,CM)=(3,3,8,10)$} \\
                & Transolver & \multicolumn{6}{l}{$(L,H,S)=(8,46,64)$} \\
    \midrule
    \multicolumn{8}{l}{\textit{Latent models}} \\
    \midrule
    Dataset & Model & $p$ & $r$ & $\mathrm{hdyn}$ & $\mathrm{hdec}$ & $\ell_{\mathrm{dec}}$ & $\mathrm{qr}$ \\
    \midrule
    \texttt{wa} & MLP            & 100 & 200 & 146 & 46  & 3 & -- \\
                & SMORE-Koopman & 100 & 200 & --  & 116 & 3 & -- \\
                & SMORE-LRLQ           & 40  & 80  & --  & 112 & 3 & 5  \\
    \midrule
    \texttt{ns} & MLP            & 72  & 72  & 368 & 64  & 3 & -- \\
                & SMORE-Koopman & 128 & 128 & --  & 260 & 3 & -- \\
                & SMORE-LRLQ           & 72  & 72  & --  & 160 & 4 & 18 \\
    \midrule
    \texttt{sw} & MLP            & 100 & 200 & 290 & 100 & 3 & -- \\
                & SMORE-Koopman & 100 & 200 & --  & 262 & 3 & -- \\
                & SMORE-LRLQ           & 60  & 120 & --  & 96  & 3 & 10 \\
    \bottomrule
  \end{tabular}
  \label{tab:app_hparam_candidates}
\end{table}

\FloatBarrier

\subsection{Experimental Environment}
\label{app:Ex_device}
Experiments were conducted primarily on the Perlmutter supercomputer at the
National Energy Research Scientific Computing Center (NERSC), a U.S.
Department of Energy Office of Science user facility at Lawrence Berkeley
National Laboratory.
The NVIDIA GPU nodes used in these experiments were equipped with A100 GPUs
with either 40 or 80\,GB of device memory.
The stability experiments in
\Cref{sec:results_stability_robustness} were instead performed on the
Frontier supercomputer at the Oak Ridge Leadership Computing Facility (OLCF),
a U.S. Department of Energy Office of Science user facility at Oak Ridge
National Laboratory (ORNL), using AMD Instinct MI250X GPUs.
The implementation uses PyTorch with CUDA on the NVIDIA-based Perlmutter
system and ROCm on the AMD-based Frontier system.

\subsection{Detailed In-horizon and Extrapolation Statistics}
\label{app:rollout_decomposition}

The main comparison in \Cref{tab:combined_loss_comparison} reports mean
in-horizon and extrapolation errors for compactness.
\Cref{tab:app_rollout_decomposition} lists the corresponding
mean $\pm$ standard deviation values over three random seeds.
\Cref{fig:wave_per_frame} further shows the per-snapshot test MSE of
SMORE-LRLQ on Wave.

\begin{table}[!htbp]
  \centering
  \caption{Detailed in-horizon and extrapolation MSE on unseen test trajectories. Values are mean $\pm$ sample standard deviation over three random seeds. DINo, SMORE-Koopman, and SMORE-LRLQ use results after 1,000 initial latent optimization steps.}
  \small
  \setlength{\tabcolsep}{4pt}
  \begin{tabular}{llrcc}
    \toprule
    Dataset & Model & Parameters & In-horizon loss & Extrapolation loss \\
    \midrule
    Wave & FNO & 117,356 & $2.89e-04 \pm 1.09e-04$ & $7.37e-02 \pm 1.41e-02$ \\
    Wave & CNO & 111,996 & $1.99e-02 \pm 4.34e-03$ & $1.53e-01 \pm 1.47e-02$ \\
    Wave & Transolver & 128,328 & $7.82e-04 \pm 3.90e-04$ & $1.67e-02 \pm 4.29e-03$ \\
    Wave & DINo & 126,928 & $1.13e-03 \pm 9.98e-05$ & $1.38e-02 \pm 1.01e-03$ \\
    Wave & SMORE-Koopman & 128,045 & $6.74e-05 \pm 3.63e-05$ & $3.22e-04 \pm 1.54e-04$ \\
    Wave & SMORE-LRLQ & 127,265 & $1.25e-03 \pm 8.22e-07$ & $8.65e-04 \pm 3.09e-06$ \\
    \midrule
    Navier--Stokes & FNO & 336,653 & $1.61e-03 \pm 2.12e-03$ & $2.43e-03 \pm 3.03e-03$ \\
    Navier--Stokes & CNO & 326,184 & $1.52e-03 \pm 3.71e-04$ & $3.87e-03 \pm 1.15e-03$ \\
    Navier--Stokes & Transolver & 358,007 & $4.13e-01 \pm 1.04e-01$ & $6.22e-01 \pm 2.12e-01$ \\
    Navier--Stokes & DINo & 356,444 & $1.34e-03 \pm 1.33e-04$ & $2.32e-03 \pm 7.72e-05$ \\
    Navier--Stokes & SMORE-Koopman & 355,165 & $5.37e-02 \pm 2.02e-03$ & $9.78e-02 \pm 2.22e-03$ \\
    Navier--Stokes & SMORE-LRLQ & 353,961 & $1.01e-03 \pm 2.54e-05$ & $2.00e-03 \pm 1.99e-04$ \\
    \midrule
    Shallow Water & FNO & 366,336 & $1.79e-03 \pm 1.48e-05$ & $5.22e-03 \pm 9.19e-05$ \\
    Shallow Water & CNO & 346,727 & $8.03e-04 \pm 2.69e-05$ & $2.86e-03 \pm 4.95e-06$ \\
    Shallow Water & Transolver & 359,774 & $1.71e-03 \pm 5.88e-04$ & $6.05e-03 \pm 1.47e-03$ \\
    Shallow Water & DINo & 356,674 & $4.90e-04 \pm 1.99e-05$ & $6.49e-04 \pm 1.74e-05$ \\
    Shallow Water & SMORE-Koopman & 354,401 & $2.22e-04 \pm 3.39e-06$ & $5.34e-04 \pm 1.91e-05$ \\
    Shallow Water & SMORE-LRLQ & 354,553 & $5.86e-04 \pm 2.89e-05$ & $7.67e-04 \pm 2.91e-05$ \\
    \bottomrule
  \end{tabular}%
  \label{tab:app_rollout_decomposition}
\end{table}
\begin{figure}[!htbp]
  \centering
  \includegraphics[width=0.65\textwidth,keepaspectratio]{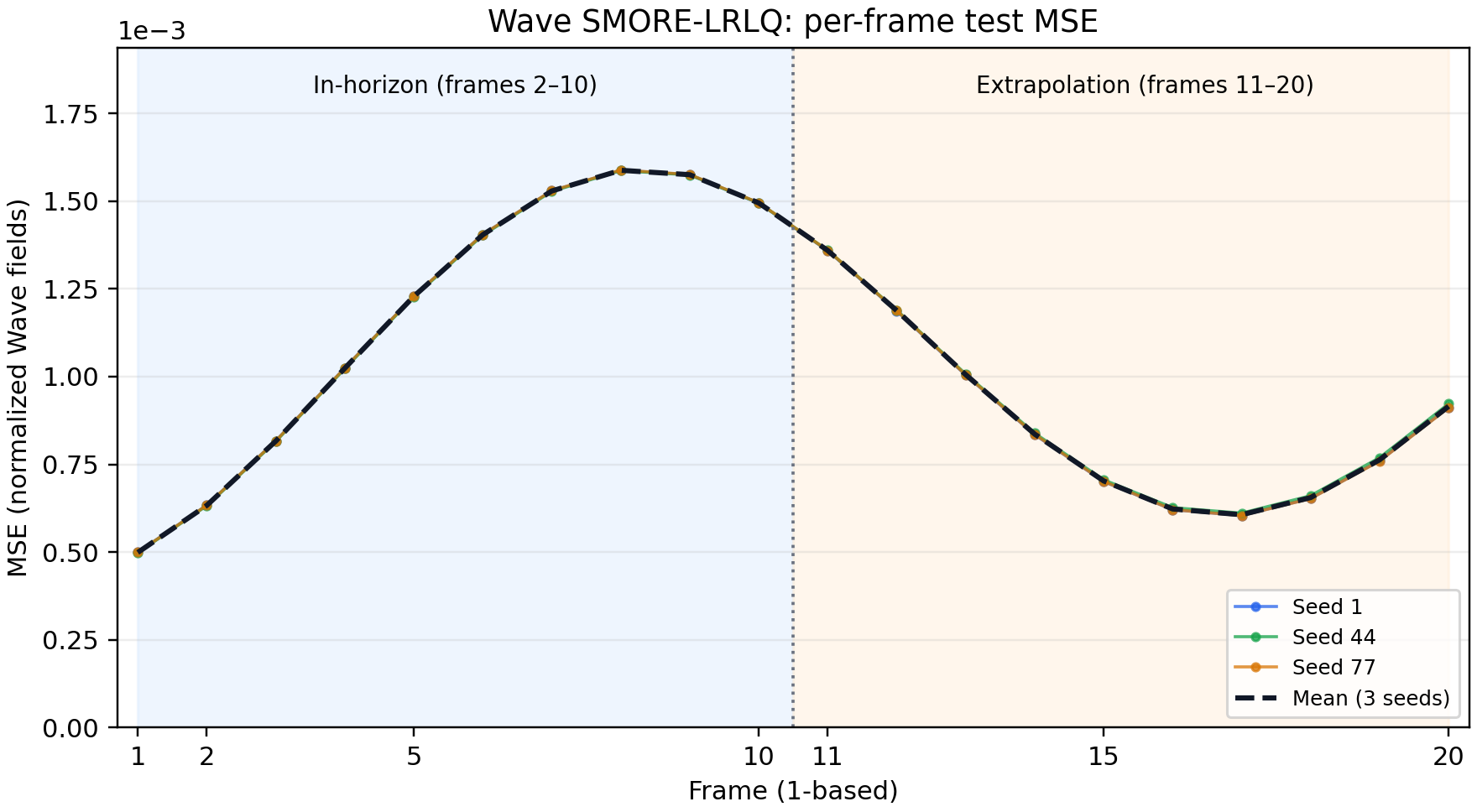}
  \caption{Per-snapshot test MSE of SMORE-LRLQ on the 2D Wave benchmark,
averaged over three random seeds (individual seeds shown as thin lines).
The dashed vertical line separates the in-horizon window (snapshots 2--10)
from the extrapolation window (snapshots 11--20). The error rises to a peak
inside the in-horizon window and decreases over much of the extrapolation
window, so the extrapolation-window average lies below the in-horizon
average even though no actual improvement occurs at longer horizons.}
  \label{fig:wave_per_frame}
\end{figure}

\subsection{Detailed Stability and Mechanism Ablations}
\label{sec:app_stability_mechanism}

This appendix provides detailed numerical results supporting
\Cref{sec:results_stability_robustness}.
All experiments in this diagnostic study use SMORE-LRLQ.
The reported latent size is the per-component latent dimension $p$.
The reported prediction metric is minimum overall test MSE and is shown in units of $10^{-3}$.

In the ablation labels, $L_{\mathrm{reg}}$ denotes the two regularizers enabled or removed jointly. For both Spherical Shallow Water and Navier--Stokes, we use $(\beta_{\mathrm{cons}},\beta_{\mathrm{eig}})=(10^{-8},10^{-7})$.

\begin{table}[!htbp]
  \centering
  \caption{
  First observed NaN epoch in the Spherical Shallow Water mechanism study.
  ``Not observed'' indicates that no NaN was recorded during the full
10,000-epoch training budget.
Remaining finite over this interval is an empirical observation and
does not establish formal stability.
  }
  \small
  \setlength{\tabcolsep}{5pt}
  \begin{tabular}{lccc}
    \toprule
    Configuration & $p=50$ & $p=70$ & $p=100$ \\
    \midrule
    OFF
      & 8111 & 200 & 100 \\
    $L_{\text{reg,cons}}$ only
      & Not observed & 147 & 142 \\
    $L_{\mathrm{reg,eig}}$ only
      & 8100 & 227 & 135 \\
    $L_{\mathrm{reg}}$ only
      & Not observed & 150 & 146 \\
    clip only
      & 7162 & 158 & 100 \\
    low-rank only
      & Not observed & Not observed & Not observed \\
    FULL
      & Not observed & Not observed & Not observed \\
    FULL without $L_{\mathrm{reg}}$
      & Not observed & Not observed & Not observed \\
    FULL without $\mathrm{clip}$
      & Not observed & Not observed & Not observed \\
    FULL without $\mathrm{low\mbox{-}rank}$
      & Not observed & 160 & 137 \\
    \bottomrule
  \end{tabular}
  \label{tab:app_sw_nan_epochs}
\end{table}

\begin{table}[!htbp]
  \centering
  \caption{SMORE-LRLQ stability ablation on 2D Navier--Stokes.
  Values are the total rollout MSE over the 19 predicted snapshots
  (snapshots 2--20) under the 1000-step initial latent fitting during evaluation.
  The configured training budget is 10,000 epochs.
  The low-rank quadratic rank is \(\lfloor p/6 \rfloor\).}
  \small
  \setlength{\tabcolsep}{5pt}
  \begin{tabular}{lccccc}
    \toprule
    Configuration & $p=10$ & $p=30$ & $p=50$ & $p=70$ & $p=100$ \\
    \midrule
    OFF
      & 3.139e-2 & 4.432e-3 & 1.966e-3 & 3.226e-3 & 7.215e-3 \\
    $L_{\text{reg,cons}}$ only
      & 3.137e-2 & 4.496e-3 & 2.921e-3 & 4.141e-3 & 6.708e-3 \\
    $L_{\mathrm{reg,eig}}$ only
      & 3.139e-2 & 4.506e-3 & 1.959e-3 & 3.351e-3 & 5.102e-3 \\
    $L_{\mathrm{reg}}$ only
      & 3.137e-2 & 4.536e-3 & 2.845e-3 & 4.097e-3 & 7.018e-3 \\
    clip only
      & 3.087e-2 & 4.441e-3 & 1.939e-3 & 2.295e-3 & 4.532e-3 \\
    low-rank only
      & 3.960e-2 & 5.335e-3 & 2.635e-3 & 2.887e-3 & 4.161e-3 \\
    FULL
      & 3.897e-2 & 4.970e-3 & 2.003e-3 & 1.923e-3 & 2.337e-3 \\
    FULL without $L_{\mathrm{reg}}$
      & 3.879e-2 & 5.104e-3 & 2.922e-3 & 2.509e-3 & 3.475e-3 \\
    FULL without $\mathrm{clip}$
      & 3.954e-2 & 5.049e-3 & 2.321e-3 & 1.902e-3 & 2.796e-3 \\
    FULL without $\mathrm{low\mbox{-}rank}$
      & 3.120e-2 & 4.438e-3 & 2.658e-3 & 3.271e-3 & 5.191e-3 \\
    \bottomrule
  \end{tabular}
  \label{tab:app_ns_mechanism_sweep}
\end{table}

\begin{table}[htbp]
  \centering
  \caption{
  SMORE-LRLQ stability-mechanism ablation on 3D Spherical Shallow Water.
  Values are the best finite overall MSE over the 19 predicted snapshots,
  reported in units of $10^{-3}$.
  A dagger ($\dagger$) indicates that the run subsequently produced NaN;
  the reported value is the best finite MSE obtained before divergence.
  The low-rank rank is 10.
  All training runs were continued for up to 10,000 epochs unless numerical
  divergence occurred.
  }
  \begin{tabular}{lccc}
    \toprule
    Configuration & $p=50$ & $p=70$ & $p=100$ \\
    \midrule
    OFF
      & 0.8614$^\dagger$ & 2.360$^\dagger$ & 4.069$^\dagger$ \\
    $\lambda_1$ only
      & 0.8574 & 2.757$^\dagger$ & 3.397$^\dagger$ \\
    $\lambda_2$ only
      & 0.8603$^\dagger$ & 2.699$^\dagger$ & 3.427$^\dagger$ \\
    $\lambda$ only
      & 0.8464 & 2.953$^\dagger$ & 3.240$^\dagger$ \\
    clip only
      & 0.8843$^\dagger$ & 2.669$^\dagger$ & 5.313$^\dagger$ \\
    low-rank only
      & 0.9120 & \textbf{0.7832} & 0.7305 \\
    FULL
      & 0.9301 & 0.7845 & 0.7274 \\
    FULL $-\lambda$
      & 0.9330 & 0.7833 & 0.7382 \\
    FULL $-\mathrm{clip}$
      & 0.9088 & \textbf{0.7793} & \textbf{0.7273} \\
    FULL $-\mathrm{low\mbox{-}rank}$
      & 0.8709 & 2.538$^\dagger$ & 5.480$^\dagger$ \\
    \bottomrule
  \end{tabular}
  \label{tab:app_sw_mechanism_sweep}
\end{table}

\FloatBarrier

\FloatBarrier

\subsection{Full Sparse-Initialization Results}
\label{app:mask_results_full}

\Cref{tab:app_mask_overall_test_full} reports the complete
sparse-initialization sweep corresponding to
\Cref{sec:results_sparse_mask_ablation}.
Each row reports overall test MSE as mean $\pm$ standard deviation over three random seeds.
The $100\%$ row is the full-observation reference obtained under the same
evaluation procedure, and the remaining rows evaluate sparse initial-snapshot
observations without retraining.
The same spatial mask-generation procedure is used for all three datasets.
For an evaluation grid of size $H\times W$ and visible ratio $v_{\mathrm{obs}}$,
$\lfloor v_{\mathrm{obs}}HW\rfloor$ spatial locations are sampled uniformly at random
without replacement.
The random realization is controlled by the mask seed.
Wave and Navier--Stokes use $64\times64$ grids, whereas Spherical Shallow
Water uses a $64\times128$ grid.
Only the initial latent state is optimized from the retained observations;
the trained decoder and latent dynamics remain unchanged.

\begin{table}[!htbp]
\centering
\caption{Complete sparse-initialization results using 1,000 initial latent optimization steps with frozen checkpoints. Overall test MSE excludes the fitted initial snapshot and is evaluated on the full spatial grid. Values are mean $\pm$ sample standard deviation across three random seeds. Wave and Navier--Stokes use $64\times64$ grids; Spherical Shallow Water uses $64\times128$. A fixed random permutation of grid locations is generated for each dataset, and nested subsets are taken without replacement using the same procedure. Masks are shared across training seeds and test trajectories. The visible count is $\lfloor v_{\mathrm{obs}}HW\rfloor$ for visible fraction $v_{\mathrm{obs}}$. Standard deviations reflect training-seed variation.}
\small
\setlength{\tabcolsep}{5pt}
\begin{tabular}{rccc}
\toprule
Visible ratio & \makecell{Wave\\(SMORE-Koopman)} & \makecell{Navier--Stokes\\(SMORE-LRLQ)} & \makecell{Shallow Water\\(SMORE-Koopman)} \\
\midrule
$100\%$ & 2.01e-04 $\pm$ 9.84e-05 & 1.53e-03 $\pm$ 1.01e-04 & 3.86e-04 $\pm$ 8.58e-06 \\
$50\%$ & 2.05e-04 $\pm$ 1.04e-04 & 1.53e-03 $\pm$ 1.09e-04 & 3.94e-04 $\pm$ 8.03e-06 \\
$25\%$ & 2.12e-04 $\pm$ 1.05e-04 & 1.56e-03 $\pm$ 1.06e-04 & 4.06e-04 $\pm$ 2.01e-06 \\
$20\%$ & 2.12e-04 $\pm$ 1.09e-04 & 1.57e-03 $\pm$ 9.51e-05 & 4.17e-04 $\pm$ 2.38e-06 \\
$15\%$ & 2.01e-04 $\pm$ 8.59e-05 & 1.59e-03 $\pm$ 1.03e-04 & 4.30e-04 $\pm$ 4.23e-06 \\
$10\%$ & 1.98e-04 $\pm$ 8.39e-05 & 1.60e-03 $\pm$ 1.03e-04 & 4.55e-04 $\pm$ 3.97e-06 \\
$8\%$ & 2.04e-04 $\pm$ 9.19e-05 & 1.64e-03 $\pm$ 1.26e-04 & 4.85e-04 $\pm$ 1.11e-05 \\
$6\%$ & 1.91e-04 $\pm$ 7.09e-05 & 1.65e-03 $\pm$ 1.31e-04 & 5.37e-04 $\pm$ 2.37e-05 \\
$4\%$ & 2.11e-04 $\pm$ 8.76e-05 & 1.78e-03 $\pm$ 2.38e-04 & 6.71e-04 $\pm$ 3.89e-05 \\
$2\%$ & 6.96e-04 $\pm$ 1.14e-04 & 3.77e-03 $\pm$ 1.32e-04 & 1.32e-03 $\pm$ 1.08e-04 \\
$1\%$ & 4.06e-02 $\pm$ 5.64e-03 & 2.48e-02 $\pm$ 3.71e-03 & 1.56e-03 $\pm$ 1.37e-04 \\
\bottomrule
\end{tabular}
\label{tab:app_mask_overall_test_full}
\end{table}

For Wave and Navier--Stokes, the $64\times64$ grid contains 4096 spatial
locations, so $10\%$, $4\%$, $2\%$, and $1\%$ visibility retain 409,
163, 81, and 40 locations, respectively.
For Spherical Shallow Water, the $64\times128$ grid contains 8192 locations,
giving 819, 327, 163, and 81 retained locations at the same visible ratios.
Since these absolute counts differ across datasets, the recovery transition is governed by the number of observed locations relative to the per-component latent dimension $p$ rather than by the visible ratio itself, as quantified in \Cref{tab:app_mask_observability}.

\begin{table}[!htbp]
\centering
\caption{Observability constant $c_{\mathcal S}=\sigma_{\min}(J_{\mathcal S})$
for the sparse-initialization masks of \Cref{tab:app_mask_overall_test_full}.
Since the decoder is affine in the latent code, $c_{\mathcal S}$ equals the
smallest singular value of the $s\times p$ matrix whose rows are the
single-component decoder Jacobians $J_0(\mathbf{x})$ at the $s$ observed
locations (\Cref{lem:sensor_jacobian}). Values are mean $\pm$ sample standard
deviation across the same three training seeds and use the same masks as
\Cref{tab:app_mask_overall_test_full}. Entries equal to zero are exact and
occur when $s<p$, in which case the initial latent state is not identifiable
from the observations. Values are not comparable across datasets, since the
scale of $J_0$ depends on the trained decoder and the field normalization.}
\label{tab:app_mask_observability}
\small
\begin{tabular}{lccc}
\toprule
Visible ratio
  & \makecell{Wave\\(SMORE-Koopman, $p=100$)}
  & \makecell{Navier--Stokes\\(SMORE-LRLQ, $p=72$)}
  & \makecell{Shallow Water\\(SMORE-Koopman, $p=100$)} \\
\midrule
$100\%$ & $0.170\pm0.005$    & $8.64\pm0.48$   & $8.20\pm0.35$ \\
$50\%$  & $0.114\pm0.005$    & $6.04\pm0.21$   & $5.58\pm0.22$ \\
$10\%$  & $0.0400\pm0.0026$  & $2.34\pm0.16$   & $1.92\pm0.10$ \\
$4\%$   & $0.0104\pm0.0006$  & $0.965\pm0.088$ & $0.696\pm0.025$ \\
$2\%$   & $0$                & $0.102\pm0.004$ & $0.177\pm0.016$ \\
$1\%$   & $0$                & $0$             & $0$ \\
\bottomrule
\end{tabular}
\end{table}

\FloatBarrier

\end{document}